\def\eqref#1{equation~\ref{#1}}
\def\1{\bm{1}}
\DeclareMathAlphabet{\mathsfit}{\encodingdefault}{\sfdefault}{m}{sl}
\SetMathAlphabet{\mathsfit}{bold}{\encodingdefault}{\sfdefault}{bx}{n}
\newcommand{\R}{\mathbb{R}}
\newtheorem{theorem}{Theorem}[section]
\newtheorem{assumption}[theorem]{Assumption}
\newtheorem{definition}{Definition}[section]
\newtheorem*{theorem*}{Theorem}
\newtheorem*{remark}{Remark}
\newtheorem{corollary}{Corollary}[theorem]
\definecolor{mydarkred}{RGB}{192,25,25}
\definecolor{mydarkgreen}{RGB}{25,192,25}
\definecolor{mydarkblue}{RGB}{25,25,192}
\definecolor{battleshipgrey}{rgb}{0.52, 0.52, 0.51}
\definecolor{cadet}{rgb}{0.33, 0.41, 0.47}
\newcommand{\algname}[1]{{\color{cadet}\small\sf#1}\xspace}
\newcommand{\algnamesmall}[1]{{\color{cadet}\scriptsize\sf#1}\xspace}
\title{Randomized Asymmetric Chain of LoRA: \\ The First Meaningful Theoretical Framework for Low-Rank Adaptation}
\author{$\text{Grigory Malinovsky}^{1,2}$\thanks{Research completed during an internship at Samsung
R\&D Institute UK.\\ Corresponding author, email: grigorii.malinovskii@kaust.edu.sa} , $\text{Umberto Michieli}^{2}$, $\text{Hasan Abed Al Kader Hammoud}^{1,2}$,\\
$\textbf{Taha Ceritli}^{2}$, $\textbf{Hayder Elesedy}^{2}$, $\textbf{Mete Ozay}^{2}$, $\textbf{Peter Richt\'arik}^{1}$  \\
\\
$^{1}$Generative AI CoE\\
KAUST\\
Thuwal, Saudi Arabia 
\And
\\
$^2$Samsung R\&D Institute \\
SRUK \\
Staines-upon-Thames, United Kingdom
}
\begin{document}

\maketitle

\begin{abstract}
Fine-tuning has become a popular approach to adapting large foundational models to specific tasks. As the size of models and datasets grows, parameter-efficient fine-tuning techniques are increasingly important. One of the most widely used methods is Low-Rank Adaptation (\algname{LoRA}), with adaptation update expressed as the product of two low-rank matrices. While \algname{LoRA} was shown to possess strong performance in fine-tuning, it often under-performs when compared to full-parameter fine-tuning (\algname{FPFT}). Although many variants of \algname{LoRA} have been extensively studied empirically, their theoretical optimization analysis is heavily under-explored. The starting point of our work is a demonstration that \algname{LoRA} and its two extensions, \algname{Asymmetric LoRA} and \algname{Chain of LoRA}, indeed encounter convergence issues. To address these issues, we propose \algname{Randomized Asymmetric Chain of LoRA} (\algname{RAC-LoRA})---a general optimization framework that rigorously analyzes the convergence rates of \algname{LoRA}-based methods. Our approach inherits the empirical benefits of \algname{LoRA}-style heuristics, but introduces several small but important algorithmic modifications which turn it into a provably convergent method. Our framework serves as a bridge between \algname{FPFT} and low-rank adaptation. We provide provable guarantees of convergence to the same solution as \algname{FPFT}, along with the rate of convergence. Additionally, we present a convergence analysis for smooth, non-convex loss functions, covering gradient descent, stochastic gradient descent, and federated learning settings. Our theoretical findings are supported by experimental results.


\end{abstract}

\section{Introduction}
Many real-world Deep Learning (DL) applications require adapting a large pre-trained model to specific tasks in order to improve its performance \citep{church2021emerging}. This process, known as fine-tuning, involves adjusting the model from its pre-trained state to better handle the nuances of particular tasks or domains. Fine-tuning is a specialized form of transfer learning, where knowledge gained during pre-training is adapted for new, specific applications \citep{vrbanvcivc2020transfer}.


\textbf{Parameter-Efficient Fine-Tuning.}
While fine-tuning all model parameters has been effective, modern models with billions of parameters pose significant challenges due to their scale. Full-parameter fine-tuning is often computationally impractical with standard resources. To address this challenge, Parameter-Efficient Fine-Tuning (PEFT) \citep{he2021towards} has emerged as a solution, focusing on updating  a subset of parameters only \citep{PCDM}, or adding task-specific modules \citep{xu2023parameter}. PEFT reduces computational costs by modifying fewer parameters or adding external modules, enabling more efficient resource use and lowering storage requirements. This approach significantly reduces both training time and computational demands, making it a practical solution for adapting large models to new tasks \citep{han2024parameter}.



\subsection{Low-Rank Adaptation (LoRA)}
One of the most popular PEFT methods is Low-Rank Adaptation (\algname{LoRA}) \citep{hu2021lora}. The core idea behind \algname{LoRA} is that fine-tuning large pre-trained models can be effectively achieved by utilizing lower-dimensional parameter spaces \citep{li2018measuring, aghajanyan2020intrinsic}. Instead of updating all parameters of a large and potentially dense matrix associated with the weights of a linear layer, \algname{LoRA} works with the product of two trainable low-rank matrices, which significantly reduces the number of parameters updated during fine-tuning. These matrices are trained such that their product is added to the pre-trained model weights.


In \algname{LoRA}  \citep{hu2021lora}, the weight adaptation is represented as the product of two low-rank matrices (and a scalar multiplier), resulting in the final model 
\begin{align*}
        W = W^0 + \frac{\alpha}{r}BA,
\end{align*}
where $ W^0 \in \mathbb{R}^{m \times n} $, $ B \in \mathbb{R}^{m \times r} $, and $ A \in \mathbb{R}^{r \times n} $. Here, $r$ and $\alpha$  respectively denote the \algname{LoRA} rank and its scaling factor. Typically, since the dimensions of (particularly deep learning) models are enormous, we have rank $ r \ll \min \{m, n\} $.
 This approach saves computational resources and 
minimizes the risk of overfitting or catastrophic forgetting \citep{biderman2024lora}. Hence, \algname{LoRA} has become a lightweight and efficient technique for adapting large models to various tasks, particularly in resource-constrained environments \citep{sun2022recent}.
It is important to note that $ W^0 $ remains fixed and does not receive updates, while $ A $ and $ B $ are optimized during the training process. The scaling factor $ \alpha $ serves as a ``step size'' for the adaptation, and it is normalized by rank $r$. The matrix \( A \) is typically initialized with random Gaussian values, while the matrix \( B \) is set to zero, ensuring \( \Delta W = 0 \) at the start of training. Alternative initialization strategies were explored by \citet{zhu2024asymmetry}.

\subsection{Chain of LoRA (COLA)}
While \algname{LoRA} offers significant computational advantages in practice, it remains less effective than full-parameter fine-tuning (\algname{FPFT}) if efficiency is not a major concern \citep{biderman2024lora}.
To balance efficiency and performance, \citet{xia2024chain} proposed an iterative method called Chain of LoRA (\algname{COLA}). Essentially, \algname{COLA} simply means the successive application of several \algname{LoRA} updates. 

Chain of LoRA (\algname{COLA}) constructs a sequence of \algname{LoRA} modules through an iterative process of parameter fine-tuning, merging, and extending. The chain length is defined by the number of optimized \algname{LoRA} modules. \algname{COLA}'s central concept involves applying \algname{LoRA} adaptations iteratively $T$ times. \algname{COLA} can be summarized as training a \algname{LoRA} module, merging the updates with the fixed parameters, reinitializing the \algname{LoRA} matrices, and repeating the process \citep{xia2024chain}. The resulting model can be represented by:
$$
    W = W^0 + \frac{\alpha}{r} \sum_{t=0}^{T-1} B^t A^t,
$$
where $A^t$ and $B^t$ indicate the low-rank matrices in the $t$-th block in the chain, which are typically initialized in the same manner as in standard \algname{LoRA}. The motivation behind \algname{COLA} is that standard \algname{LoRA} may clearly fail to find the optimal adaptation since such an adaptation may not in general be of a low rank. To address this, \algname{COLA} proposes using a sequence of low-rank matrix decompositions to approximate a middle-to-high-rank update. The hypothesis is that this sequence of updates can provide a better approximation than a single \algname{LoRA} adaptation and may be easier to optimize compared to learning the optimal adaptation from scratch. 

\section{Problem Formulation and Summary of Contributions}

\subsection{Problem Formulation}
The primary approach for training supervised machine learning models is to formulate the task as an optimization problem where the goal is to minimize a loss function, which measures the discrepancy between the model's predictions and the actual outcomes. In this work, we explore this optimization problem in the specific context of fine-tuning, where a pre-trained model is adapted to a new task or dataset, requiring efficient adjustments to its parameters to achieve better performance on the target task. In particular, we consider the model-agnostic problem formulation
\begin{align}
\label{eq:main}
 \min_{\Delta W \in \mathbb{R}^{m\times n}}   f(W^0 + \Delta W) , 
\end{align}
where \( W^0 \in \mathbb{R}^{m \times n} \) represents the parameters of a pre-trained model (or of a single linear layer, with the others being fixed), and \( \Delta W \in \mathbb{R}^{m \times n} \) denotes the adaptation term. 
The function \( f: \mathbb{R}^{m \times n} \to \R \) corresponds to the empirical loss over the adaptation dataset, or any other loss function of interest. As the total dimensionality \( m \times n \) is typically very large for deep learning models, the adaptation term $\Delta W$ needs to have a specific structure to be feasible in real-world applications. 


\subsection{No Reasonable Theory for Low-Rank Adaptation}

We claim that a satisfying theoretical understanding of prevalent fine-tuning methods based on low-rank updates, such as \algname{LoRA} and \algname{COLA}, is lacking. 

\setlist{nolistsep}
\begin{itemize}[noitemsep]
\item  First, as already noted by \citet{sun2024improving}, the \algname{LoRA} re-parameterization of the domain effectively transforms a  {\em smooth} Lipschitz  loss into a {\em non-smooth} Lipschitz  loss, which poses {\em additional} theoretical challenges to those related to proper handling of the low-rank structure of the updates. While this hints at a possible source of issues with providing a good theory for methods based on low-rank adaptation, this observation does not on its own mean that a good theory is impossible to obtain. 

\item More importantly, the  existing theoretical analysis of \algname{COLA}  \citep{xia2024chain} replaces low-rank optimization over matrices \(A\) and \(B\) with full-rank matrix optimization (\(\Delta W\)). This makes the theoretical analysis irrelevant at worst and unsatisfactory at best as it completely ignores to model and to explain the key component of \algname{LoRA}: low-rank updates.

\item Third, it is known that \algname{LoRA} can be highly sensitive to the choice of the hyper-parameters \citep{khodak2021federated, kuang2024federatedscope}. A good theory should be able to explain or remove this issue. No such theory exists, to the best of our knowledge.

\item Finally, and this is the true starting point of our exploration in this work, we observe that \algname{COLA} may simply {\em fail to converge} to the optimal solution. We give a simple example (with \(3 \times 3\) matrices) of this divergence behavior 
in Section~\ref{sec:lora_convergence_issue}. Hence, \algname{COLA} is merely a {\em heuristic}. Providing a fix is an open problem -- the problem we address in this work.

\end{itemize}

While clearly \algname{LoRA} and \algname{COLA}  are enormously useful in practice, these methods remain mere {\em heuristics} since they do not come with solid theoretical backing. This is problematic and raises valid concerns about the robustness and reliability of \algname{LoRA}-type methods in scenarios {\em beyond} current datasets, models and practice.

\subsection{Contributions}

To address the aforementioned fundamental issues of \algname{LoRA}-type heuristics, and to firmly ground the fine-tuning-via-low-rank adaptation line of work in a theoretically sound algorithmic framework,  we propose a new generic low-rank adaptation framework for which we coin the name Randomized Asymmetric Chain of \algname{LoRA} (\algname{RAC-LoRA}); see Algorithm~\ref{alg:RAC-LoRA}.

\begin{table}[t]
\centering
\caption{Summary of our theoretical convergence results fpr \algname{RAC-LoRA} for solving Problem (\ref{eq:main}) when using a specific optimizer for approximately solving the subproblem in Step 4. The results for the \algname{RAC-LoRA} + \algname{GD} combination are described in Section~\ref{sec:theory}, while the proofs can be founded in Appendix~\ref{sec:GD-proofs}. The results and proofs for all other combinations can be found in the indicated appendices.}
\begin{tabular}{llccc}
\toprule
{\bf Problem} & {\bf Fine-tuner} & \textbf{Subproblem Optimizer} & \textbf{Non-convex} & \textbf{PL} \\\midrule
(\ref{eq:main}) & \algname{RAC-LoRA} & \begin{tabular}{c} {\small Gradient Descent } \\ (\algname{GD}) \end{tabular} & \begin{tabular}{c} ${\cal O}\left(1/T\right)$ \\ 
Sec.~\ref{sec:GD-pr} \end{tabular} & \begin{tabular}{c} ${\cal O}\left(\exp(-T)\right)$ \\Sec.~\ref{sec:GD-PL}\end{tabular}    \\
\hline
(\ref{eq:main})+(\ref{eq:finite}) & \algname{RAC-LoRA} & \begin{tabular}{c} {\small Random Reshuffling} \\ (\algname{RR}) \end{tabular}  &  \begin{tabular}{c}  ${\cal O}\left(1/T^{\frac{2}{3}}\right)$  \\ Sec.~\ref{sec:RR-gen} \end{tabular} &  \begin{tabular}{c}  ${\cal O}(1/T^2)$  \\ Sec.~\ref{sec:RR-PL} \end{tabular}   \\
\hline
(\ref{eq:main}) & \algname{RAC-LoRA} & \begin{tabular}{c} {\small Stochastic Gradient Descent }\\ (\algname{SGD}) \end{tabular} & \begin{tabular}{c} ${\cal O}\left(1/T^{\frac{1}{2}}\right)$ \\ Sec.~\ref{sec:SGD-gen} \end{tabular} &  \begin{tabular}{c}  ${\cal O}\left(1/T\right)$ \\ Sec.~\ref{sec:SGD-PL}\end{tabular}  \\ 
\hline
(\ref{eq:main})+(\ref{eq:fed}) & \algname{Fed-RAC-LoRA} & \begin{tabular}{c} {\small Random Reshuffling}  \\(\algname{RR}) \end{tabular}& \begin{tabular}{c}  ${\cal O}\left(1/T^{\frac{1}{2}}\right)$ \\ Sec.~\ref{sec:Fed-gen} \end{tabular} &  \begin{tabular}{c} ${\cal O}\left(1/T\right)$ \\ Sec.~\ref{sec:Fed-PL} \end{tabular} \\ 
\bottomrule
\end{tabular}
\label{tab:theory}
\end{table}

\begin{itemize}
\item Similarly to \algname{COLA} \citep{xia2024chain}, our method is iterative: we perform a chain of low-rank updates (see Step 2 in Algorithm~\ref{alg:RAC-LoRA}). In each step of the chain, 
one matrix (e.g., $A$) is chosen randomly from a pre-defined distribution, and the other (e.g., $B$) is  trainable (see Step 3 in Algorithm~\ref{alg:RAC-LoRA}). Which of these two update matrices is chosen randomly and which one is trainable is decided a-priori, and hence our method is asymmetric in nature, similarly to \algname{AsymmLoRA} \citep{zhu2024asymmetry}. We propose two options, depending on which matrix is trainable and which one is chosen randomly: in Option 1, $A$ is trainable, and in Option 2, $B$ is trainable. 
\item 
In order to make our framework flexible, we offer a variety of strategies for updating the trainable matrix in each step of the chain. This is possible since in each such step we formulate an auxiliary optimization subproblem in the trainable matrix, and one can thus choose essentially {\em any optimizer} for approximately solving it (see Step 4 in Algorithm~\ref{alg:RAC-LoRA}). We theoretically analyze several such optimizers within our \algname{RAC-LoRA} framework, including Gradient Descent (\algname{GD}) in Appendix~\ref{sec:GD-proofs} (however, we include and describe the theorems in Section~\ref{sec:GD}),  Random Reshuffling \algname{RR} in Appendix~\ref{sec:RR}, and Stochastic Gradient Descent (\algname{SGD}) in Appendix~\ref{sec:SGD}. In the case of \algname{GD} and \algname{SGD}, just a single step of the optimizer is sufficient, and this is what our analysis accounts for. In the case of \algname{RR}, we apply a single pass over the data in a randomly reshuffled order. See Table~\ref{tab:theory} for a quick overview. Our analysis applies to the smooth nonconvex regime, in which we prove fast sublinear (i.e., ${\cal O}(1/\sqrt{T})$, ${\cal O}(1/T)$ or ${\cal O}(1/T^2)$) convergence rates to a stationary point, and fast linear (i.e., ${\cal O}(\exp(-T))$) rates to the globally optimal solution under the Polyak-Łojasiewicz (PL) condition. 

\item
The update is applied (see Step 5 in Algorithm~\ref{alg:RAC-LoRA}), and the method moves on to the next step of the chain.
\end{itemize}

\textbf{Experiments.} We apply our method to several machine learning tasks. We start from convex problems with traditional models, such as logistic and linear regression, to provide clear illustrations of our theoretical findings. 
    In addition, we present empirical analyses for multilayer perception (MLP) on MNIST and RoBERTa on the GLUE benchmark tasks \citep{wang2018glue}. See Appendix~\ref{sec:exp}.

\textbf{Federated Learning.} Furthermore, we extend our findings from the simple unstructured problem (\ref{eq:main}) to the more challenging distributed/federated problem where $f$ has the special form described in (\ref{eq:fed}); there we consider solving a distributed optimization problem via our new \algname{Fed-RAC-LoRA} method (Algorithm~\ref{alg:Fed-RAC-LoRA}). These additional results can be found in Section~\ref{sec:FL}. For illustrative purposes, we provide an analysis for \algname{RR} as the optimizer for the subproblem; see also Table~\ref{tab:theory}. Previous research \citep{sun2024improving} has shown that using a single learnable matrix in this context provides several key advantages, particularly in terms of preserving privacy, ensuring the correctness of model aggregation, and maintaining stability when adjusting the scaling factor \citep{sun2024improving}. These benefits are crucial in Federated Learning \citep{konecnyFL}, where data is distributed across multiple clients, and privacy constraints must be upheld while performing model updates. Building on this asymmetric approach, we integrate the concept of chained updates to develop \algname{Fed-RAC-LoRA}, a more robust and scalable distributed method. Our approach maintains the computational efficiency of the original \algname{RAC-LoRA} while ensuring rigorous convergence properties in the distributed setting, offering a theoretically sound method for large-scale Federated Learning scenarios.

\section{Shining Some Light on LoRA's Convergence Issues}
\label{sec:lora_convergence_issue}

In contemporary machine learning, loss function minimization is primarily accomplished using gradient-based (first-order) optimization techniques \citep{ruder2016overview}. Most advanced methods build on the vanilla Gradient Descent (\algname{GD}) in various ways, e.g., by adding support for stochastic approximation, momentum, adaptive stepsizes and more \citep{shapiro1996convergence,gower2019sgd}.

It is therefore meaningful to start our exploration of \algname{LoRA}-style methods in connection with  \algname{GD} steps. In particular, we analyze the update process of \algname{LoRA} matrices through a \algname{GD} step, focusing on the application of the chain rule of differentiation. The gradient with respect to the low-rank matrices \(B\) and \(A\) consists of two components,
\begin{align*}
    \nabla_{B,A} f(W+\frac{\alpha}{r}BA) = \begin{pmatrix}
    \nabla_A f(W+\frac{\alpha}{r}BA)\\
    \nabla_B f(W+\frac{\alpha}{r}BA)
\end{pmatrix} = \frac{\alpha}{r} \begin{pmatrix}
    \nabla B^\top f(W+\frac{\alpha}{r}BA)\\
    \nabla f(W+\frac{\alpha}{r}BA) A^\top
\end{pmatrix}.
\end{align*}
and hence the update rules for the matrices \( A \) and \( B \) are given by
\[
A^+ = A - \eta \frac{\alpha}{r} B^\top \nabla f(W + \frac{\alpha}{r}BA), \qquad B^+ = B - \eta \frac{\alpha}{r} \nabla f(W + \frac{\alpha}{r}BA) A^\top,
\]

where \(\eta>0\) is a step size, and \(A^+\) and \(B^+\) are the updated matrices. Since both \(A\) and \(B\) are trainable, the gradients are multiplied by \( B^\top \) and \( A^\top \), which adds complexity to the optimization process and complicates the interpretation of its evolution. This interaction between low-rank matrices and gradients creates a non-trivial structure that challenges rigorous analysis and may disrupt Lipschitz continuity, raising concerns about convergence guarantees. While \algname{LoRA} is effective for deep learning adaptation, a deeper understanding of this process is needed to ensure that the optimization scheme is theoretically sound.


\paragraph{Loss of Lipschitz smoothness.}  Lipschitz continuity of the gradient is a commonly invoked  assumption in the theoretical analysis of gradient-based optimization methods \citep{zhou2018fenchel, khaled2020better, demidovich2023guide}. This property ensures that the gradient does not change too rapidly, which in turn guarantees a controlled behavior of the optimization process, and plays a key role in establishing convergence rates and in providing stability guarantees for various optimization algorithms \citep{nesterov2004introductory,sun2020optimization}. A formal definition follows.
\begin{assumption}[Lipschitz Gradient]
\label{asm:L-smooth}
Function $f$ is  differentiable, and there exists $L>0$ such that
\begin{align*}
\|\nabla f(W)-\nabla f(V)\| \leq L\|W-V\|, \qquad \forall  W, V \in \mathbb{R}^{m\times n},
\end{align*}
where $\|\cdot\|$ denotes the Frobenius matrix norm, the gradient is computed w.r.t.\ the trace inner product.
\end{assumption}
However, the property of Lipschitz smoothness does not necessarily hold when applying \algname{LoRA} adaptation. Specifically, even if the original function $ f(W) $ is Lipschitz smooth, meaning that the gradient of $ f(W) $ satisfies the Lipschitz continuity condition (as stated in Assumption \ref{asm:L-smooth}), this smoothness property is generally lost when the function is expressed in the adapted form $ f(W^0 + BA) $. In particular, the function $ f(W^0 + BA) $ is not Lipschitz smooth with respect to the set of variables $\{B, A\}$ for any constant. This breakdown of smoothness is a significant limitation, as it complicates the theoretical analysis of optimization algorithms when using \algname{LoRA}. The formal proof of this result is provided in Theorem 2 of the work by \citet{sun2024improving}, highlighting the challenges in extending standard gradient-based methods to such adaptations.

\begin{figure}
\includegraphics[width=0.33\textwidth]{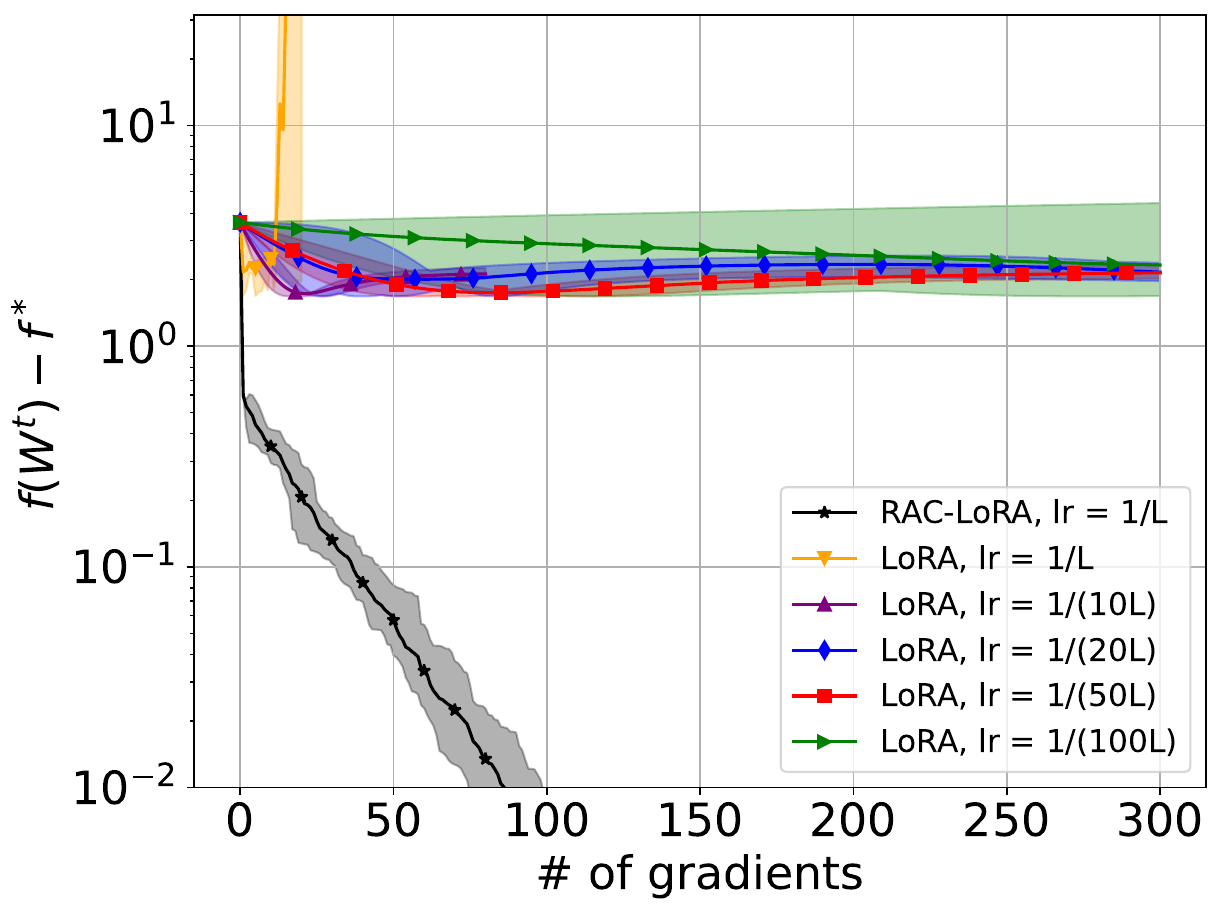}
\includegraphics[width=0.33\textwidth]{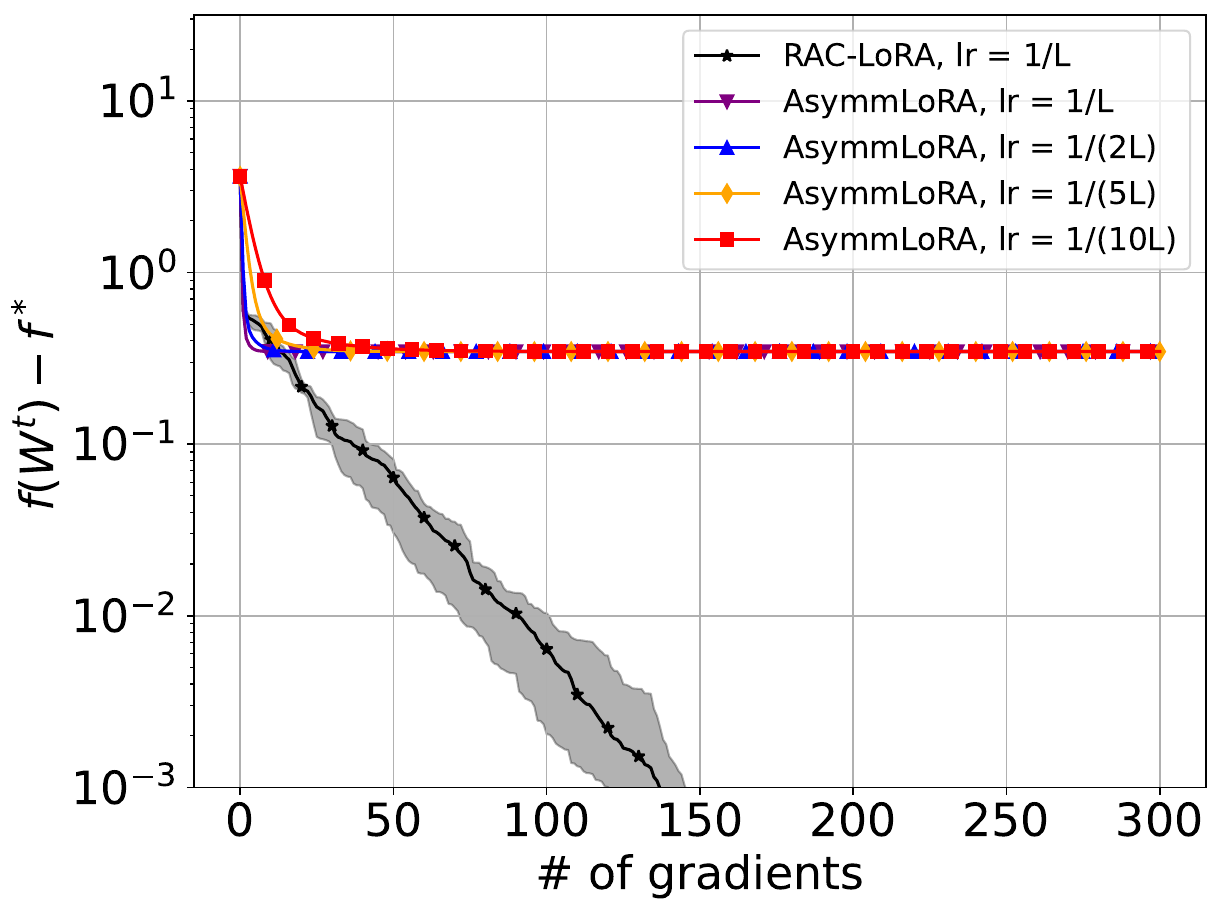}
\includegraphics[width=0.33\textwidth]{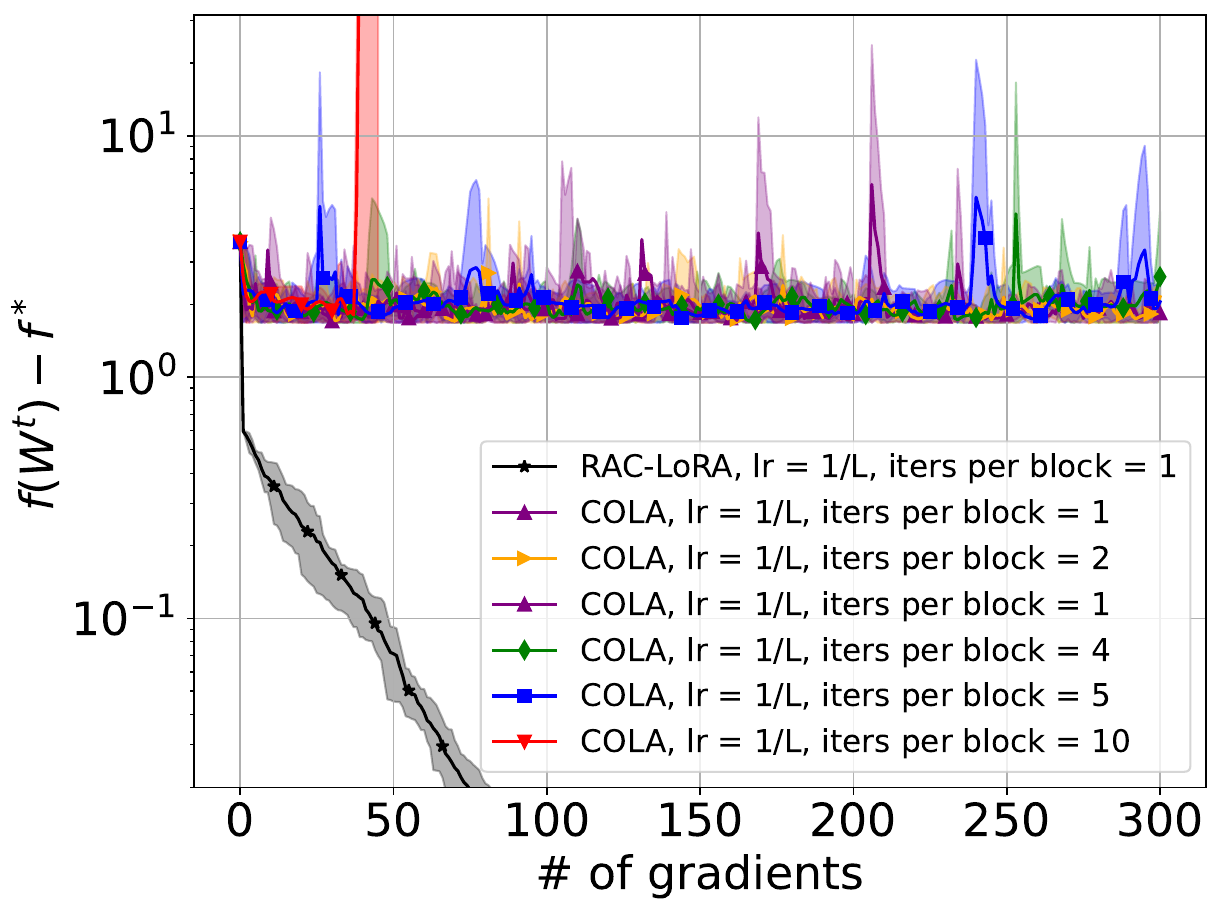}
\caption{Convergence of \algname{LoRA}, Asymmetric LoRA (\algname{AsymmLoRA}), Chain of LoRA (\algname{COLA}), and our proposed Randomized Asymmetric Chain of LoRA (\algname{RAC-LoRA}) on the problem in Equation \ref{eq:counter}.}\label{fig:counterexample}
\end{figure}

\paragraph{Numerical counterexample.} We present a clear and illustrative example demonstrating that the \algname{LoRA} and \algname{COLA} methods may not converge to the solution of the optimization problem. To illustrate this, let us consider a quadratic function of the following form:
\begin{align}
    f(x) = x^\top M x + b^\top x,
    \label{eq:counter}
\end{align}
where \( x \in \mathbb{R}^d \) is a vector of parameters, \( M \in \mathbb{R}^{d \times d} \) is a positive definite matrix, and \( b \in \mathbb{R}^d \) is a vector corresponding to the linear term. In our numerical example, we consider \(d = 9\), \(M = \operatorname{Diag}(10, 1, 1, 1, 1, 1, 1, 1, 1)\), and \(b = (1, 1, 1, 1, 1, 1, 1, 1, 1)^\top\). This function has a Lipschitz gradient (Assumption \ref{asm:L-smooth}) with \(L = 10\). We represent the vector \(x \in \mathbb{R}^9\) as a matrix \(W \in \mathbb{R}^{3 \times 3}\). In the \algname{LoRA} adaptation, we use a rank \(r = 1\) and set \(\alpha = r\).

Figure~\ref{fig:counterexample} shows experiments on \algname{LoRA}, \algname{AsymmLoRA}, our \algname{RAC-LoRA}, and \algname{COLA}. In the case of \algname{COLA}, we varied the step sizes and the number of gradients per block. Our results indicate that, when using the theoretical step size $\frac{1}{L}$, both \algname{LoRA} and \algname{COLA} may diverge, while \algname{AsymmLoRA} converges to a different stationary point. When smaller step sizes are applied to \algname{LoRA} and \algname{COLA}, these methods do converge, but to a stationary point that is significantly distant from the optimal solution. In contrast, our \algname{RAC-LoRA} converges linearly to the optimal solution without such issues. These results provide clear evidence that the choice of \algname{LoRA}-type updates has a significant impact on both the convergence and the quality of the final solution. The divergence, convergence to suboptimal points, and sensitivity to step sizes in traditional methods underscore the need for careful selection and design of update mechanisms. Our findings suggest that \algname{RAC-LoRA} offers a more reliable approach for achieving optimal solutions in the context of \algname{LoRA}-based adaptations.

\section{Randomized Asymmetric Chain of LoRA (RAC-LoRA)}

	\begin{algorithm}[t]
	\caption{Randomized Asymmetric Chain of LoRA (\algname{RAC-LoRA})}\label{alg:RAC-LoRA}
	\begin{algorithmic}[1]
		\STATE	\textbf{Parameters:}  pre-trained model $W^0 \in \R^{m\times n}$, rank $r \ll \min\{m,n\}$, learning rate $\gamma > 0$, scaling factor $\alpha>0$, chain length $T$,  sketch distribution $\mathcal{D}^B_S$ (Option 1) or $\mathcal{D}^A_S$ (Option 2).
		\FOR{$t = 0, 1, \ldots , T-1$}\do\\
          \STATE Sample a sketch matrix $$ \text{(Option 1)} \quad B^t_S\sim \mathcal{D}^B_S \qquad  \text{(Option 2)} \quad  A^t_S \sim \mathcal{D}^A_S $$
		\STATE  Using some iterative solver, approximately solve the subproblem $$ \text{(Option 1)} \quad \hat{A}^t \approx \min \limits_{A} f(W^t+ \frac{\alpha}{r} B^t_S A)\qquad \text{(Option 2)} \quad \hat{B}^t \approx \min \limits_{B} f(W^t+ \frac{\alpha}{r} B A^t_S)$$
		\STATE  Apply the update 
  $$ \text{(Option 1)} \quad W^{t+1} = W^t + \frac{\alpha}{r} B^t_S \hat{A}^t \qquad \text{(Option 2)} \quad W^{t+1} = W^t + \frac{\alpha}{r} \hat{B}^t A^t_S$$
		\ENDFOR
	\end{algorithmic}
\end{algorithm}

To address the convergence issues in \algname{LoRA} updates, we propose \algname{Randomized Asymmetric Chain of LoRA} (\algname{RAC-LoRA}). This method introduces an asymmetric \algname{LoRA} mechanism with a chain-based structure to enhance convergence while preserving model flexibility and efficiency. The method is summarized in Algorithm~\ref{alg:RAC-LoRA}.

\textbf{Description of the algorithm.} At the start of each iteration (or block), one matrix is randomly initialized and fixed throughout training, while the other remains fully trainable. This strategy prevents optimization within a restricted subspace, reducing the risk of convergence to suboptimal points. There are two configurations: freeze matrix \( B \) and train \( A \), or freeze \( A \) and train \( B \). We now formally define the sampling/sketch schemes.

\begin{definition}[Left Sketch]
\label{def:left}
By a ``left sketch'' (of rank $r$) we refer to the update rule 
$$ \Delta W = \frac{\alpha}{r} B_{S} \hat{A}, $$
where \( B_{S} \sim \mathcal{D}_B \) is sampled from some fixed distribution over matrices of dimensions \( n \times r \), and only the matrix \( \hat{A} \) is adjustable. 
\end{definition}

\begin{definition}[Right Sketch]
\label{def:right}
By a ``right sketch'' (of rank $r$) we refer to the update rule 
$$ \Delta W = \frac{\alpha}{r} \hat{B} A_{S}, $$
where \( A_{S} \sim \mathcal{D}_A \) is sampled from some fixed distribution over matrices of dimensions \( r \times m \), and only the matrix \( \hat{B} \) is adjustable. 
\end{definition}

In both sampling schemes, we update the trainable matrix over several epochs. This step effectively corresponds to training a \algname{LoRA} block within the chain, following the standard \algname{LoRA} approach. While this procedure mirrors the conventional \algname{LoRA} method, we can formally characterize it as an approximate optimization problem, allowing for a structured analysis of the training process. These procedures for both matrices can be formally expressed via
\begin{align*}
     \text{(Option 1)} \quad \hat{A}^t \approx \min \limits_{A} f(W^t+ \frac{\alpha}{r} B^t_S A)\qquad \text{(Option 2)} \quad \hat{B}^t \approx \min \limits_{B} f(W^t+ \frac{\alpha}{r} B A^t_S).
\end{align*}
Similarly to \algname{COLA}, $t$ identifies the block in the chain.
Next, we incorporate the product of the trained matrix and the sampled matrix into the current model. The merging process involves adding the product of the two matrices—one sampled and the other trained. This addition is scaled by a factor of \(\frac{\alpha}{r}\), ensuring the appropriate weighting of the update within the model:
\begin{align*}
     \text{(Option 1)} \quad W^{t+1} = W^t + \frac{\alpha}{r} B^t_S \hat{A}^t \qquad \text{(Option 2)} \quad W^{t+1} = W^t + \frac{\alpha}{r} \hat{B}^t A^t_S.
\end{align*}

\section{Theory}\label{sec:theory}



\subsection{Derivation of the update step}
\label{sec:der}
Without loss of generality, let us focus on the Left Sketch scheme (Definition~\ref{def:left}). Specifically, for each model in the chain, the update rule is given as follows:
\begin{align*}
    W^{t+1} = W^t + \frac{\alpha}{r} B^t_S \hat{A}^t. 
\end{align*}
Next, we apply the Lipschitz gradient condition (Assumption \ref{asm:L-smooth}) to the loss function $f$:
\begin{align*}
    f(U) \leq f(V)+\langle\nabla f(V), U-V \rangle+\frac{L}{2}\|U-V\|_F^2, \quad \forall U,V \in \mathbb{R}^{m\times n}
\end{align*}
Applying this with $U = W^t$, $V = B^t_S \hat{A}^t$ and $\eta\leq \frac{1}{L}$ leads to
\begin{align*}
    f(W^{t+1}) &\leq f(W^t)+\langle\nabla f(W^t), B^t_S \hat{A}^t \rangle+\frac{L}{2}\|B^t_S \hat{A}^t\|_F^2\\
    &\leq f(W^t)+\langle (B^t_S)^\top \nabla f(W^t), \hat{A}^t \rangle+\frac{1}{2\eta}\langle (B^t_S)^\top B^t_S \hat{A}^t , \hat{A}^t\rangle. 
\end{align*}
Let us minimize the left hand side term in $\hat{A}^t$, when the gradient vanishes: $
    (B^t_S)^\top \nabla f(W^t) + \frac{1}{\eta} (B^t_S)^\top (B^t_S) \hat{A}^t = 0. 
$ One such solution is given by\footnote{The dagger notation refers to the Moore-Penrose pseudoinverse.}
\begin{align*}
\hat{A}^t = -\eta  \left((B^t_S)^\top (B^t_S) \right)^\dagger (B^t_S)^\top  \nabla f(W^t),
\end{align*}
and his leads to the following gradient update:
\begin{align}
\label{eq:left_GD}
 \notag   W^{t+1} &= W^t + \frac{\alpha}{r} B^t_S \hat{A}^t = W^t -\frac{\alpha}{r} \eta  B^t_S \left((B^t_S)^\top (B^t_S) \right)^\dagger (B^t_S)^\top  \nabla f(W^t)\\
    & = W^t - \gamma H^t_B \nabla f(W^t),
\end{align}
where $H^t_B =  B^t_S \left((B^t_S)^\top (B^t_S) \right)^\dagger (B^t_S)^\top$ is projection matrix and $\frac{\alpha}{r} \eta = \gamma$. Similarly, we can obtain the update for Right Sketch scheme (Definition~\ref{def:right}):
\begin{align}
\label{eq:right_GD}
    W^{t+1} &= W^t - \gamma \nabla f(W^t) (A^t_S)^\top \left(A_S^t(A^t_S)^\top\right)^\dagger A^t_S  = W^t - \gamma \nabla f(W^t) H^t_A, 
\end{align}
where $H^t_A = (A^t_S)^\top \left(A_S^t(A^t_S)^\top\right)^\dagger A^t_S$ is also projection matrix. Notably, the scaling factor \(\frac{\alpha}{r}\) is combined with the parameter \(\eta\), allowing us to work with the effective step size \(\gamma\). This simplifies the learning process by unifying the scaling and learning rate. Using this type of update, we provide convergence results for both standard and stochastic gradient descent methods.

\subsection{Convergence results} 
\label{sec:GD}
To derive the convergence results, a key factor in our analysis is the smallest eigenvalue of the expected value of the projection matrix introduced in Section~\ref{sec:der}. This eigenvalue plays a critical role in shaping the optimization process. As we will show, a well-conditioned projection matrix—with a sufficiently large smallest eigenvalue—ensures more efficient and reliable convergence. Therefore, we make an important assumption that this smallest eigenvalue must remain strictly positive.

\begin{assumption}
\label{asm:lambda}
Consider a projection matrix $H$ generated by Left Sketch (Def.~\ref{def:left}) or Right Sketch (Def.~\ref{def:right}). Assume that the sampling distributions \(\mathcal{D}^B_S\) and \(\mathcal{D}^A_S\) are such that the smallest eigenvalue of the expected projection matrix $H$ generated by sampled matrix is positive:
\begin{align*}
    \lambda_{\min}^H = \lambda_{\min}\left[ \mathbb{E}\left[H\right] \right] >0.
\end{align*}
    
\end{assumption}

In particular, it is important to observe that the eigenvalues of the projection matrix are either zero or one, with the smallest eigenvalue being zero. However, the smallest eigenvalue of the expected value of the projection matrix can be strictly greater than zero. Additionally, it is essential to establish a lower bound for the loss function.
\begin{remark}
    Assumption \ref{asm:lambda} is easily satisfied.
    Let $H$ be the projection matrix as
    defined below~\Cref{eq:right_GD} and
    assume that the $A$ matrices are
    drawn from an isotropic distribution
    (the rows of $A$ are isotropic).
    Then $H$ is the projection onto the rank
    of $A$, which is a subspace of dimension
    $r$ distributed isotropically in $\mathbb{R}^n$.
    The matrix $\mathbb{E}[H]$ is then invariant
    under rotations, so must be a scalar multiple of the identity. By taking
    traces, one finds that $\mathbb{E}[H] = \frac{r}{n}I$ so $\lambda_{\min}^H = \frac{r}{n}$.
\end{remark}
\begin{assumption}
    Function $f$ is bounded from below by an infimum $f^\star \in \mathbb{R}$. 
    \end{assumption}
We now present the convergence result for \algname{RAC-LoRA} with Gradient Descent (\algname{GD}) updates.

\begin{theorem}
\label{thm:GD}
Let Assumptions \ref{asm:L-smooth} and \ref{asm:lambda} hold, and let the stepsize satisfy $0<\gamma\leq \frac{1}{L}$.  Then, the iterates of \algname{RAC-LoRA}  (Algorithm \ref{alg:RAC-LoRA}) with \algname{GD} updates (Equation \ref{eq:left_GD} or \ref{eq:right_GD}) satisfy
                   \begin{align*}
 \mathbb{E}\left[\left\| \nabla f(\widetilde{W}^T)\right\|^2 \right] \leq \frac{2 (f(W^0) - f^\star)}{\lambda^H_{\min} \gamma T},
   \end{align*}
where the output $\widetilde{W}^T$ is chosen  uniformly at random from $W^0, W^1,\ldots,W^{T-1}$.   
\end{theorem}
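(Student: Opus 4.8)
The plan is to run a standard descent-lemma argument, but carefully exploiting the algebraic structure of the projection matrix $H^t_B$ and the independence of the current sketch from the past. I would work with the Left Sketch update from Equation~\ref{eq:left_GD}, namely $W^{t+1} = W^t - \gamma H^t_B \nabla f(W^t)$ where $H^t_B$ is a symmetric idempotent projection; the Right Sketch case is symmetric. First I would invoke $L$-smoothness (Assumption~\ref{asm:L-smooth}) in its descent-lemma form with $U = W^{t+1}$ and $V = W^t$, giving
\begin{align*}
f(W^{t+1}) \leq f(W^t) - \gamma \langle \nabla f(W^t), H^t_B \nabla f(W^t)\rangle + \frac{L\gamma^2}{2}\|H^t_B \nabla f(W^t)\|^2.
\end{align*}

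The key simplification is that for a projection matrix one has $(H^t_B)^\top H^t_B = H^t_B$, so $\|H^t_B \nabla f(W^t)\|^2 = \langle \nabla f(W^t), H^t_B \nabla f(W^t)\rangle$. Both the linear and the quadratic terms therefore collapse onto the same quantity, yielding
\begin{align*}
f(W^{t+1}) \leq f(W^t) - \gamma\left(1 - \tfrac{L\gamma}{2}\right)\langle \nabla f(W^t), H^t_B \nabla f(W^t)\rangle.
\end{align*}
Using the stepsize restriction $\gamma \leq 1/L$ gives $1 - \tfrac{L\gamma}{2} \geq \tfrac{1}{2}$, so the per-step decrease is at least $\tfrac{\gamma}{2}\langle \nabla f(W^t), H^t_B \nabla f(W^t)\rangle$.

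Next I would take the conditional expectation over the sketch $B^t_S$ given $W^t$. Since $W^t$ is determined by the sketches drawn in earlier iterations only, it is independent of $B^t_S$, so the expectation passes through to the projection: $\mathbb{E}[\langle \nabla f(W^t), H^t_B \nabla f(W^t)\rangle \mid W^t] = \langle \nabla f(W^t), \mathbb{E}[H^t_B]\nabla f(W^t)\rangle$. Assumption~\ref{asm:lambda} then gives $\mathbb{E}[H^t_B] \succeq \lambda^H_{\min} I$, hence this quantity is at least $\lambda^H_{\min}\|\nabla f(W^t)\|^2$. Taking full expectations produces the recursion
\begin{align*}
\mathbb{E}[f(W^{t+1})] \leq \mathbb{E}[f(W^t)] - \frac{\gamma \lambda^H_{\min}}{2}\mathbb{E}\left[\|\nabla f(W^t)\|^2\right].
\end{align*}

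Finally I would telescope this over $t = 0, \ldots, T-1$, bound $\mathbb{E}[f(W^T)] \geq f^\star$ from below, and divide by $T$; recognizing that the uniform random choice of $\widetilde{W}^T$ turns the average $\tfrac{1}{T}\sum_t \mathbb{E}[\|\nabla f(W^t)\|^2]$ into $\mathbb{E}[\|\nabla f(\widetilde{W}^T)\|^2]$ delivers the stated bound. The main obstacle—more conceptual than computational—is the projection identity $(H^t_B)^\top H^t_B = H^t_B$: without it the quadratic curvature term would not align with the gradient-progress term, and one could not cleanly absorb it into the $(1 - L\gamma/2)$ factor. The other point requiring care is justifying that the expectation can be pulled onto $H^t_B$ alone, which rests on the independence of $B^t_S$ from the accumulated iterate $W^t$; everything else is routine telescoping.
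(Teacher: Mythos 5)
Your proposal is correct and follows essentially the same route as the paper's own proof: descent lemma, the projection identity $(H^t_B)^\top H^t_B = H^t_B$ to merge the curvature term with the gradient-progress term, the stepsize bound $\gamma \leq 1/L$, conditional expectation over the sketch combined with Assumption~\ref{asm:lambda}, and telescoping with the uniform-output trick. The only (immaterial) difference is that you factor the per-step decrease as $\gamma(1 - L\gamma/2)$ before bounding, whereas the paper bounds the quadratic term first; the content is identical.
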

We obtain a sub-linear convergence rate, as is expected in general non-convex settings. To achieve a stronger convergence result, we employ an additional assumption: the Polyak-Łojasiewicz (PL) condition. This assumption generalizes strong convexity but applies to certain non-convex functions.

\begin{assumption}[PL-condition] 
    \label{asm:PL}
Function $f$ satisfies the Polyak-Łojasiewicz (PL) condition with parameter $\mu > 0$  if
\begin{align*}
    \frac{1}{2}\|\nabla f(W)\|^2 \geq \mu\left(f(W)-f^{\star}\right)
\end{align*}
for all $W \in \mathbb{R}^{m\times n}$,
where $f^\star = \inf f$, assumed to be finite.
\end{assumption}
Next, we establish a convergence rate for \algname{RAC-LoRA} in the Polyak-Łojasiewicz setting.

\begin{theorem}
\label{thm:PL-GD}
Let Assumptions \ref{asm:L-smooth}, \ref{asm:lambda} and  \ref{asm:PL} hold, and let the stepsize satisfy  $0<\gamma\leq \frac{1}{L}$. Then, for each $T\geq 0$, the iterates of \algname{RAC-LoRA} (Algorithm \ref{alg:RAC-LoRA}) with \algname{GD} updates (Equation \ref{eq:left_GD} or \ref{eq:right_GD})  satisfy
     \begin{align*}
         \mathbb{E}\left[ f(W^{T}) \right] -f^\star  \leq  \left(1 - \gamma \mu \lambda^H_{\min}\right)^T\left( f(W^0) - f^\star \right).
     \end{align*}
\end{theorem}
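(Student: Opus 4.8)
The plan is to reduce Theorem~\ref{thm:PL-GD} to a one-step contraction in function value and then unroll the recursion. I would work with the Left Sketch update (\ref{eq:left_GD}), namely $W^{t+1} = W^t - \gamma H^t_B \nabla f(W^t)$; the Right Sketch case (\ref{eq:right_GD}) is identical after transposing. First I would invoke the $L$-smoothness descent inequality (Assumption~\ref{asm:L-smooth}) with $U = W^{t+1}$ and $V = W^t$, so that $U - V = -\gamma H^t_B \nabla f(W^t)$, giving
\[
f(W^{t+1}) \le f(W^t) - \gamma \langle \nabla f(W^t), H^t_B \nabla f(W^t)\rangle + \frac{L\gamma^2}{2}\|H^t_B \nabla f(W^t)\|^2.
\]
The key algebraic observation is that $H^t_B$ is an orthogonal projection, so $(H^t_B)^\top = H^t_B$ and $(H^t_B)^2 = H^t_B$; hence $\|H^t_B \nabla f(W^t)\|^2 = \langle \nabla f(W^t), H^t_B \nabla f(W^t)\rangle$, which collapses the linear and quadratic right-hand terms into a single quadratic form in the projection.

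Next I would use the stepsize condition $\gamma \le 1/L$ to bound $\frac{L\gamma^2}{2} \le \frac{\gamma}{2}$, yielding the clean per-step decrease $f(W^{t+1}) \le f(W^t) - \frac{\gamma}{2}\langle \nabla f(W^t), H^t_B \nabla f(W^t)\rangle$. I would then take expectation conditioned on $W^t$, which depends only on the sketches drawn before step $t$ and is therefore independent of the freshly sampled $H^t_B$. Since $\mathbb{E}[H^t_B] \succeq \lambda^H_{\min} I$ by Assumption~\ref{asm:lambda}, the quadratic form is lower bounded by $\lambda^H_{\min}\|\nabla f(W^t)\|^2$, so that
\[
\mathbb{E}\!\left[f(W^{t+1}) \mid W^t\right] \le f(W^t) - \frac{\gamma \lambda^H_{\min}}{2}\|\nabla f(W^t)\|^2.
\]
This inequality is exactly the engine behind the non-convex Theorem~\ref{thm:GD} as well, so the two results share their first half.

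Finally I would invoke the PL condition (Assumption~\ref{asm:PL}) in the form $\|\nabla f(W^t)\|^2 \ge 2\mu\,(f(W^t) - f^\star)$, substitute, subtract $f^\star$ from both sides, and rearrange to obtain the contraction $\mathbb{E}[f(W^{t+1}) \mid W^t] - f^\star \le (1 - \gamma\mu\lambda^H_{\min})(f(W^t) - f^\star)$. Taking total expectation via the tower property and iterating over $t = 0,\ldots,T-1$ delivers the claimed bound. I do not expect a serious obstacle, as the argument is the standard PL analysis of gradient descent; the only points requiring care are (i) using idempotence and symmetry of $H^t_B$ to merge the two terms, and (ii) justifying that the conditioning treats $W^t$ as fixed while $H^t_B$ is independently sampled, so that Assumption~\ref{asm:lambda} applies to $\mathbb{E}[H^t_B]$. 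A useful sanity check is that the contraction factor lies in $[0,1)$: the eigenvalues of a projection are at most one so $\lambda^H_{\min} \le 1$, while $\gamma \le 1/L$ and $\mu \le L$ (the PL constant never exceeds the smoothness constant), whence $\gamma\mu\lambda^H_{\min} \le 1$.
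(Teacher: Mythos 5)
Your proof is correct and follows essentially the same route as the paper: the $L$-smoothness descent step, collapsing the quadratic term via idempotence and symmetry of the projection $H^t_B$, the stepsize bound $\gamma \le 1/L$, conditional expectation with Assumption~\ref{asm:lambda}, then the PL condition and unrolling — this is exactly the paper's argument (which derives the same one-step inequality inside the proof of Theorem~\ref{thm:GD} and reuses it for Theorem~\ref{thm:PL-GD}, just as you observed). Your added sanity check that $\gamma\mu\lambda^H_{\min} \le 1$ is a nice touch not present in the paper but changes nothing substantive.
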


We achieved a linear convergence rate, which is significantly better than previous results; however, this improvement applies to a more limited class of functions. Importantly, we can recover the classical results of \algname{GD} by setting \(\lambda_{\min}^H = 1\), which corresponds to the full-rank scenario. 

The comprehensive analysis of different optimizers and their performance across various settings is provided in the appendix, as summarized in Table \ref{tab:theory}.

\section{Experiments} \label{sec:exp}

In this section, we explore the performance of 
\algname{RAC-LoRA} as an optimization algorithm in machine learning applications.
In~\Cref{sec:exp-convex} we validate the theoretical results in convex problems,
while in~\Cref{sec:exp-nonconvex} we evaluate the method applied to neural networks.


\subsection{Convex Optimization Problems}\label{sec:exp-convex}
\begin{figure}
\centering
\includegraphics[width=0.4\textwidth]{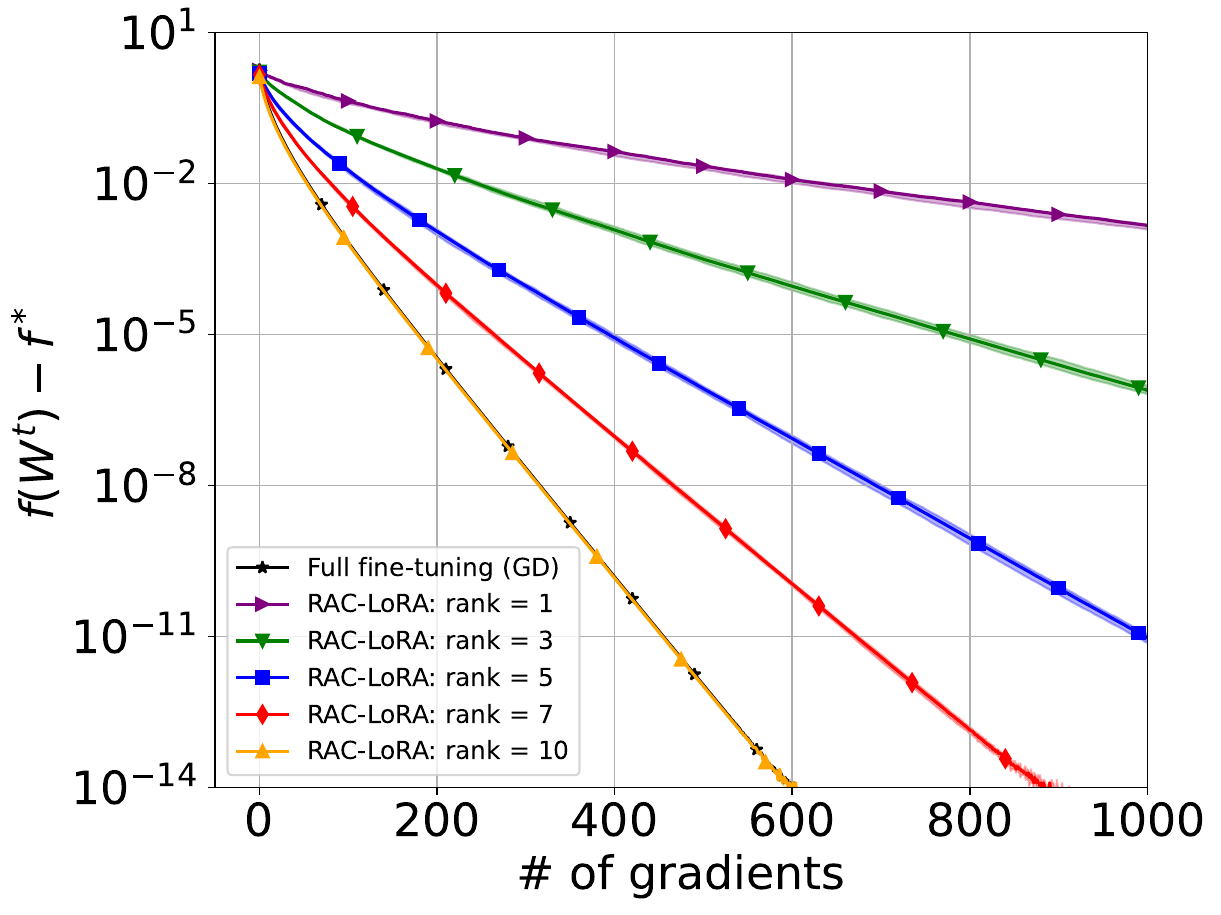}\qquad\qquad
\includegraphics[width=0.4\textwidth]{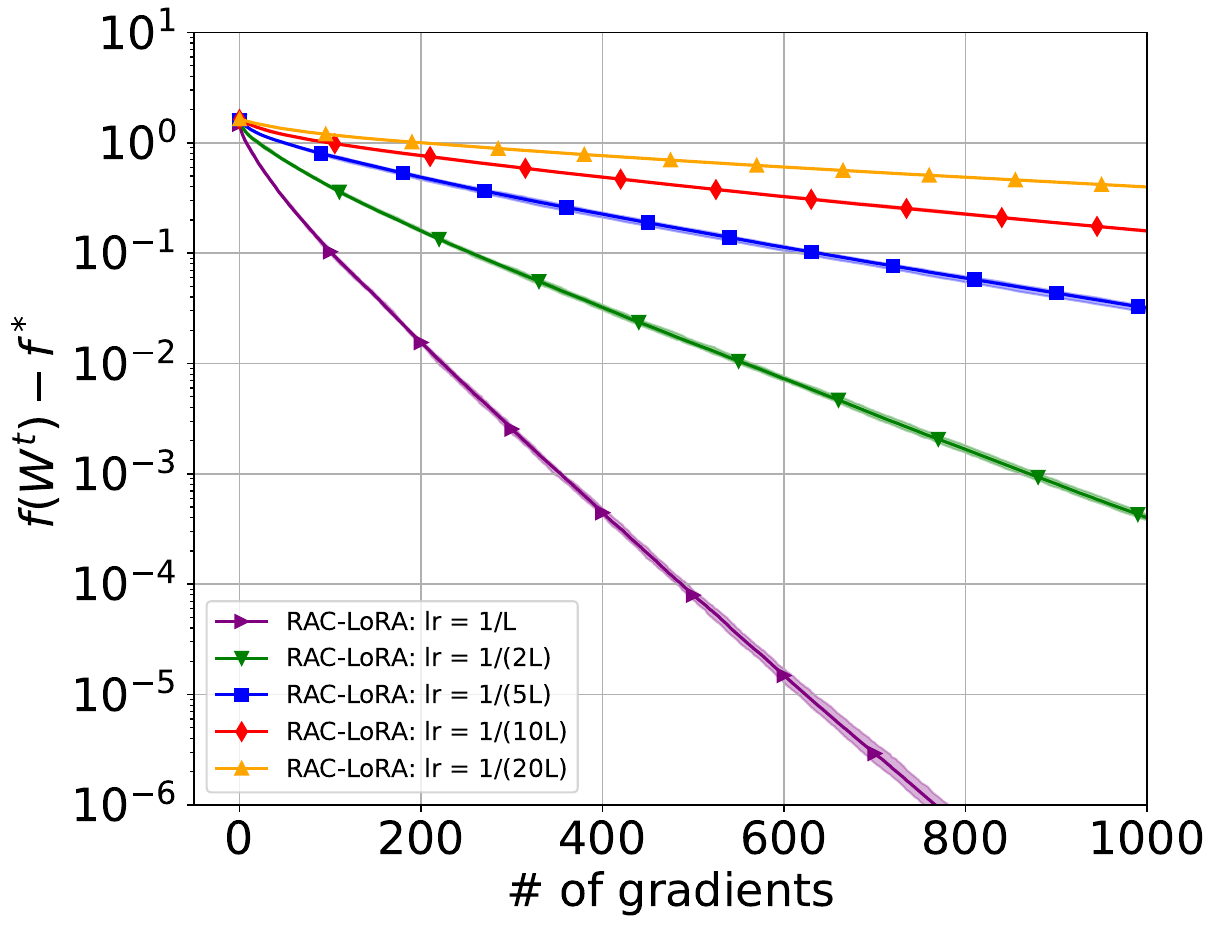}
\vspace{-0.25cm}
\caption{\algnamesmall{RAC-LoRA} convergence with varying ranks and step sizes on a linear regression problem.}\label{fig:linereg}
\end{figure}



\textbf{Linear Regression.} We conducted our analysis in a controlled setting involving linear regression with quadratic regularization applied to synthetic data. Specifically, we utilized 3,000 samples for pre-training the model and 1,000 samples for fine-tuning. In this setup, we have \(d = 100\) with weight matrices of size \(10 \times 10\), and the regularization term is set to \(0.0001\). As illustrated in Figure~\ref{fig:linereg}, the method converges for various ranks and the convergence speed is proportional to $\frac{n}{r}$, and when the rank is set to the full rank, we observe convergence identical to that of \algname{FPFT}. We remark that \algname{COLA} would suffer from the same divergence behavior as in \Cref{fig:counterexample} on this quadratic problem.

\textbf{Logistic Regression.} Analogous results for logistic regression are shown in~\Cref{sec:app:exp-convex}.



\subsection{Non-Convex Optimization Problems}\label{sec:exp-nonconvex}

Further experimental results are provided in Appendix~\ref{sec:app:nonconvex}.

\subsubsection{Results of RoBERTa on NLP Tasks}

As in prior work  \citep{zhu2024asymmetry,xia2024chain}, we evaluate 
low-rank adaptation methods for LLMs using the GLUE dataset \citep{wang2018glue}.

\textbf{Methodology.}
We fine-tuned the \texttt{roberta-base} model \citep{liu2019roberta} on four of the smallest GLUE tasks to study the behavior of low-rank methods in practical scenarios.
For the chained methods, we use a range of values for the number of chains and
epochs per chain hyperparameters.
In each experiment
we used rank 2 for the adaptations and trained using the \algname{AdamW} optimiser~\citep{loshchilov2017decoupled} with $\beta$ parameters 0.9 and 0.999,
$\epsilon=\num{1e-8}$, a learning rate of \num{4e-4} with linear schedule and a training batch size 8.

\textbf{Discussion.}
The results are presented in~\Cref{tab:main:results_roberta_base_rank=2}.
We find that \algname{RAC-LoRA} performs competitively with other low-rank adaptation methods, but
does not outperform \algname{Asymmetric LoRA} despite having greater capacity.
We expect \algname{RAC-LoRA} to outperform \algname{Asymmetric LoRA} in settings where there
is a benefit to the additional capacity, i.e., those where a full parameter
fine tune (\algname{FPFT}) is much better than \algname{Asymmetric LoRA}. 
The performance of the \algname{FPFT}
in~\Cref{tab:main:results_roberta_base_rank=2} shows that
the selected GLUE tasks do not provide such a setting. Here,
a single low-rank adaptation is already enough to obtain
performance close to that of \algname{FPFT}.
However, this intuition motivates the experiments in~\Cref{sec:exp-mnist}
where we intentionally restrict capacity of the adaptations
to isolate the effect of the chaining procedure.

\begin{table*}[htb!]
\resizebox{\textwidth}{!}{%
\centering
{
\begin{tabular}{lccccccc}
 \toprule
{{\bf Method}} & {{\bf \# Chains}} & {{\bf \# Epochs}} & \textcolor{black}{{\bf MRPC}} & \textcolor{black}{{\bf CoLA}} & \textcolor{black}{{\bf RTE}} & \textcolor{black}{{\bf STS-B}} &  {\bf Avg}\\
 \midrule

\algname{FPFT}* &  \multirow{2}{*}{1} & \multirow{2}{*}{30, 80, 80, 40} & 
90.2\textsubscript{$\pm$0.0} &
63.6\textsubscript{$\pm$0.0} &
78.7\textsubscript{$\pm$0.0} & 
91.2\textsubscript{$\pm$0.0} & 
80.9\\

\algname{LoRA}* & & & 
89.7\textsubscript{$\pm$0.7} &
63.4\textsubscript{$\pm$1.2} &
86.6\textsubscript{$\pm$0.7} & 
91.5\textsubscript{$\pm$0.2} & 
82.8\\

\midrule

\algname{LoRA} & \multirow{2}{*}{1} & \multirow{2}{*}{100}  
&
87.7\textsubscript{$\pm$0.2} &
{60.8\textsubscript{$\pm$0.2}} &
{75.2\textsubscript{$\pm$1.5}} & 
90.2\textsubscript{$\pm$0.1} & 
78.5\\
 
\algname{AsymmLoRA} & & 
&
86.9\textsubscript{$\pm$0.3} &
58.7\textsubscript{$\pm$1.0} &
71.0\textsubscript{$\pm$3.3} & 
90.4\textsubscript{$\pm$0.0} & 
76.8\\

\algname{COLA} & 10 & 10
& 
{88.0\textsubscript{$\pm$0.8}} &
59.5\textsubscript{$\pm$1.0} &
72.1\textsubscript{$\pm$0.9}&
{90.7\textsubscript{$\pm$0.2}} & 
77.6\\

\algname{RAC-LoRA} & 10 & 10
& 
87.0\textsubscript{$\pm$0.7} &
58.5\textsubscript{$\pm$0.1} &
72.3\textsubscript{$\pm$1.5}&
90.3\textsubscript{$\pm$0.0} & 
77.0\\

\bottomrule
\end{tabular}
}
}

\caption{Results with RoBERTa-base for rank 2 on tasks from the GLUE benchmark. *: results taken from the work of~\cite{hu2021lora}. We report Matthews correlation coefficient for \algname{COLA}, Pearson correlation coefficient for STS-B, and accuracy for the remaining tasks. Results are averaged over 3 seeds and standard deviations are given in the subscript.}
\label{tab:main:results_roberta_base_rank=2}
\vspace{-5.0pt}
\end{table*}






\begin{wraptable}{Rt}{8.5cm}
\setlength{\tabcolsep}{3pt}
\centering
\caption{MLP results on MNIST with rank $r$ and $\alpha$ set to 1. In the case of \algname{AsymmLoRA} and \algname{RAC-LoRA}, only the zero-initialized matrix is trained.}
\begin{tabular}{lcccccc}
\toprule
\textbf{Method} & $\mathcal{D}_A$ & $\mathcal{D}_B$ & \textbf{Acc} & \textbf{Train Params}\\\midrule
\algname{FPFT} & - & - & 98.0 & 54,700  \\\midrule
\algname{LoRA} & Gaussian & Zero & 83.8 & 1K  \\
\algname{COLA} & Gaussian & Zero & 92.6 & 1K \\ \midrule

\algname{LoRA} & Zero & Gaussian & 87.0 & 1K \\ 
\algname{COLA}  & Zero & Gaussian & 96.2 & 1K \\
\midrule

\algname{AsymmLoRA}  & Gaussian & Zero & 62.3 & 133 \\ 
\algname{RAC-LoRA} & Gaussian & Zero & 92.0 & 133\\ 
\midrule
\algname{AsymmLoRA} & Zero & Gaussian & 81.6 & 912 \\
\algname{RAC-LoRA} & Zero & Gaussian & 96.1 & 912\\
\bottomrule
\end{tabular}
\label{tab:mnist_rank1}
\end{wraptable}

\subsubsection{Results of MLPs on MNIST}\label{sec:exp-mnist}

In this section, we seek to isolate the effect of the chaining procedure on generalisation performance by restricting the capacity of the low-rank adaptations. 
This ensures that a single adaptation is not sufficient to reach performance
comparable with \algname{FPFT}, allowing us to explore how chaining adaptations can bridge this gap.

\textbf{Methodology.}
We first pre-train a 3-layer MLP on the first five classes (digits 0-4) and then adapt the network using \algname{LoRA}-based methods for recognizing the remaining five unseen classes (digits 5-9). The model is evaluated solely on these unseen classes\footnote{The setup is inspired by \url{https://github.com/sunildkumar/lora_from_scratch/}.}. we used rank 1 for the adaptations and trained using the \algname{AdamW} optimiser~\citep{loshchilov2017decoupled} with $\beta$ parameters 0.9 and 0.999,
$\epsilon=\num{1e-8}$, a constant learning rate of \num{2e-4} and a training batch size 128.

\textbf{Discussion.}
Table~\ref{tab:mnist_rank1} shows results for MNIST with different ranks and initialization.
%
\algname{LoRA} reaches around 90\% of the accuracy of \algname{FPFT} leaving some margin for improvement when using the chains.
\algname{COLA} constructs a sequence of \algname{LoRA} modules, delivering significant accuracy
improvements over \algname{LoRA} due to the chaining procedure.
The chaining allows \algname{COLA} to capture richer features (at the cost of training more parameters). 
However, both \algname{LoRA} and \algname{COLA} lack rigorous convergence guarantees.
\algname{AsymmLoRA} has been shown empirically to approximate the performance of \algname{LoRA} \citep{sun2024improving} --- but again no convergence result is provided.
Our proposed method (\algname{RAC-LoRA}) enjoys significant accuracy improvements over \algname{AsymmLoRA}, again due to the chaining procedure. \algname{RAC-LoRA} leverages a diverse learning process across different \algname{LoRA} blocks, which intuitively allows the model to capture a broader range of features. Crucially, \algname{RAC-LoRA} comes with convergence guarantees (Theorems~\ref{thm:GD} and  \ref{thm:PL-GD}).
Finally, we note that each iteration of \algname{RAC-LoRA} requires training only one matrix per \algname{LoRA} block, while \algname{COLA} needs training two matrices. This reduction in trainable parameters may offer advantages in resource-constrained settings, such as Federated Learning, where minimizing communication costs is critical.



\section{Conclusion}
In this work, we introduced \algname{RAC-LoRA}, a framework for parameter-efficient fine-tuning that enables interpolation between low-rank adaptation and full parameter fine-tuning. Motivated by the convergence challenges of \algname{LoRA}, we propose the iterative algorithm \algname{RAC-LoRA} and provide convergence guarantees across various settings, including gradient descent, stochastic gradient descent, and random reshuffling.
We extended this framework to the
federated learning setup, where
\algname{RAC-LoRA} has advantages over
competing algorithms in terms of communication efficiency.
Finally, 
we validate our theoretical results empirically
in both convex problems, such as linear and logistic regression, and non-convex problems, such as MLPs and LLMs, finding that its chaining procedure is advantageous in settings where standard low-rank adaptation approaches (such as \algname{LoRA} and \algname{AsymmLoRA}) fail to capture the richness of a full-parameter fine-tuning.

\section*{Acknowledgements}

The research reported in this publication was supported by funding from King Abdullah University of Science and Technology (KAUST): i) KAUST Baseline Research Scheme, ii) Center of Excellence for Generative AI, under award number 5940, iii) SDAIA-KAUST Center of Excellence in Artificial Intelligence and Data Science.

\bibliography{iclr2025_conference}
\bibliographystyle{iclr2025_conference}

\newpage
\appendix
\part*{Appendix}

\section{Results on Convex Optimization Problems}
\label{sec:app:exp-convex}
\subsection{Logistic Regression}
We performed our analysis in a controlled environment using logistic regression with quadratic regularization on synthetic data. In this configuration, we set \(d = 100\), employed weight matrices of size \(10 \times 10\), and used 2,000 samples, with the regularization term fixed at \(0.1\). As shown in Figure~\ref{fig:logreg}, the method demonstrates convergence across different ranks, and when the rank is set to full rank, we observe convergence that mirrors that of \algname{FPFT}.

\begin{figure}[h]
\centering
\includegraphics[width=0.7\textwidth]{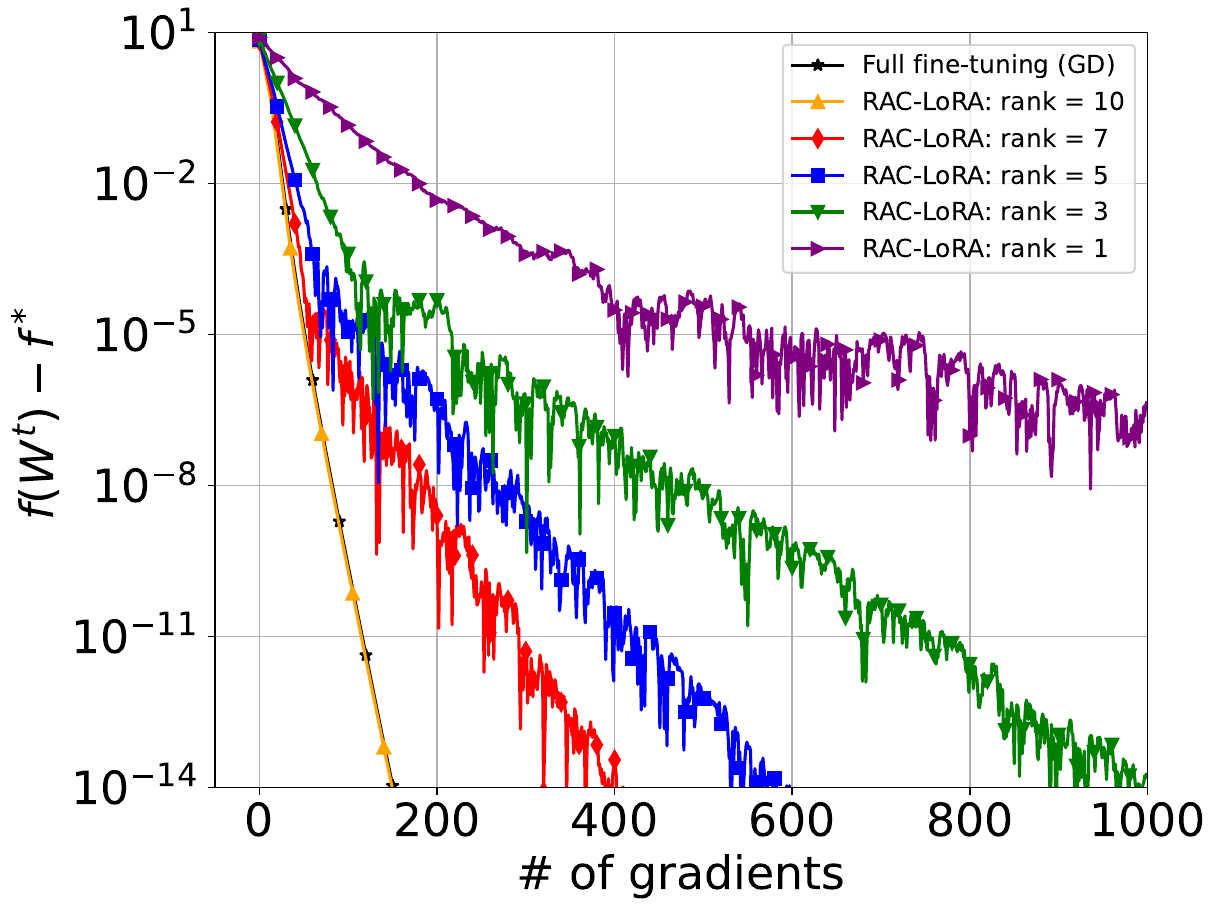}

\caption{\algnamesmall{RAC-LoRA} convergence with varying ranks and step sizes on a logistic regression problem.}\label{fig:logreg}
\end{figure}

\section{Results on Non-Convex Optimization Problems}
\label{sec:app:nonconvex}

\subsection{Additional Results of RoBERTa on NLP Tasks}

Table~\ref{results_roberta_base_rank=2} reports additional configurations of the number of epochs per chain and the number of chains on the GLUE benchmark. These results further corroborate the discussion in Section~\ref{sec:exp-nonconvex}.

\begin{table*}[htb!]
\resizebox{\textwidth}{!}{%
\centering
{
\begin{tabular}{lccccccc}
 \toprule
{{Method}} & {{\# Chains}} & {{\# Epochs}} & \textcolor{black}{{MRPC}} & \textcolor{black}{{CoLA}} & \textcolor{black}{{RTE}} & \textcolor{black}{{STS-B}} &  {Avg}\\

 \midrule

\algname{FPFT}* &  \multirow{2}{*}{1} & \multirow{2}{*}{30, 80, 80, 40} & 
90.2\textsubscript{$\pm$0.0} &
63.6\textsubscript{$\pm$0.0} &
78.7\textsubscript{$\pm$0.0} & 
91.2\textsubscript{$\pm$0.0} & 
80.9\\

\algname{LoRA}* & & & 
89.7\textsubscript{$\pm$0.7} &
63.4\textsubscript{$\pm$1.2} &
86.6\textsubscript{$\pm$0.7} & 
91.5\textsubscript{$\pm$0.2} & 
82.8\\

\midrule

\algname{LoRA} & \multirow{2}{*}{1} & \multirow{2}{*}{20} & 
86.8\textsubscript{$\pm$0.8} &
58.0\textsubscript{$\pm$0.4} &
{71.4\textsubscript{$\pm$0.7}} & 
{90.3\textsubscript{$\pm$0.1}} & 
76.6\\
 
\algname{AsymmLoRA} & & & 
85.5\textsubscript{$\pm$0.5} &
56.5\textsubscript{$\pm$1.5} &
69.2\textsubscript{$\pm$0.2} & 
89.6\textsubscript{$\pm$0.1} & 
75.2\\

\cdashline{1-8}
\multirow{2}{*}{\algname{COLA}} & 2 & 10 & 
{87.1\textsubscript{$\pm$0.2}} &
{58.4\textsubscript{$\pm$1.5}} &
69.9\textsubscript{$\pm$0.9} &
90.3\textsubscript{$\pm$0.2} & 
76.4

\\
& 10 & 2 & 
84.2\textsubscript{$\pm$1.1} &
54.2\textsubscript{$\pm$0.4}&
64.6\textsubscript{$\pm$1.3}& 
89.1\textsubscript{$\pm$0.1} & 
73.0\\

\cdashline{1-8}
\multirow{2}{*}{\algname{RAC-LoRA}} & 2 & 10
&
85.6\textsubscript{$\pm$1.7} &
55.3\textsubscript{$\pm$1.2} &
68.6\textsubscript{$\pm$1.0} & 
89.4\textsubscript{$\pm$0.2} & 
74.7\\

& 10 & 2
&
85.4\textsubscript{$\pm$0.4} &
55.1\textsubscript{$\pm$1.2} &
65.5\textsubscript{$\pm$0.9} & 
89.3\textsubscript{$\pm$0.1} & 
73.8\\

\\[-2.5ex]
\midrule

\algname{LoRA} & \multirow{2}{*}{1} & \multirow{2}{*}{50}
&
{88.2\textsubscript{$\pm$0.3}} &
{60.1\textsubscript{$\pm$0.4}} &
{74.4\textsubscript{$\pm$0.9}} & 
{90.6\textsubscript{$\pm$0.1}} & 
78.3\\
 
\algname{AsymmLoRA} &  & 
&
86.4\textsubscript{$\pm$1.0} &
57.4\textsubscript{$\pm$0.3} &
69.9\textsubscript{$\pm$1.8} & 
90.3\textsubscript{$\pm$0.1} & 
76.0\\

\cdashline{1-8}
\multirow{2}{*}{\algname{COLA}} & 5 & 10
& 
87.8\textsubscript{$\pm$1.1} &
59.3\textsubscript{$\pm$2.1} &
71.2\textsubscript{$\pm$1.2} &
90.6\textsubscript{$\pm$0.2} & 
77.2\\
& 10 & 5 
&
87.7\textsubscript{$\pm$0.5} &
58.1\textsubscript{$\pm$1.2} &
70.9\textsubscript{$\pm$0.5} &
90.2\textsubscript{$\pm$0.2} & 
76.7\\

\cdashline{1-8}
\multirow{2}{*}{\algname{RAC-LoRA}} & 5 & 10
& 
87.2\textsubscript{$\pm$0.6} &
57.6\textsubscript{$\pm$0.5} &
70.6\textsubscript{$\pm$0.7} &
90.2\textsubscript{$\pm$0.1} & 
76.4\\
& 10 & 5
&
87.5\textsubscript{$\pm$0.4} &
57.8\textsubscript{$\pm$1.0} &
70.3\textsubscript{$\pm$1.2} &
90.2\textsubscript{$\pm$0.2} & 
76.5\\

\midrule

\algname{LoRA} & \multirow{2}{*}{1} & \multirow{2}{*}{100}  
&
87.7\textsubscript{$\pm$0.2} &
{60.8\textsubscript{$\pm$0.2}} &
{75.2\textsubscript{$\pm$1.5}} & 
90.2\textsubscript{$\pm$0.1} & 
78.5\\
 
\algname{AsymmLoRA} & & 
&
86.9\textsubscript{$\pm$0.3} &
58.7\textsubscript{$\pm$1.0} &
71.0\textsubscript{$\pm$3.3} & 
90.4\textsubscript{$\pm$0.0} & 
76.8\\

\algname{COLA} & 10 & 10
& 
{88.0\textsubscript{$\pm$0.8}} &
59.5\textsubscript{$\pm$1.0} &
72.1\textsubscript{$\pm$0.9}&
{90.7\textsubscript{$\pm$0.2}} & 
77.6\\

\algname{RAC-LoRA} & 10 & 10
& 
87.0\textsubscript{$\pm$0.7} &
58.5\textsubscript{$\pm$0.1} &
72.3\textsubscript{$\pm$1.5}&
90.3\textsubscript{$\pm$0.0} & 
77.0\\

\bottomrule
\end{tabular}
}
}

\caption{{Performance of the methods using RoBERTa-base for rank 2}. The experiments are based on 4 tasks from the GLUE benchmark. * denotes the results reported in \cite{hu2021lora}. We report Matthews correlation coefficient for the CoLA dataset, Pearson correlation coefficient for STS-B, and accuracy for the remaining tasks, with the standard deviations given in the subscript. The results are obtained using 3 random seeds.}
\label{results_roberta_base_rank=2}
\vspace{-5.0pt}
\end{table*}

\subsection{Ablation on number of epochs per block in the chains}

Convergence proof for \algname{RAC-LoRA} (Corollary \ref{cor-RR-gen} and Corollary \ref{cor-RR-PL}) states that each \algname{LoRA} module shall be optimized for one epoch only. However, good approximations can also be obtained using more epochs per block and hence fewer blocks (i.e., fewer parameters), as we show in Table~\ref{tab:varying-epochs} for the case of MLP on MNIST. 

Similarly, we plot the training loss curves for RoBERTa-base on the RTE dataset in Figure~\ref{fig:train_loss_rte}. We observe that all setups reach the same value at convergence with similar speed.



\begin{table}[ht]
\centering
\begin{tabular}{ccccccc}
\toprule
& \multicolumn{6}{c}{\textbf{Number of epochs per block}} \\\cmidrule(lr){2-7}
& \textbf{1} & \textbf{2} & \textbf{3} & \textbf{4} & \textbf{5} & \textbf{10} \\
\midrule

\algname{COLA} & 96.2 & 95.8 & 95.9 & 95.1 & 95.4 & 94.5 \\

\algname{RAC-LoRA} & 96.1 & 95.6 & 95.6 & 94.9 & 94.7 & 93.9 \\

\bottomrule
\end{tabular}
\caption{Accuracy at varying epochs for each block in the chained methods (\algname{COLA} and \algname{RAC-LoRA}). The setup is the same as in Table~\ref{tab:mnist_rank1}, with a zero-initialized $A$ matrix and a Gaussian-initialized $B$ matrix. To ensure a fair comparison, the product of the number of epochs per block and the number of blocks is kept constant at 50. The number of trainable parameters for \algname{COLA} and \algname{RAC-LoRA} are 1K and 912, respectively.}
\label{tab:varying-epochs}
\end{table}

\begin{figure}
    \centering
    \includegraphics[width=0.8\linewidth]{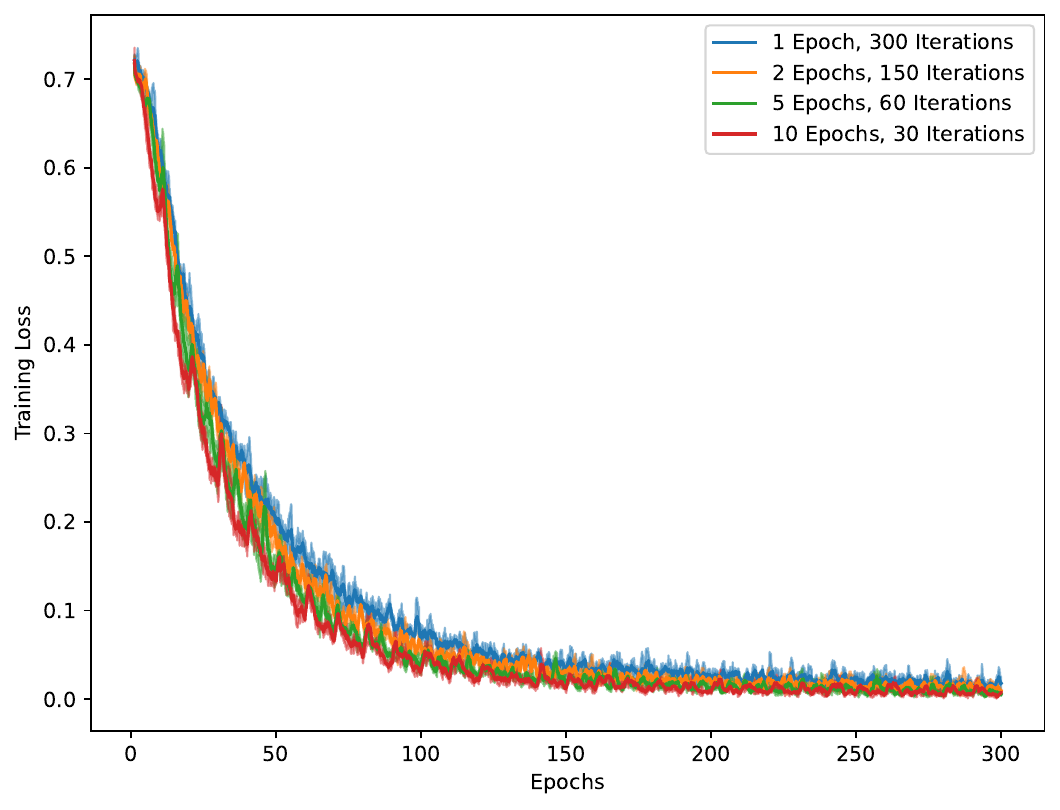}
    \caption{\algname{RAC-LoRA} training loss curves at a fixed computational budget for varying epochs for each block in the chain. RoBERTa-base with rank 2.}
    \label{fig:train_loss_rte}
\end{figure}




\section{Analysis of RAC-LoRA with Gradient Descent} \label{sec:GD-proofs}
\subsection{Proof of Theorem \ref{thm:GD}}
\label{sec:GD-pr}


The proof is provided for Left Sketch (Definition \ref{def:left}). The result for Right Sketch (Definition \ref{def:right}) can be derived by following the same steps.
\begin{proof}
We begin by examining the implications of Assumption~\ref{asm:L-smooth}. The relationships between various conditions associated with Assumption~\ref{asm:L-smooth} are discussed in detail in \citet{nesterov2004introductory}.
\begin{align*}
    f(W^{t+1}) \leq& f(W^t) + \left\langle \nabla f(W^t), W^{t+1} - W^t \right \rangle + \frac{L}{2}\left\| W^{t+1} - W^t \right\|^2
\end{align*}
Using the update rule $W^{t+1} = W^t - \gamma H^t_B \nabla f(W^t)$ we get
\begin{align*}
   f(W^{t+1})      \leq& f(W^t) + \left\langle \nabla f(W^t), - \gamma H^t_B \nabla f(W^t) \right \rangle + \frac{L}{2}\left\|  - \gamma H^t_B \nabla f(W^t) \right\|^2\\
   \leq& f(W^t) -\gamma \left\langle \nabla f(W^t),  H^t_B \nabla f(W^t) \right \rangle + \frac{L}{2} \gamma^2 \left\|    H^t_B \nabla f(W^t) \right\|^2\\
      \leq& f(W^t) -\gamma \left\langle \nabla f(W^t),  H^t_B \nabla f(W^t) \right \rangle + \frac{L}{2} \gamma^2 \left\langle H^t_B \nabla f(W^t),  H^t_B \nabla f(W^t) \right \rangle\\
            \leq& f(W^t) -\gamma \left\langle \nabla f(W^t),  H^t_B \nabla f(W^t) \right \rangle + \frac{L}{2} \gamma^2 \left\langle  \nabla f(W^t),  (H^t_B)^\top H^t_B \nabla f(W^t) \right \rangle.
\end{align*}
Since matrix $H^t_B$ is projection matrix, we have $(H^t_B)^\top H^t_B = (H^t_B)^2 = H^t_B$:
\begin{align*}
   f(W^{t+1})  \leq& f(W^t) -\gamma \left\langle \nabla f(W^t),  H^t_B \nabla f(W^t) \right \rangle + \frac{L}{2} \gamma^2 \left\langle  \nabla f(W^t),  H^t_B \nabla f(W^t) \right \rangle.
\end{align*}
Using the fact that $\gamma\leq \frac{1}{L}$ we have 
\begin{align*}
   f(W^{t+1})  \leq& f(W^t) -\frac{\gamma}{2} \left\langle \nabla f(W^t),  H^t_B \nabla f(W^t) \right \rangle.
\end{align*}
Taking expectation we get
\begin{align*}
   \mathbb{E} \left[ f(W^{t+1}) \mid W^t \right]  \leq&    \mathbb{E} \left[f(W^t) -\frac{\gamma}{2} \left\langle \nabla f(W^t),  H^t_B \nabla f(W^t) \right \rangle  \mid W^t \right]\\
   \leq & f(W^t) -\frac{\gamma}{2} \left\langle \nabla f(W^t),  \mathbb{E} \left[ H^t_B \right] \nabla f(W^t) \right \rangle  
   \end{align*}
   Using an Assumption \ref{asm:lambda} we have 
   \begin{align*}
   \mathbb{E} \left[ f(W^{t+1}) \mid W^t \right]  \leq&    \mathbb{E} \left[f(W^t) -\frac{\gamma}{2} \left\langle \nabla f(W^t),  H^t_B \nabla f(W^t) \right \rangle  \mid W^t \right]\\
   \leq & f(W^t) -\frac{\gamma}{2}\lambda_{\min}^{H_B}\left\| \nabla f(W^t)\right\|^2.
   \end{align*}
    Subtracting $f^\star$ from both sides we get
       \begin{align}
       \label{eq:GD for PL}
   \mathbb{E} \left[ f(W^{t+1}) \mid W^t \right]  - f^\star
   \leq & f(W^t) - f^\star -\frac{\gamma}{2}\lambda^{H_B}_{\min}\left\| \nabla f(W^t)\right\|^2.
   \end{align}
  Now we can rewrite as 
   \begin{align*}
       \frac{\gamma}{2}\lambda^{H_B}_{\min}\left\| \nabla f(W^t)\right\|^2 \leq    \left(  f(W^t) - f^\star  \right) - \left( \mathbb{E} \left[ f(W^{t+1}) \mid W^t \right]  - f^\star\right)
   \end{align*}
   Taking expectation and using tower property we obtain
      \begin{align*}
       \frac{\gamma}{2}\lambda^{H_B}_{\min}\mathbb{E}\left[\left\| \nabla f(W^t)\right\|^2 \right] \leq    e^t - e^{t+1},
   \end{align*}
   where $e^t =\mathbb{E} \left[ f(W^{t}) \right]  - f^\star $. Now we can sum these inequalities together and get
         \begin{align*}
     \sum^{T-1}_{t=0}  \frac{\gamma}{2}\lambda^{H_B}_{\min}\mathbb{E}\left[\left\| \nabla f(W^t)\right\|^2 \right] \leq  \sum^{T-1}_{t=0}  \left(  e^t - e^{t+1} \right),
   \end{align*}
   Using telescoping property of $e^t - e^{t+1}$ we get 
         \begin{align*}
     \sum^{T-1}_{t=0}  \frac{\gamma}{2}\lambda^{H_B}_{\min}\mathbb{E}\left[\left\| \nabla f(W^t)\right\|^2 \right] \leq e^0 - e^T.
   \end{align*}
   Once we divide by $T$ we obtain
            \begin{align*}
   \frac{1}{T}  \sum^{T-1}_{t=0}  \frac{\gamma}{2}\lambda^{H_B}_{\min}\mathbb{E}\left[\left\| \nabla f(W^t)\right\|^2 \right] &\leq \frac{e^0 - e^T}{T} \leq \frac{e^0 }{T}.
   \end{align*}
   Finally, we get
               \begin{align*}
   \frac{1}{T}  \sum^{T-1}_{t=0}  \mathbb{E}\left[\left\| \nabla f(W^t)\right\|^2 \right] \leq \frac{2 (f(W^0) - f^\star)}{\lambda^{H_B}_{\min} \gamma T}.
   \end{align*}
   Applying argument from \citet{danilova2022recent} we obtain the result for uniformly chosen point.

\end{proof}

\subsection{Proof of Theorem \ref{thm:PL-GD}}
\label{sec:GD-PL}


The proof is provided for Left Sketch (Definition \ref{def:left}). The result for Right Sketch (Definition \ref{def:right}) can be derived by following the same steps.


\begin{proof}

We start from the inequality \ref{eq:GD for PL}:
       \begin{align*}
   \mathbb{E} \left[ f(W^{t+1}) \mid W^t \right]  - f^\star
   \leq & f(W^t) - f^\star -\frac{\gamma}{2}\lambda^{H_B}_{\min}\left\| \nabla f(W^t)\right\|^2.
   \end{align*}
       Using PL condition $\left\| \nabla f(W^t) \right\|^2 \geq 2\mu\left( f(W^t) - f^\star \right)$ we have
     \begin{align*}
            \mathbb{E} \left[ f(W^{t+1}) \mid W^t \right]  - f^\star
   \leq & f(W^t) - f^\star -\gamma \mu\lambda^{H_B}_{\min}\left( f(W^t) - f^\star \right)\\
    \leq & \left(1 - \gamma \mu \lambda^{H_B}_{\min}\right)\left( f(W^t) - f^\star \right).
     \end{align*}
     Once we unroll the recursion we get
     \begin{align*}
         \mathbb{E}\left[ f(W^{T}) \right] -f^\star  \leq  \left(1 - \gamma \mu \lambda^{H_B}_{\min}\right)^T\left( f(W^0) - f^\star \right).
     \end{align*}
     In order to obtain $\varepsilon$ solution we need to take
     \begin{align*}
    T \geq \mathcal{O} \left( \frac{L}{\mu} \frac{1}{\lambda^{H_B}_{\min}} \log \frac{1}{\varepsilon}\right).
\end{align*}

\end{proof}

\clearpage
\section{Analysis of RAC-LoRA with Random Reshuffling} \label{sec:RR}

The previous results were obtained using full gradients. However, this approach is impractical in deep learning settings, where calculating full gradients is often infeasible. To analyze stochastic methods, we consider problem (\ref{eq:main}), where $f$ has the following special structure: 
\begin{align}
\label{eq:finite}
f(W^0 + \Delta W) := \frac{1}{N}\sum_{i=1}^{N} f_i(W^0+\Delta W) , 
\end{align}
where each function \(f_i\) represents the individual loss function for one sample and $N$ is total number of datapoints. Next, we analyze a practical variant of stochastic gradient descent (\algname{SGD}) known as Random Reshuffling (\algname{RR}), which involves sampling without replacement. In this method, the dataset is shuffled according to a permutation, ensuring that each training sample is used exactly once during each epoch. 

\algname{RR} is a variant of \algname{SGD} in which each data point is used exactly once per epoch, also known as \algname{SGD} with sampling without replacement. Many efforts have been made to explain why gradient methods with reshuffling perform so well in practice, across different types of problems. The convergence rates for incremental gradient methods with random reshuffling in convex optimization were first explored by \citet{nedic2001convergence} and later by \citet{bertsekas2011incremental}. In recent years, a lot of focus has shifted toward strongly convex problems, with studies showing that \algname{RR} can outperform \algname{SGD}. For example, \citet{recht2012toward} were among the first to analyze this for quadratic least squares problems. 

Researchers have also managed to improve results and remove some of the earlier assumptions, such as second-order smoothness, as seen in works by \citet{jain2019sgd}, \citet{safran2021random} and \citet{mishchenko2020random}. These studies introduced a new way to account for the random permutation's variance, making it easier to analyze both convex and strongly convex cases. There have even been extensions into non-convex settings, with results under the PL condition \citep{ahn2020sgd, nguyen2021unified} and general non-convex smooth cases \citep{lu2022general, mishchenko2020random, malinovsky2023random}.   More recently, tighter lower bounds for strongly convex and PL functions have been developed \citep{cha2023tighter}.

In recent years, there’s also been growing interest in applying these reshuffling techniques to distributed and federated learning, which is crucial for training large-scale, decentralized models \citep{yun2021minibatch, malinovsky2023server, sadiev2022federated,mishchenko2022proximal, cho2023convergence, malinovsky2022federated, malinovsky2023federated, horvath2022fedshuffle}


To analyze stochastic methods, we need to make assumptions about the variance. The standard assumption is that the variance is bounded:
\begin{assumption}
\label{asm:var}
There exist nonnegative constants $\sigma\geq 0$ such that for any $W^t \in \mathbb{R}^{m\times n}$ we have,
\begin{align*}
 \frac{1}{N} \sum_{i=1}^n\left\|\nabla f_i\left(W^t\right)-\nabla f\left(W^t\right)\right\|^2 \leq \sigma^2.   
\end{align*}

\end{assumption}


The proof is provided for Left Sketch (Definition \ref{def:left}). The result for Right Sketch (Definition \ref{def:right}) can be derived by following the same steps.

We consider a method belonging to the class of data permutation methods which is the \algname{RR} algorithm. In each epoch $t$ of \algname{RR}, we sample indices $\pi_0, \pi_1, \ldots, \pi_{N-1}$ without replacement from $\{1,2, \ldots, N\}$, i.e., $\left\{\pi_0, \pi_1, \ldots, \pi_{N-1}\right\}$ is a random permutation of the set $\{1,2 \ldots N\}$ and proceed with $N$ iterates of the form:
\begin{align*}
    W^t_{i+1} &= W^t_i - \gamma H^t_B \nabla f(W^t_i).
\end{align*}
We then set $W^{t+1} = W^t_N$ , and repeat the process for a total of $T$ \algname{LoRA} blocks. We can derive the effective step:
 \begin{align}   
 \label{eq:RR}
     W^{t+1} &= W^t - \gamma H^t_B\sum^{N-1}_{i=0} \nabla f(W^t_i)= W^t - \gamma H^t_B N \hat{g}^t,
\end{align}
where $\hat{g}^t = \frac{1}{N} \sum^{N-1}_{i=0} \nabla f(W^t_i). $ 

\subsection{Analysis of general non-convex setting}
\label{sec:RR-gen}
\begin{theorem}
\label{thm:RR}
Suppose that Assumption \ref{asm:L-smooth} and Assumption \ref{asm:lambda} hold. Suppose that a stepsize $\gamma > 0$ is chosen such that $\gamma\leq \frac{1}{2LN}$. We choose the output of the method $\widetilde{W}^T$ uniformly at random from $W^0, W^1,\ldots,W^{T-1}$ Then, the iterate $\widetilde{W}^T$ of \algname{RAC-LoRA} method (Algorithm \ref{alg:RAC-LoRA}) with \algname{RR} updates (Equation \ref{eq:RR}) satisfy
\begin{align*}
 \mathbb{E}\left[\left\| \nabla f(\widetilde{W}^T)\right\|^2 \right]\leq & \frac{2}{\gamma N T} \frac{f(W^0) - f^\star }{\left( 1- \lambda_{\max} \left[ \mathbb{E}\left[I-H^t\right] \right] - \frac{1 }{4}\lambda_{\max}^{H} \right)   } \\
 &+ \frac{L^2\gamma^2 \lambda_{\max}^{H} N \sigma^2}{\left( 1- \lambda_{\max} \left[ \mathbb{E}\left[I-H^t\right] \right] - \frac{1}{4}\lambda_{\max}^{H} \right)   }   .
\end{align*}
\end{theorem}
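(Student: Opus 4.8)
The plan is to mimic the standard random-reshuffling descent argument of \citet{mishchenko2020random}, but carried out for the sketched epoch update \eqref{eq:RR} in which the fixed projection $H^t_B$ multiplies every inner gradient step. Since the sketch (hence $H^t_B$) is drawn once at the start of block $t$ and held fixed while the $N$ reshuffled inner steps $W^t_{i+1}=W^t_i-\gamma H^t_B\nabla f_{\pi_i}(W^t_i)$ are executed, I would first condition on $H^t_B$ and treat the epoch as the single outer step $W^{t+1}=W^t-\gamma N H^t_B\hat g^t$ with $\hat g^t=\frac1N\sum_{i=0}^{N-1}\nabla f_{\pi_i}(W^t_i)$. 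Applying the descent lemma implied by Assumption~\ref{asm:L-smooth} gives
\begin{align*}
f(W^{t+1})\le f(W^t)-\gamma N\langle \nabla f(W^t),H^t_B\hat g^t\rangle+\tfrac{L\gamma^2N^2}{2}\|H^t_B\hat g^t\|^2.
\end{align*}
Using $\langle g,Hg\rangle=\|Hg\|^2$ (since $H^2=H=H^\top$), I would split $\hat g^t=\nabla f(W^t)+(\hat g^t-\nabla f(W^t))$ to isolate the descent term $\|H^t_B\nabla f(W^t)\|^2$ and a cross term handled by Young's inequality, while bounding $\|H^t_B\hat g^t\|^2\le 2\|H^t_B\nabla f(W^t)\|^2+2\|\hat g^t-\nabla f(W^t)\|^2$ and absorbing the $L\gamma^2N^2$ prefactor using the stepsize restriction $\gamma\le\frac{1}{2LN}$.

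The heart of the argument, and the step I expect to be the main obstacle, is bounding the reshuffling drift $\|\hat g^t-\nabla f(W^t)\|^2$. Here I would exploit that $\{\pi_i\}$ is a full permutation, so $\frac1N\sum_i\nabla f_{\pi_i}(W^t)=\nabla f(W^t)$ and hence $\hat g^t-\nabla f(W^t)=\frac1N\sum_i[\nabla f_{\pi_i}(W^t_i)-\nabla f_{\pi_i}(W^t)]$. Jensen and $L$-smoothness then give $\|\hat g^t-\nabla f(W^t)\|^2\le\frac{L^2}{N}\sum_i\|W^t_i-W^t\|^2$, reducing everything to a bound on the inner-iterate deviation. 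Writing $W^t_i-W^t=-\gamma H^t_B\sum_{j<i}\nabla f_{\pi_j}(W^t_j)$ and using $\|H^t_B\,\cdot\|\le\|\cdot\|$, the deviation is controlled by the partial sums of sampled gradients; the delicate part is the sampling-without-replacement variance estimate, which under Assumption~\ref{asm:var} yields a bound of order $\gamma^2N^2\sigma^2$ (plus a term proportional to $\|\nabla f(W^t)\|^2$ that the stepsize restriction keeps subdominant). This is precisely where the without-replacement structure must be used carefully rather than a naive with-replacement bound.

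Having conditioned on $H^t_B$, I would then take expectation over the sketch. The descent term becomes $\mathbb{E}\|H^t_B\nabla f(W^t)\|^2=\langle\nabla f(W^t),\mathbb{E}[H^t_B]\nabla f(W^t)\rangle\ge\bigl(1-\lambda_{\max}[\mathbb{E}[I-H^t]]\bigr)\|\nabla f(W^t)\|^2$, which produces the factor $1-\lambda_{\max}[\mathbb{E}[I-H^t]]$ in the denominator, while the cross-term and drift contributions, bounded above via $\mathbb{E}\|H^t_Bx\|^2\le\lambda_{\max}^H\|x\|^2$, supply the $-\tfrac14\lambda_{\max}^H$ correction and the $L^2\gamma^2\lambda_{\max}^HN\sigma^2$ noise floor. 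Collecting terms gives a one-epoch recursion of the form
\begin{align*}
\mathbb{E}[f(W^{t+1})]\le\mathbb{E}[f(W^t)]-c\,\gamma N\,\mathbb{E}\|\nabla f(W^t)\|^2+\gamma N\cdot L^2\gamma^2\lambda_{\max}^H N\sigma^2,
\end{align*}
with $c=1-\lambda_{\max}[\mathbb{E}[I-H^t]]-\tfrac14\lambda_{\max}^H$. Summing over $t=0,\dots,T-1$, telescoping against $f(W^0)-f^\star$, dividing by $c\,\gamma N T$, and finally averaging over the uniformly random output $\widetilde W^T$ (as in \citet{danilova2022recent}) yields the claimed bound.
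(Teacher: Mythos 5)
Your proposal is correct and follows essentially the same route as the paper's proof: aggregate the epoch into the single sketched step $W^{t+1}=W^t-\gamma N H^t_B\hat g^t$, apply the descent lemma, separate a descent term governed by $\lambda_{\min}[\mathbb{E}[H]]=1-\lambda_{\max}[\mathbb{E}[I-H]]$ from an RR drift term bounded via $L$-smoothness and the without-replacement variance lemma of Mishchenko et al.\ (2020) under Assumption~\ref{asm:var}, absorb the drift's $\|\nabla f(W^t)\|^2$ contribution using $\gamma\le\frac{1}{2LN}$, then telescope and select the output uniformly at random. The only difference is cosmetic: you handle the inner product by splitting $\hat g^t=\nabla f(W^t)+(\hat g^t-\nabla f(W^t))$ with Young's inequality (which requires choosing the Young parameters carefully so the descent coefficient stays positive), whereas the paper uses the polarization identity $2\langle a,b\rangle=\|a\|^2+\|b\|^2-\|a-b\|^2$ followed by the Pythagorean split along the orthogonal ranges of $H^t_B$ and $I-H^t_B$; this affects only absolute constants, not the structure of the bound.
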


Remark: Notice that if we choose $\gamma = {\cal O}(1/T)$, the above result yields the rate ${\cal O}(1/T^2)$.

\begin{proof}
In this context, and in subsequent discussions, the notation \(\|\cdot\|\) refers to the Frobenius norm, while \(\langle \cdot \rangle\) denotes the inner product associated with the Frobenius norm.

Now we can apply the $L$-smoothness:
\begin{align*}
    f(W^{t+1}) &\leq f(W^t) + \left\langle \nabla f(W^t), W^{t+1} - W^t \right\rangle + \frac{L}{2}\left\| W^{t+1} - W^t \right\|^2\\
    & = f(W^t) + \left\langle \nabla f(W^t), -\gamma H^t_B N \hat{g}^t\right\rangle + \frac{L}{2}\left\| \gamma H^t_B N \hat{g}^t\right\|^2\\
    &=f(W^t)  -\gamma N\left\langle \nabla f(W^t), H^t_B  \hat{g}^t\right\rangle + \frac{L}{2} \gamma^2 N^2\left\| H^t_B \hat{g}^t\right\|^2\\
        &=f(W^t)  -\frac{\gamma N}{2}\left( \left\| \nabla f(W^t) \right\|^2 + \left\| H_B^t \hat{g}^t \right\|^2 - \left\| \nabla f(W^t) - H_B^t \hat{g}^t \right\|^2 \right) + \frac{L}{2} \gamma^2 N^2\left\| H_B^t \hat{g}^t\right\|^2\\
                &=f(W^t)  -\frac{\gamma N}{2}\left( \left\| \nabla f(W^t) \right\|^2 + \left\| H_B^t \hat{g}^t \right\|^2 - \left\| \nabla f(W^t) - H_B^t \hat{g}^t \right\|^2 \right) + \frac{L}{2} \gamma^2 N^2\left\| H_B^t \hat{g}^t\right\|^2\\
                & = f(W^t) - \frac{\gamma N}{2}\left\| \nabla f(W^t) \right\|^2 - \frac{\gamma N}{2}\left\| H_B^t \hat{g}^t \right\|^2\left( 1-\gamma L N \right) + \frac{\gamma N}{2} \left\| \nabla f(W^t) - H_B^t \hat{g}^t \right\|^2.
\end{align*}
Using $\gamma \leq \frac{1}{L N}$ we get 
\begin{align*}
    f(W^{t+1}) 
                & \leq f(W^t) - \frac{\gamma N}{2}\left\| \nabla f(W^t) \right\|^2 + \frac{\gamma N}{2} \left\| \nabla f(W^t) - H^t_B \hat{g}^t \right\|^2.
\end{align*}
Let us take expectation and subtract $f^\star$:
\begin{align*}
\mathbb{E}\left[    f(W^{t+1}) \mid W^t \right] - f^\star
                & \leq f(W^t) - f^\star - \frac{\gamma N}{2}\left\| \nabla f(W^t) \right\|^2 + \frac{\gamma N}{2} \mathbb{E} \left[ \left\| \nabla f(W^t) - H_B^t \hat{g}^t \right\|^2 \mid W^t \right].
\end{align*}
Let us consider the last term:
\begin{align*}
 \mathbb{E}&\left[    \left\| \nabla f(W^t) - H_B^t \hat{g}^t \right\|^2 \mid W^t \right]\\
 & =  \mathbb{E}\left[   \left\| \frac{1}{N} \sum^{N-1}_{i=0} \nabla f_{\pi_i}(W^t) - H^t_B\frac{1}{N}\sum^{N-1}_{i=0}\nabla f_{\pi_i}(W^t_i) \right\|^2 \mid W^t \right]\\
     & =  \mathbb{E}\left[\left\| \frac{1}{N} \sum^{N-1}_{i=0} \nabla f_{\pi_i}(W^t) + H^t_B\frac{1}{N} \sum^{N-1}_{i=0} \nabla f_{\pi_i}(W^t) - H^t_B\frac{1}{N} \sum^{N-1}_{i=0} \nabla f_{\pi_i}(W^t) - H_B^t \frac{1}{N}\sum^{N-1}_{i=0}\nabla f_{\pi_i}(W^t_i) \right\|^2 \mid W^t \right]
\end{align*}
Since $I-H^t_B$ and $H^t_B$ are projection matrices generating  perpendicular subspaces we have 
\begin{align*}
 \mathbb{E}&\left[    \left\| \nabla f(W^t) - H_B^t \hat{g}^t \right\|^2 \mid W^t \right]\\
     &=  \mathbb{E}\left[ \left\| \left(I - H_B^t\right) \nabla f(W^t) \right\|^2 + \left\| H_B^t \frac{1}{n} \sum^{ n-1}_{i=0}\left( \nabla f_{\pi_i}(W^t) - f_{\pi_i}(W^t_i) \right) \right\|^2 \mid W^t\right]\\
     & =  \mathbb{E}\left[ \left\langle \left(I - H_B^t\right) \nabla f(W^t) , \left(I - H_B^t\right) \nabla f(W^t) \right\rangle + \left\| H^t_B \frac{1}{N} \sum^{ N-1}_{i=0}\left( \nabla f_{\pi_i}(W^t) - f_{\pi_i}(W^t_i) \right) \right\|^2 \mid W^t\right].
\end{align*}
Using the property that $H^t_B$ and $I-H^t_B$ are projection matrices we obtain
\begin{align*}
 \mathbb{E}&\left[    \left\| \nabla f(W^t) - H^t \hat{g}^t \right\|^2 \mid W^t \right]\\
 & \leq \lambda_{\max} \left[ \mathbb{E}\left[I-H^t\right] \right] \left\| \nabla f(W^t) \right\|^2 + \mathbb{E}\left[ \lambda_{\max}[H^t] L^2 \frac{1}{N} \sum^{N-1}_{i=0} \left\| W^t - W^t_i  \right\|^2 \right].
 \end{align*}
 Since $\lambda_{\max}[H^t] = 1$ for projections matrix we get
 \begin{align*}
 \mathbb{E}&\left[    \left\| \nabla f(W^t) - H^t_B \hat{g}^t \right\|^2 \mid W^t \right] \leq \lambda_{\max} \left[ \mathbb{E}\left[I-H^t_B\right] \right] \left\| \nabla f(W^t) \right\|^2 +  L^2 \frac{1}{N} \sum^{N-1}_{i=0} \mathbb{E}\left[ \left\| W^t - W^t_i  \right\|^2 \mid W^t \right].
 \end{align*}
 Now let us consider the last term:
\begin{align*}
    \mathbb{E}\left[ \left\| W^t - W^t_k \right\|^2 \right] &= \gamma^2 \mathbb{E}\left[ \left\| \sum^{k-1}_{i=0} H^t_B \nabla f_{\pi_i}(W^t_i) \right\|^2 \mid W^t \right]\\
    & = \gamma^2  \mathbb{E}\left[ \left\| \sum^{k-1}_{i=0} H^t_B \left( \nabla f_{\pi_i}(W^t_i)  - \nabla f_{\pi_i}(W^t) \right) + \sum^{k-1}_{i=0} H^t_B \nabla f_{\pi_i}(W^t)  \right\|^2 \mid W^t\right]\\
    & \leq 2 \gamma^2 k \mathbb{E}\left[\sum^{k-1}_{i=0} \left(  \left\| H^t_B \left( \nabla f_{\pi_i}(W^t_i)  - \nabla f_{\pi_i}(W^t) \right)  \right\|^2  + 2\gamma^2 k^2 \left\| H^t_B \nabla f_{\pi_i}(W^t) \right\|^2\right)\mid W^t \right]\\
        & \leq 2 \gamma^2 k \mathbb{E}\left[ \sum^{k-1}_{i=0} \left( \lambda_{\max}\left[ H^t \right] \left\| W^t_i  - W^t   \right\|^2  + 2\gamma^2 k^2 \left\| H^t_B  \nabla f_{\pi_i}(W^t) \right\|^2\right) \mid W^t \right]\\
                & \leq 2 \gamma^2 k \mathbb{E}\left[ \sum^{k-1}_{i=0}  \left( \left\| W^t_i  - W^t   \right\|^2  + 2\gamma^2 k^2 \lambda_{\max}\left[ \mathbb{E}\left[H^t_B\right] \right]\left\|  \nabla f_{\pi_i}(W^t) \right\|^2\right)\mid W^t \right]. 
\end{align*}
Now, we are ready to sum the inequalities. By using \( \lambda_{\max}\left[ \mathbb{E}\left[H^t\right] \right] = \lambda_{\max}^{H_B} \) and applying Lemma 1 from \citet{mishchenko2020random} with Assumption \ref{asm:var}, we obtain:
 \begin{align*}
     \sum^{n-1}_{i=0}    \mathbb{E}\left[ \left\| W^t - W^t_i \right\|^2 \right]  \leq &    \mathbb{E}\left[  \sum^{N-1}_{i=0} \left( 2 \gamma^2 k \sum^{k-1}_{i=0}   \left\| W^t_i  - W^t   \right\|^2  + 2\gamma^2 k^2 \lambda_{\max}^{H_B}\left\|  \nabla f_{\pi_i}(W^t) \right\|^2 \right)\mid W^t \right]\\
      \leq & \gamma^2 L^2 N(N-1)      \sum^{N-1}_{i=0}    \mathbb{E}\left[ \left\| W^t - W^t_k \right\|^2 \right]\\
     &+ \frac{1}{3}\gamma^2(N-1)N(2N-1)\lambda_{\max}^{H_B}\left\|  \nabla f(W^t) \right\|^2 + \frac{1}{3} \lambda_{\max}^{H_B} \gamma^2 N(N+1) \sigma^2.
 \end{align*}
 Using $\gamma\leq \frac{1}{2LN}$ we get 
 \begin{align*}
          \sum^{n-1}_{i=0}    \mathbb{E}\left[ \left\| W^t - W^t_i \right\|^2 \right] &\leq \frac{4}{3} \left( 1-\gamma^2 L^2 N(N-1) \right)           \sum^{N-1}_{i=0}    \mathbb{E}\left[ \left\| W^t - W^t_i \right\|^2 \right] \\
          &\leq \frac{4}{3} \left( \frac{1}{3}\gamma^2(N-1)N(2N-1)\lambda_{\max}^{H_B}\left\|  \nabla f(W^t) \right\|^2  + \frac{1}{3} \lambda_{\max}^{H_B} \gamma^2 N(N+1) \sigma^2\right)\\
          &\leq \gamma^2 n^3 \lambda_{\max}^{H_B} \left\|  \nabla f(W^t) \right\|^2 +  \gamma^2 \lambda_{\max}^{H_B} N^2 \sigma^2
 \end{align*}
 Plugging to the previous bound we obtain:
 \begin{align*}
      \mathbb{E}\left[    \left\| \nabla f(W^t) - H^t \hat{g}^t \right\|^2 \mid W^t \right]&\leq \lambda_{\max} \left[ \mathbb{E}\left[I-H^t_B\right] \right] \left\| \nabla f(W^t) \right\|^2 +  L^2 \gamma^2 N^2 \lambda_{\max}^{H_B} \left\|  \nabla f(W^t) \right\|^2\\
      &+L^2\gamma^2 \lambda_{\max}^{H_B} N \sigma^2.
\end{align*}
Now we have the following 
\begin{align*}
\mathbb{E}\left[    f(W^{t+1}) \mid W^t \right] - f^\star
                \leq & f(W^t) - f^\star - \frac{\gamma N}{2}\left\| \nabla f(W^t) \right\|^2\\
                &+ \frac{\gamma N}{2} \left( \lambda_{\max} \left[ \mathbb{E}\left[I-H^t_B\right] \right] \left\| \nabla f(W^t) \right\|^2 +  L^2 \gamma^2 N^2 \lambda_{\max}^{H_B} \left\|  \nabla f(W^t) \right\|^2\right)\\
                &+ \frac{\gamma N}{2} L^2\gamma^2 \lambda_{\max}^{H_B} N \sigma^2.
\end{align*}
Using $\gamma\leq \frac{1}{2LN}$ we get 
\begin{align}
\label{eq:rr-pl-1}
\mathbb{E}\left[    f(W^{t+1}) \mid W^t \right] - f^\star
                \leq & f(W^t) - f^\star - \frac{\gamma N}{2}\left\| \nabla f(W^t) \right\|^2 \left( 1- \lambda_{\max} \left[ \mathbb{E}\left[I-H^t_B\right] \right] - \frac{1}{4}\lambda_{\max}^{H_B} \right)\\
                                &+ \frac{\gamma N}{2} L^2\gamma^2 \lambda_{\max}^{H_B} N \sigma^2.
\end{align}
After rearranging the terms, we have 
\begin{align*}
 \frac{\gamma N}{2}\left\| \nabla f(W^t) \right\|^2 \left( 1- \lambda_{\max} \left[ \mathbb{E}\left[I-H^t_B\right] \right] - \frac{1}{4}\lambda_{\max}^{H_B} \right)   
                \leq & \left(f(W^t) - f^\star \right) - \left(\mathbb{E}\left[    f(W^{t+1}) \mid W^t \right] - f^\star \right) \\
                                &+ \frac{\gamma N}{2} L^2\gamma^2 \lambda_{\max}^{H_B} N \sigma^2.
\end{align*}
Next, we have 
\begin{align*}
\left\| \nabla f(W^t) \right\|^2 
                \leq & \frac{2}{\gamma N} \frac{1}{\left( 1- \lambda_{\max} \left[ \mathbb{E}\left[I-H^t_B\right] \right] - \frac{1}{4}\lambda_{\max}^{H_B} \right)   } \left( \left(f(W^t) - f^\star \right) - \left(\mathbb{E}\left[    f(W^{t+1}) \mid W^t \right] - f^\star \right) \right) \\
                                &+ \frac{2}{\gamma N} \frac{1}{\left( 1- \lambda_{\max} \left[ \mathbb{E}\left[I-H^t_B\right] \right] - \frac{1}{4}\lambda_{\max}^{H_B} \right)   }   \frac{\gamma N}{2} L^2\gamma^2 \lambda_{\max}^{H_B} N \sigma^2.
\end{align*}
Using telescoping property and taking expectation we get
\begin{align*}
    \frac{1}{T} \sum^{T-1}_{t=0} \left\| \nabla f(W^t) \right\|^2 &\leq \frac{2}{\gamma N T} \frac{f(W^0) - f^\star }{\left( 1- \lambda_{\max} \left[ \mathbb{E}\left[I-H^t_B\right] \right] - \frac{1 }{4}\lambda_{\max}^{H_B} \right)   } \\
    & + \frac{L^2\gamma^2 \lambda_{\max}^{H_B} N \sigma^2}{\left( 1- \lambda_{\max} \left[ \mathbb{E}\left[I-H^t_B\right] \right] - \frac{1}{4}\lambda_{\max}^{H_B} \right)   }   .
\end{align*}
   Applying argument from \citet{danilova2022recent} we obtain the result for uniformly chosen point.
   \end{proof}

   \begin{corollary}
       \label{cor-RR-gen}
       Suppose that Assumption \ref{asm:L-smooth} and Assumption \ref{asm:lambda} hold. Suppose that a stepsize $\gamma > 0$ is chosen such that $\gamma\leq \frac{1}{2LN}$. Let the updates have a form of several gradient steps (variance $\sigma^2 = 0$)  We choose the output of the method $\widetilde{W}^T$ uniformly at random from $W^0, W^1,\ldots,W^{T-1}$ Then, the iterate $\widetilde{W}^T$ of \algname{RAC-LoRA} method (Algorithm \ref{alg:RAC-LoRA}) with several \algname{GD} updates (Equation \ref{eq:left_GD}) satisfy
\begin{align*}
 \mathbb{E}\left[\left\| \nabla f(\widetilde{W}^T)\right\|^2 \right]\leq & \frac{2}{\gamma N T} \frac{f(W^0) - f^\star }{\left( 1- \lambda_{\max} \left[ \mathbb{E}\left[I-H^t\right] \right] - \frac{1 }{4}\lambda_{\max}^{H} \right)   }.
\end{align*}
   \end{corollary}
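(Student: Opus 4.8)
The plan is to obtain Corollary \ref{cor-RR-gen} as an immediate specialization of Theorem \ref{thm:RR}, which has just been established. The key observation is that the regime described in the corollary---where the inner subproblem is solved by \emph{several full-gradient steps} rather than by reshuffled component gradients---corresponds exactly to the case in which every component $f_{\pi_i}$ appearing in the proof of Theorem \ref{thm:RR} is replaced by the full objective $f$. In this situation the gradient-dissimilarity quantity controlled by Assumption \ref{asm:var} is identically zero, so the whole analysis goes through with $\sigma^2 = 0$.

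First I would recall that the final inequality of Theorem \ref{thm:RR} is a sum of two terms: a term decaying like $1/(\gamma N T)$ that captures the optimization progress, and a second term proportional to $L^2 \gamma^2 \lambda_{\max}^{H} N \sigma^2$ that accounts for the additional variance introduced by using component gradients in the inner loop. Substituting $\sigma^2 = 0$ annihilates the second term and leaves precisely
\begin{align*}
 \mathbb{E}\left[\left\| \nabla f(\widetilde{W}^T)\right\|^2 \right]\leq \frac{2}{\gamma N T} \frac{f(W^0) - f^\star }{\left( 1- \lambda_{\max} \left[ \mathbb{E}\left[I-H^t\right] \right] - \frac{1}{4}\lambda_{\max}^{H} \right)},
\end{align*}
which is exactly the claimed bound. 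The stepsize restriction $\gamma \leq \frac{1}{2LN}$ and Assumptions \ref{asm:L-smooth} and \ref{asm:lambda} are inherited unchanged, and the output rule (choosing $\widetilde{W}^T$ uniformly from $W^0,\dots,W^{T-1}$) is identical.

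Since everything rests on a result already proved, there is essentially no real obstacle; the only point deserving a sentence of care is justifying that the full-gradient inner loop genuinely instantiates the $\sigma = 0$ case of the reshuffling analysis. This holds because running $N$ identical steps with $\nabla f$ is the reshuffling iteration for the trivial decomposition $f_i \equiv f$, for which $\tfrac{1}{N}\sum_{i=1}^{N} \|\nabla f_i(W^t) - \nabla f(W^t)\|^2 = 0$ for every $W^t$; hence Assumption \ref{asm:var} is satisfied with $\sigma = 0$ and Theorem \ref{thm:RR} applies verbatim. The resulting rate is $\mathcal{O}(1/(NT))$ in $\gamma$-free terms, matching the noiseless behavior one expects when the stochastic error is removed.
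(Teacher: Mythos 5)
Your proposal is correct and follows exactly the paper's (implicit) route: the corollary is obtained by specializing Theorem~\ref{thm:RR} with $\sigma^2 = 0$, which annihilates the variance term and leaves the stated bound. Your extra remark---that running the inner loop with full gradients is the reshuffling iteration for the trivial decomposition $f_i \equiv f$, so Assumption~\ref{asm:var} holds with $\sigma = 0$---makes explicit a justification the paper leaves unstated, and is a welcome addition.
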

   
Given that the step size is divided by the number of gradient steps allocated for each \algname{LoRA} block, employing multiple gradient steps for a single \algname{LoRA} block does not provide any significant benefits. This observation suggests that a single gradient step is adequate for each \algname{LoRA} block. Therefore, in practical applications, it is more advantageous to utilize only one epoch per \algname{LoRA} block within the training chain. This approach not only streamlines the training process but also optimizes computational efficiency, allowing for more effective resource allocation without compromising the performance of the model.

\subsection{Analysis of Polyak-Łojasiewicz setting}
\label{sec:RR-PL}
Next, we establish the convergence rate for the Polyak-Łojasiewicz setting (Assumption \ref{asm:PL}).

\begin{theorem}
\label{thm:RR-PL}
Suppose that Assumption \ref{asm:L-smooth}, Assumption \ref{asm:PL} and Assumption \ref{asm:lambda} hold. Suppose that a stepsize $\gamma \geq 0$ is chosen such that $\gamma\leq \frac{1}{2NL}$. Then, the iterates of \algname{RAC-LoRA} method (Algorithm~\ref{alg:RAC-LoRA}) with \algname{RR} updates (Equation \ref{eq:RR})  satisfy
\begin{align*}
    \mathbb{E}\left[    f(W^{T})  - f^\star\right]
                \leq & \left(1 - \gamma N \mu\left( 1- \lambda_{\max} \left[ \mathbb{E}\left[I-H^t_B\right] \right] - \frac{1}{4}\lambda_{\max}^{H_B} \right) \right)^T\mathbb{E}\left[f(W^0) - f^\star \right]\\
                                &+ \frac{L^2\gamma^2 \lambda_{\max}^{H_B} N \sigma^2}{2\left( 1- \lambda_{\max} \left[ \mathbb{E}\left[I-H^t_B\right] \right] - \frac{1}{4}\lambda_{\max}^{H_B} \right)} .
\end{align*}
\end{theorem}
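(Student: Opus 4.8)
The plan is to reuse, essentially verbatim, the one-step analysis already carried out in the proof of Theorem~\ref{thm:RR}; the genuinely hard estimates there---decomposing $\|\nabla f(W^t) - H^t_B \hat{g}^t\|^2$ into orthogonal pieces supported on $\mathrm{range}(I - H^t_B)$ and $\mathrm{range}(H^t_B)$, bounding the per-epoch reshuffling drift $\frac{1}{N}\sum_i \mathbb{E}\|W^t - W^t_i\|^2$ through Lemma~1 of \citet{mishchenko2020random} together with Assumption~\ref{asm:var}, and absorbing the resulting cross-terms via the stepsize restriction $\gamma \le \frac{1}{2LN}$---all culminate in the descent inequality~\ref{eq:rr-pl-1}. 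Since that inequality already isolates the negative curvature term $-\frac{\gamma N}{2}\|\nabla f(W^t)\|^2\, C$, with $C := 1 - \lambda_{\max}[\mathbb{E}[I - H^t_B]] - \frac{1}{4}\lambda_{\max}^{H_B}$, together with a $T$-independent variance term, the PL case reduces to a short epilogue.

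First I would fix the shorthand $C$ above, noting that $C > 0$ is precisely what makes inequality~\ref{eq:rr-pl-1} a genuine descent step (in the isotropic setting of the Remark one has $C = \lambda_{\min}^{H_B} - \tfrac14 \lambda_{\max}^{H_B} = \tfrac34 \tfrac{r}{n} > 0$). Then I would invoke Assumption~\ref{asm:PL} in the form $\|\nabla f(W^t)\|^2 \ge 2\mu\,(f(W^t) - f^\star)$ and substitute it into inequality~\ref{eq:rr-pl-1}, converting the gradient term into a multiple of the suboptimality gap and producing the conditional contraction $\mathbb{E}[f(W^{t+1}) \mid W^t] - f^\star \le (1 - \gamma N \mu C)\,(f(W^t) - f^\star) + D$, where $D := \tfrac{1}{2}\gamma^3 N^2 L^2 \lambda_{\max}^{H_B}\sigma^2$ collects the noise contribution.

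Taking full expectations and writing $e^t := \mathbb{E}[f(W^t) - f^\star]$ yields the scalar recursion $e^{t+1} \le (1 - \gamma N \mu C)\,e^t + D$, which I would unroll over $t = 0,\dots,T-1$ to obtain $e^T \le (1 - \gamma N \mu C)^T e^0 + D \sum_{k=0}^{T-1}(1 - \gamma N \mu C)^k$. The first summand is exactly the stated geometric factor acting on the initial gap; for the second I would upper-bound the finite geometric sum by its limit $\tfrac{1}{\gamma N \mu C}$, which is legitimate because $0 < 1 - \gamma N \mu C < 1$---this follows from positivity of $C$ and from $\gamma N \mu C \le \tfrac{1}{2L}\cdot L \cdot 1 = \tfrac12$, using $\gamma \le \tfrac{1}{2LN}$, $C \le 1$, and the standard fact $\mu \le L$ for an $L$-smooth PL function. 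The main (though modest) obstacle is purely bookkeeping: confirming that the contraction factor lies in $(0,1)$ so the geometric series is summable, and then carefully tracking the powers of $\gamma$, $N$, and $L$ through the limit $\tfrac{D}{\gamma N \mu C}$ to read off the additive variance term in the stated form.
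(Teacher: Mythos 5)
Your route is exactly the paper's own: start from the descent inequality~\ref{eq:rr-pl-1} established in the proof of Theorem~\ref{thm:RR}, insert the PL bound $\|\nabla f(W^t)\|^2 \geq 2\mu\left(f(W^t)-f^\star\right)$, take total expectation to obtain the scalar recursion $e^{t+1} \leq \left(1-\gamma N \mu C\right) e^t + D$ with $C = 1- \lambda_{\max}\left[ \mathbb{E}\left[I-H^t_B\right] \right] - \frac{1}{4}\lambda_{\max}^{H_B}$ and $D = \frac{1}{2}\gamma^3 N^2 L^2 \lambda_{\max}^{H_B}\sigma^2$, and unroll. You are in fact more careful than the paper in two places: you verify that the contraction factor lies in $(0,1)$ (via $\gamma N \mu C \leq \frac{1}{2}$), and you flag explicitly that $C>0$ is an additional requirement not implied by Assumption~\ref{asm:lambda} alone (the paper never states this, though its bound is vacuous otherwise).

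However, your final sentence claims that the geometric-series limit $\frac{D}{\gamma N \mu C}$ ``reads off the additive variance term in the stated form,'' and this is false as algebra:
\begin{align*}
\frac{D}{\gamma N \mu C} \;=\; \frac{\tfrac{1}{2}\gamma^3 N^2 L^2 \lambda_{\max}^{H_B}\sigma^2}{\gamma N \mu C} \;=\; \frac{L^2 \gamma^2 \lambda_{\max}^{H_B} N \sigma^2}{2\,\mu\, C},
\end{align*}
which carries an extra factor $1/\mu$ relative to the theorem's noise term $\frac{L^2\gamma^2 \lambda_{\max}^{H_B} N \sigma^2}{2C}$. Since typically $\mu \ll 1$, what your argument actually proves is strictly weaker than the statement. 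To be fair, the paper's proof commits the identical slip: it derives the very same recursion and then simply asserts the unrolled bound with $C$ rather than $\mu C$ in the denominator, with no intermediate algebra. So your proposal reproduces the paper's argument step for step, and the mismatch you would hit at the end is an inconsistency in the theorem statement itself (the noise term should be divided by $\mu$), not a defect in your plan --- but you should report that discrepancy rather than assert that the constants ``work out,'' because no correct bookkeeping along this route lands on the stated constant.
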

\begin{proof}

We start from Equation \ref{eq:rr-pl-1}:
\begin{align*}
\mathbb{E}\left[    f(W^{t+1}) \mid W^t \right] - f^\star
                \leq & f(W^t) - f^\star\\
                &- \frac{\gamma N}{2}\left\| \nabla f(W^t) \right\|^2 \left( 1- \lambda_{\max} \left[ \mathbb{E}\left[I-H^t_B\right] \right] - \frac{1}{4}\lambda_{\max}^{H_B} \right)\\
                                &+ \frac{\gamma N}{2} L^2\gamma^2 \lambda_{\max}^{H_B} N \sigma^2.
\end{align*}
       Using PL condition $\left\| \nabla f(W^t) \right\|^2 \geq 2\mu\left( f(W^t) - f^\star \right)$ we have
\begin{align*}
\mathbb{E}\left[    f(W^{t+1}) \mid W^t \right] - f^\star
                \leq & f(W^t) - f^\star\\
                &- \gamma N\mu\left( f(W^t) - f^\star \right) \left( 1- \lambda_{\max} \left[ \mathbb{E}\left[I-H^t_B\right] \right] - \frac{1}{4}\lambda_{\max}^{H_B} \right)\\
                                &+ \frac{\gamma N}{2} L^2\gamma^2 \lambda_{\max}^{H_B} N \sigma^2.
\end{align*}
Taking full expectation we obtain:
\begin{align*}
    \mathbb{E}\left[    f(W^{t+1})  - f^\star\right]
                \leq & \left(1 - \gamma N \mu\left( 1- \lambda_{\max} \left[ \mathbb{E}\left[I-H^t_B\right] \right] - \frac{1}{4}\lambda_{\max}^{H_B} \right) \right)\mathbb{E}\left[f(W^t) - f^\star \right]\\
                                &+ \frac{\gamma N}{2} L^2\gamma^2 \lambda_{\max}^{H_B} N \sigma^2.
\end{align*}
After unrolling the recursion we obtain
\begin{align*}
    \mathbb{E}\left[    f(W^{T})  - f^\star\right]
                \leq & \left(1 - \gamma N \mu\left( 1- \lambda_{\max} \left[ \mathbb{E}\left[I-H^t_B\right] \right] - \frac{1}{4}\lambda_{\max}^{H_B} \right) \right)^T\mathbb{E}\left[f(W^0) - f^\star \right]\\
                                &+ \frac{L^2\gamma^2 \lambda_{\max}^{H_B} N \sigma^2}{2\left( 1- \lambda_{\max} \left[ \mathbb{E}\left[I-H^t_B\right] \right] - \frac{1}{4}\lambda_{\max}^{H_B} \right)} .
\end{align*}
This finishes the proof.
\end{proof}

\begin{corollary}
\label{cor-RR-PL}
    Suppose that Assumption \ref{asm:L-smooth}, Assumption \ref{asm:PL} and Assumption \ref{asm:lambda} hold. Let the updates have a form of several gradient steps (variance $\sigma^2 = 0$) Suppose that a stepsize $\gamma \geq 0$ is chosen such that $\gamma\leq \frac{1}{2NL}$. Then, the iterates of \algname{RAC-LoRA} method (Algorithm \ref{alg:RAC-LoRA}) with several \algname{GD} updates (Equation \ref{eq:left_GD})  satisfy

    \begin{align*}
    \mathbb{E}\left[    f(W^{T})  - f^\star\right]
                \leq & \left(1 - \gamma N \mu\left( 1- \lambda_{\max} \left[ \mathbb{E}\left[I-H^t_B\right] \right] - \frac{1}{4}\lambda_{\max}^{H_B} \right) \right)^T\mathbb{E}\left[f(W^0) - f^\star \right].
\end{align*}
\end{corollary}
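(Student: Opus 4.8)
The plan is to obtain Corollary~\ref{cor-RR-PL} as a direct specialization of Theorem~\ref{thm:RR-PL} to the noiseless (deterministic) regime, without re-deriving any estimate from scratch. Comparing the two statements, the only discrepancy is the additive term $\frac{L^2\gamma^2 \lambda_{\max}^{H_B} N \sigma^2}{2(1- \lambda_{\max}[\mathbb{E}[I-H^t_B]] - \frac{1}{4}\lambda_{\max}^{H_B})}$, which is proportional to $\sigma^2$. Hence the entire task reduces to arguing that, when each \algname{LoRA} block is optimized with several \emph{full}-gradient steps rather than with reshuffled component gradients, the relevant gradient variance is exactly zero, i.e.\ Assumption~\ref{asm:var} is satisfied with $\sigma^2=0$.

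To justify this, I would revisit the inner iteration underlying the \algname{RR} analysis, $W^t_{i+1}=W^t_i-\gamma H^t_B \nabla f_{\pi_i}(W^t_i)$, and replace the sampled component gradient $\nabla f_{\pi_i}$ by the exact full gradient $\nabla f$, as prescribed by ``several \algname{GD} updates (Equation~\ref{eq:left_GD})''. Then the deviation $\nabla f_{\pi_i}(W^t)-\nabla f(W^t)$ that the variance bound $\frac{1}{N}\sum_i\|\nabla f_i(W^t)-\nabla f(W^t)\|^2\le\sigma^2$ controls becomes identically zero, so $\sigma^2=0$ is admissible. Crucially, the remaining error terms in the proof of Theorem~\ref{thm:RR-PL} — those arising from $\nabla f(W^t_i)-\nabla f(W^t)$ and bounded through $L$-smoothness via $\|W^t-W^t_i\|^2$ — are untouched by this substitution, since the inner iterates $W^t_i$ still move. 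Consequently the full chain of inequalities up to Equation~\ref{eq:rr-pl-1} carries over verbatim with $\sigma^2$ set to $0$.

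With this identification in place, I would simply propagate $\sigma^2=0$ through the one-step recursion of Theorem~\ref{thm:RR-PL}. The per-step estimate collapses to the pure contraction $\mathbb{E}[f(W^{t+1})-f^\star]\le \bigl(1-\gamma N\mu(1-\lambda_{\max}[\mathbb{E}[I-H^t_B]]-\tfrac{1}{4}\lambda_{\max}^{H_B})\bigr)\,\mathbb{E}[f(W^t)-f^\star]$, and unrolling over $t=0,\dots,T-1$ yields exactly the claimed geometric rate. The stepsize restriction $\gamma\le\frac{1}{2NL}$ and Assumption~\ref{asm:lambda} are inherited unchanged from the parent theorem; whenever the bracketed coefficient is positive (as it is, for instance, in the isotropic setting of the Remark, where $\mathbb{E}[H]=\frac{r}{n}I$), the factor lies in $[0,1)$ and the bound is a genuine contraction.

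The main, and only mild, obstacle is the bookkeeping of the second paragraph: one must be precise that ``several gradient steps'' is modeled as the $\sigma^2=0$ instance of the reshuffling template, not as a single aggregated step. The temptation is to collapse the $N$ inner full-gradient steps into one update $W^{t+1}=W^t-\gamma N H^t_B\nabla f(W^t)$, which is \emph{not} exact because the inner iterates drift and the $L$-smoothness terms remain active. Once it is made explicit that only the sampling-noise contribution vanishes while the smoothness-induced terms survive, no new inequality is required and the corollary follows immediately.
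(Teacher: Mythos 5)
Your proposal is correct and matches the paper's treatment exactly: the paper gives no separate proof of Corollary~\ref{cor-RR-PL}, obtaining it precisely as you do, by reading ``several \algname{GD} updates'' as the Random Reshuffling scheme with every component gradient equal to the full gradient (so Assumption~\ref{asm:var} holds with $\sigma^2=0$) and then setting $\sigma^2=0$ in the bound of Theorem~\ref{thm:RR-PL}. Your caveat that the inner-iterate drift terms survive (so the $N$ steps cannot be collapsed into one aggregated step) is also the right reading of the parent theorem's proof.
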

Since the step size is divided by the number of gradient steps for each \algname{LoRA} block, using multiple gradient steps does not offer significant advantages. Thus, a single gradient step per \algname{LoRA} block is sufficient. Practically, it is more efficient to use only one epoch per \algname{LoRA} block in the training chain.

\clearpage
\section{Analysis of RAC-LoRA with SGD under the arbitrary data sampling paradigm}\label{sec:SGD}

In the previous section, we introduced the Random Reshuffling (\algname{RR}) method, where each data point is used exactly once during each epoch, also known as sampling without replacement. This method has demonstrated strong empirical performance across various optimization tasks. However, in this section, we shift our focus to the \algname{RAC-LoRA} framework, where Stochastic Gradient Descent (\algname{SGD}) is applied with a more general, arbitrary data sampling procedure, allowing for broader flexibility in how data is selected and used during training.

The analysis of general sampling schemes in \algname{SGD} has garnered significant attention in the literature, particularly in understanding its impact on convergence rates and optimization performance across different problem classes. For strongly convex functions, general sampling methods have been rigorously studied in works such as \citet{gower2019sgd}, which provide detailed convergence guarantees and bounds. In the case of general convex optimization problems, \citet{khaled2023unified} offer a thorough analysis of the performance of \algname{SGD} under various sampling strategies. Furthermore, for non-convex settings, both \citet{khaled2020better} and \citet{demidovich2023guide} have explored how general sampling procedures influence the convergence behavior and optimization efficiency of \algname{SGD}, shedding light on its applicability to a wide range of machine learning tasks.

In the following sections, we build on these foundational studies to examine how the flexibility of general sampling in the \algname{RAC-LoRA} framework can lead to improved convergence in certain scenarios, while also maintaining robust performance across different convexity settings.

To conduct this analysis, we introduce a general assumption that extends the standard assumptions presented in \citet{khaled2020better}. 

The proof is provided for Right Sketch (Definition \ref{def:right}). The result for Left Sketch (Definition \ref{def:left}) can be derived by following the same steps.

\begin{assumption}[Expected smoothness] The second moment of the stochastic gradient satisfies
\label{asm:ABC}
$$
\mathbb{E}\left[\|g(W)\|^2\right] \leq 2 A_1\left(f(W)-f^{\mathrm{inf}}\right)+B_1 \cdot\|\nabla f(W)\|^2+C_1
$$

for some $A, B, C \geq 0$ and all $W \in \mathbb{R}^{m\times n}$.
\end{assumption}

Now we can also do stochastic analysis. Let us consider the \algname{SGD} update for \algname{LoRA} method:
$$ \Delta W = \frac{\alpha}{r} \hat{B} A_{S}, $$
\begin{align*}
W^{t+1} = W^t + \frac{\alpha}{r} \hat{B}^t A_{S}^t, \qquad \hat{B}^t = -\gamma g(W^t)(A^t_{S})^\top \left(A^t_{S} (A^t_{S})^\top\right)^\dagger
\end{align*}
Now we have 
\begin{align}
\label{eq:SGD}
    W^{t+1} &= W^t  -\gamma g(W^t)(A^t_{S})^\top \left(A^t_{S}(A^t_{S})^\top\right)^\dagger A^t_{S}
    = W^t  -\gamma  g(W^t){H}^t_A.
\end{align}
\subsection{Analysis of general non-convex setting}
\label{sec:SGD-gen}
\begin{theorem}
\label{thm:SGD-gen}
Suppose that Assumption \ref{asm:L-smooth} and Assumption \ref{asm:lambda} hold. Suppose that a stepsize $\gamma > 0$ is chosen such that $\gamma\leq \min \left[ 1 / \sqrt{L A_1 \lambda_{\max}^{H} T}, 1/\left(LB_1 \frac{\lambda_{\max}^{H_A} }{\lambda_{\min}^{H_A} }\right) \right]$. Then, the iterate ${W}^T$ of \algname{RAC-LoRA} method (Algorithm \ref{alg:RAC-LoRA}) with \algname{SGD} updates (Equation \ref{eq:SGD}) satisfy
    \begin{align*}
 \min_{0\leq t \leq T-1} \mathbb{E} \left[ \left\| \nabla f(W^T) \right\|^2 \right] \leq  \frac{6} { \lambda_{\min}^{H_A} \gamma T} \left( f(W^0) - f^\star \right)+ LC_1 \gamma\frac{\lambda_{\max}^{H_A} }{\lambda_{\min}^{H_A} }.
\end{align*}
\end{theorem}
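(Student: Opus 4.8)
The plan is to reuse the descent-lemma template from the proofs of Theorems~\ref{thm:GD} and~\ref{thm:RR}, but with two new ingredients: careful bookkeeping of the coupling between the random projection $H^t_A$ and the stochastic gradient $g(W^t)$, and absorption of the extra $A_1$-term from the expected-smoothness bound (Assumption~\ref{asm:ABC}) by means of the horizon-dependent stepsize. First I would apply Assumption~\ref{asm:L-smooth} to the update $W^{t+1}=W^t-\gamma\, g(W^t) H^t_A$ of Equation~\ref{eq:SGD}, obtaining
\begin{align*}
f(W^{t+1}) \leq f(W^t) - \gamma \left\langle \nabla f(W^t),\, g(W^t) H^t_A\right\rangle + \tfrac{L\gamma^2}{2}\left\|g(W^t)H^t_A\right\|^2 .
\end{align*}
The structural point is that the sketch $A^t_S$ (hence $H^t_A$) is drawn independently of the data sample defining $g(W^t)$, so once we condition on $W^t$ the two randomness sources factor.

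Using unbiasedness $\mathbb{E}[g(W^t)\mid W^t]=\nabla f(W^t)$ together with the trace inequality $\Tr(SM)\geq \lambda_{\min}(M)\Tr(S)$ valid for positive semidefinite $S$ and symmetric $M$, I would lower-bound the cross term by
\begin{align*}
\mathbb{E}\!\left[\left\langle \nabla f(W^t), g(W^t)H^t_A\right\rangle \mid W^t\right] = \left\langle \nabla f(W^t),\, \nabla f(W^t)\,\mathbb{E}[H^t_A]\right\rangle \geq \lambda_{\min}^{H_A}\left\|\nabla f(W^t)\right\|^2 .
\end{align*}
For the quadratic term I would exploit that $H^t_A$ is a symmetric projection, so $\|gH^t_A\|^2=\Tr(g^\top g\, H^t_A)$, then take conditional expectation (independence again), apply $\Tr(SM)\leq\lambda_{\max}(M)\Tr(S)$, and invoke Assumption~\ref{asm:ABC}:
\begin{align*}
\mathbb{E}\!\left[\left\|g(W^t)H^t_A\right\|^2\mid W^t\right]\leq \lambda_{\max}^{H_A}\left(2A_1\left(f(W^t)-f^\star\right)+B_1\left\|\nabla f(W^t)\right\|^2+C_1\right).
\end{align*}

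Substituting both bounds and subtracting $f^\star$ yields a one-step recursion $\delta_{t+1}\leq (1+a)\delta_t - b\,r_t + c$, where $\delta_t=\mathbb{E}[f(W^t)-f^\star]$, $r_t=\mathbb{E}\|\nabla f(W^t)\|^2$, $a=L\gamma^2\lambda_{\max}^{H_A}A_1$, $c=\tfrac{L\gamma^2}{2}\lambda_{\max}^{H_A}C_1$, and the gradient coefficient is $b=\gamma(\lambda_{\min}^{H_A}-\tfrac{L\gamma}{2}\lambda_{\max}^{H_A}B_1)$. The first branch of the stepsize bound, $\gamma\leq \lambda_{\min}^{H_A}/(LB_1\lambda_{\max}^{H_A})$, forces $\tfrac{L\gamma}{2}\lambda_{\max}^{H_A}B_1\leq \tfrac12\lambda_{\min}^{H_A}$, so $b\geq \tfrac{\gamma}{2}\lambda_{\min}^{H_A}>0$ and the gradient term remains usefully negative.

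The main obstacle — and the reason the stepsize must depend on $T$ — is the multiplicative factor $(1+a)$ coming from the $A_1$ term, which blocks a naive telescoping of the recursion. Following the nonconvex expected-smoothness technique of \citet{khaled2020better}, I would divide the recursion by $(1+a)^{t+1}$ and telescope, producing $b\sum_{t=0}^{T-1}(1+a)^{-(t+1)}r_t\leq \delta_0 + c\sum_{t=0}^{T-1}(1+a)^{-(t+1)}$. The second branch $\gamma\leq 1/\sqrt{LA_1\lambda_{\max}^{H_A}T}$ guarantees $aT\leq 1$, whence every weight obeys $(1+a)^{-(t+1)}\geq (1+a)^{-T}\geq e^{-1}\geq \tfrac13$; hence the left-hand weighted sum is at least $b\,\tfrac{T}{3}\min_t r_t$, while the right-hand side is at most $\delta_0 + cT$. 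Dividing gives $\min_{0\leq t\leq T-1} r_t \leq \tfrac{3\delta_0}{bT}+\tfrac{3c}{b}$, and inserting $b\geq \tfrac{\gamma}{2}\lambda_{\min}^{H_A}$ together with the definitions of $c$ and $\delta_0$ recovers the claimed leading term $\tfrac{6}{\lambda_{\min}^{H_A}\gamma T}(f(W^0)-f^\star)$ and the noise floor proportional to $LC_1\gamma\,\lambda_{\max}^{H_A}/\lambda_{\min}^{H_A}$ (the precise displayed constants following from the detailed inequalities). The only genuinely delicate points are justifying the conditional-independence factorization of $H^t_A$ from $g(W^t)$ and controlling the $(1+a)^t$ growth through the $T$-dependent stepsize; the remaining steps are routine.
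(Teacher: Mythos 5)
Your proposal is correct and follows essentially the same route as the paper's proof: the same descent-lemma start, the same independence and eigenvalue bounds giving the recursion $\delta_{t+1}\leq(1+a)\delta_t-b\,r_t+c$, the same stepsize conditions forcing $b\geq\tfrac{\gamma}{2}\lambda_{\min}^{H_A}$ and $aT\leq 1$, and the same discounted-weight telescoping (the paper's weights $w^t=w^{t-1}/(1+a)$ are exactly your $(1+a)^{-(t+1)}$) with $(1+a)^T\leq\exp(1)\leq 3$. The one place you are slightly cruder is the final normalization—lower-bounding the weighted sum by $T/3$ on the left while upper-bounding it by $T$ on the right yields a noise floor of $3LC_1\gamma\,\lambda_{\max}^{H_A}/\lambda_{\min}^{H_A}$; to recover the theorem's exact constant, divide through by the full weighted sum $\sum_{t=0}^{T-1}(1+a)^{-(t+1)}$ as the paper does, so that the $C_1$ term's weights cancel identically and only the $\delta_0$ term picks up the factor $(1+a)^T\leq 3$.
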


Remark: Notice that if we choose $\gamma = {\cal O}(1/\sqrt{T})$, the above result yields the rate ${\cal O}(1/\sqrt{T})$.

\begin{proof}
We start from $L$-smoothness:
\begin{align*}
    f(W^{t+1}) \leq & f(W^t) + \left\langle \nabla f(W^t), W^{t+1} - W^t \right\rangle + \frac{L}{2}\left\| W^{t+1} - W^t \right\|^2\\
    =& f(W^t) + \left\langle \nabla f(W^t), -\gamma g(W^t) {H}^t_A \right\rangle + \frac{L}{2}\left\| -\gamma g(W^t) {H}^t_A \right\|^2\\
    =& f(W^t) -\gamma  \left\langle \nabla f(W^t), g(W^t) {H}^t_A\right\rangle + \frac{L}{2}\left\| -\gamma g(W^t) {H}^t_A \right\|^2.
\end{align*}
Let us take conditional expectation:
\begin{align*}
    \mathbb{E}\left[f(W^{t+1}) \mid W^t \right] \leq & f(W^t) -\gamma  \mathbb{E}\left[\left\langle \nabla f(W^t), g(W^t) {H}^t_A\right\rangle \mid W^t\right]\\
    &+ \frac{L}{2}\mathbb{E}\left[\left\| -\gamma g(W^t) {H}^t_A \right\|^2 \mid W^t \right].
\end{align*}
Using that $g(W^t)$ and ${H}^t_A$ are independent, so we have 
\begin{align*}
    \mathbb{E}\left[f(W^{t+1}) \mid W^t \right] \leq & f(W^t) -\gamma  \left\langle \nabla f(W^t), \mathbb{E}\left[g(W^t) \right] \mathbb{E}\left[{H}^t_A \right]\right\rangle\\
    &+ \frac{L}{2}\mathbb{E}\left[\left\| -\gamma g(W^t) {H}^t_A \right\|^2 \mid W^t  \right]\\
     \leq & f(W^t) -\gamma  \left\langle \nabla f(W^t), \mathbb{E}\left[g(W^t) \right] \mathbb{E}\left[{H}^t_A\right]\right\rangle\\
     &+ \gamma^2\frac{L}{2}\mathbb{E}\left[\left\langle  g(W^t) {H}^t_A ,  g(W^t) {H}^t_A \right\rangle\mid W^t  \right]\\
     \leq & f(W^t) -\gamma \lambda_{\min}\left[\mathbb{E}\left[{H}^t_A \right]\right] \left\| \nabla f(W^t) \right\|^2\\
     &+ \gamma^2\frac{L}{2}\mathbb{E}\left[\left\langle  g(W^t) {H}^t_A ,  g(W^t) {H}^t_A \right\rangle\mid W^t  \right].
\end{align*}
Using the property of projection matrix ${H}^t_A$, we have 
\begin{align*}
    \mathbb{E}\left[f(W^{t+1}) \mid W^t \right]
     \leq & f(W^t) -\gamma \lambda_{\min}\left[\mathbb{E}\left[{H}^t_A \right]\right] \left\| \nabla f(W^t) \right\|^2\\
     &+ \gamma^2\frac{L}{2} \lambda_{\max}\left[ \mathbb{E}\left[ {H}^t_A  \right] \right]\mathbb{E}\left[ \left\|  g(W^t)  \right\|^2 \right] .
\end{align*}
Now we need to use assumption on stochastic gradients. We will use the most general assumption: ABC -- assumption:
\begin{align*}
    \mathbb{E}\left[ \|g(W^t)\|^2 \right] \leq 2A_1(f(W^t) - f^\star) + B_1\left\|\nabla f(W^t) \right\|^2 +C_1.
\end{align*}
Now we have 
\begin{align*}
        \mathbb{E}\left[f(W^{t+1}) \mid W^t \right] - f^\star
     \leq & f(W^t) - f^\star -\gamma \lambda_{\min}\left[\mathbb{E}\left[{H}^t_A \right]\right] \left\| \nabla f(W^t) \right\|^2\\
     +& \gamma^2\frac{L}{2} \lambda_{\max}\left[ \mathbb{E}\left[ {H}^t_A  \right] \right]\left(2A_1(f(W^t) - f^\star) + B_1\left\|\nabla f(W^t) \right\|^2 +C_1.\right) .
\end{align*}
Combining these terms together we get 
\begin{align}
\label{eq:sgd-ref}
\notag        \mathbb{E}\left[f(W^{t+1}) \mid W^t \right] - f^\star
     \leq & \left(f(W^t) - f^\star\right) \left( 1+\gamma^2 A_1 L \lambda_{\max}\left[\mathbb{E}\left[{H}^t_A \right]\right]  \right)\\
  \notag   &-\gamma \lambda_{\min}\left[\mathbb{E}\left[{H}^t_A \right]\right] \left\| \nabla f(W^t) \right\|^2\left(1 - \gamma \frac{L}{2} \frac{\lambda_{\max}\left[\mathbb{E}\left[{H}^t_A \right]\right] }{\lambda_{\min}\left[\mathbb{E}\left[{H}^t_A \right]\right] } B_1 \right)\\
     &+ \gamma^2\frac{L}{2} \lambda_{\max}\left[ \mathbb{E}\left[ {H}^t_A  \right] \right]C_1 .
\end{align}
Using condition on stepsize: $1 - \gamma \frac{LB_1}{2} \frac{\lambda_{\max}\left[\mathbb{E}\left[{H}^t_A \right]\right] }{\lambda_{\min}\left[\mathbb{E}\left[{H}^t_A \right]\right] }  \geq \frac{1}{2}$ we get 
\begin{align*}
        \mathbb{E}\left[f(W^{t+1}) \mid W^t \right] - f^\star
     \leq & \left(f(W^t) - f^\star\right) \left( 1+\gamma^2 A_1 L \lambda_{\max}\left[\mathbb{E}\left[{H}^t_A \right]\right]  \right)\\
     &-\frac{1}{2}\gamma \lambda_{\min}\left[\mathbb{E}\left[{H}^t_A \right]\right] \left\| \nabla f(W^t) \right\|^2\\
     &+ \gamma^2\frac{L}{2} \lambda_{\max}\left[ \mathbb{E}\left[ {H}^t_A \right] \right]C_1 .
\end{align*}
Using tower property of expectation we obtain 
\begin{align*}
        \mathbb{E}\left[f(W^{t+1})  - f^\star \right]
     \leq & \mathbb{E}\left[f(W^t) - f^\star \right] \left( 1+\gamma^2 A_1 L \lambda_{\max}\left[\mathbb{E}\left[{H}^t_A\right]\right]  \right)\\
     &-\frac{1}{2}\gamma \lambda_{\min}\left[\mathbb{E}\left[{H}^t_A \right]\right] \mathbb{E} \left[ \left\| \nabla f(W^t) \right\|^2 \right]\\
     &+ \gamma^2\frac{L}{2} \lambda_{\max}\left[ \mathbb{E}\left[ {H}^t_A  \right] \right]C_1 .
\end{align*}
Let us define $\delta^t = \mathbb{E}\left[ f(W^t) - f^\star \right]$ and $r^t = \mathbb{E} \left[ \left\| \nabla f(W^t) \right\|^2 \right]$, after reshuffling of terms we obtain
\begin{align*}
         \frac{1}{2}\gamma \lambda_{\min}\left[\mathbb{E}\left[{H}^t_A\right]\right] \mathbb{E} \left[ \left\| \nabla f(W^t) \right\|^2 \right] \leq & \left( 1+\gamma^2 A_1 L \lambda_{\max}\left[\mathbb{E}\left[{H}^t_A\right]\right]  \right) \delta^t - \delta^{t+1}\\
         &+ \gamma^2\frac{LC_1}{2} \lambda_{\max}\left[\mathbb{E}\left[{H}^t_A \right]\right].
\end{align*}
Let use fix $w^{-1}>0$ and define $w^t = \frac{w^{t-1}}{1+L\gamma^2 A \lambda_{\max}\left[\mathbb{E}\left[{H}^t_A \right]\right] }$ for all $t\geq 0$. Multiplying by $\frac{w^t}{\gamma }$, 
\begin{align*}
         \frac{1}{2} w^t r^t \lambda_{\min}\left[\mathbb{E}\left[{H}^t_A\right]\right]  \leq & \frac{w^t}{\gamma}\left( 1+\gamma^2 A_1 L \lambda_{\max}\left[\mathbb{E}\left[{H}^t_A\right]\right]  \right) \delta^t - \frac{w^t}{\gamma}\delta^{t+1} + \gamma\frac{LC_1}{2} \lambda_{\max}\left[\mathbb{E}\left[{H}^t_A \right]\right].
\end{align*}
Now we obtain 
\begin{align*}
         \frac{1}{2} w^t r^t \lambda_{\min}\left[\mathbb{E}\left[{H}^t_A\right]\right]  \leq & \frac{w^{t-1}}{\gamma}\delta^t - \frac{w^t}{\gamma}\delta^{t+1} + \gamma\frac{LC_1}{2} \lambda_{\max}\left[\mathbb{E}\left[{H}^t_A\right]\right] w^t.
\end{align*}
Summing up both sides as $t = 0,1,\ldots, T-1$ we have, 
\begin{align*}
         \frac{1}{2} \sum^{T-1}_{t=0} w^t r^t \lambda_{\min}\left[\mathbb{E}\left[{H}^t_A\right]\right]  \leq & \frac{w^{-1}}{\gamma}\delta^0 - \frac{w^{T-1}}{\gamma}\delta^{T} + \gamma\frac{LC_1}{2} \lambda_{\max}\left[\mathbb{E}\left[{H}^t_A\right]\right] \sum^{T-1}_{t=0} w^t\\
         \leq & \frac{w^{-1}}{\gamma}\delta^0 + \gamma\frac{LC_1}{2} \lambda_{\max}\left[\mathbb{E}\left[{H}^t_A\right]\right] \sum^{T-1}_{t=0} w^t.
\end{align*}
Let us define $W^T = \sum^{T-1}_{t=0} w^t$. Dividing both sides by $W^T$ we have, 
\begin{align*}
    \frac{1}{2} \min_{0\leq t \leq T-1} r^t\leq \frac{1}{W^T} \sum^{T-1}_{t=0} w^t r^t \leq \frac{w^{-1}}{W^T} \frac{\delta^0}{\gamma} \frac{1}{\lambda_{\min}\left[\mathbb{E}\left[{H}^t_A\right]\right] } + \frac{LC_1 \gamma}{2} \frac{\lambda_{\max}\left[\mathbb{E}\left[{H}^t_A\right]\right] }{\lambda_{\min}\left[\mathbb{E}\left[{H}^t_A\right]\right] }.
\end{align*}
Note that, 
\begin{align*}
    W^T = \sum^{T-1}_{t=0} w^t \geq \sum^{T-1}_{t=0}  \min_{0\leq i \leq T-1} w^i = T w^{T-1} = \frac{T w^{-1}}{\left(1+L\gamma^2 A \lambda_{\max}\left[\mathbb{E}\left[{H}^t_A \right]\right] \right)^T}.
\end{align*}
Using this we get
\begin{align*}
        \frac{1}{2} \min_{0\leq t \leq T-1} r^t \leq  \frac{\left(1+L\gamma^2 A_1 \lambda_{\max}\left[\mathbb{E}\left[{H}^t_A\right]\right] \right)^T} { \lambda_{\min}\left[\mathbb{E}\left[{H}^t_A \right]\right] \gamma T} \delta^0+ \frac{LC_1 \gamma}{2} \frac{\lambda_{\max}\left[\mathbb{E}\left[{H}^t_A\right]\right] }{\lambda_{\min}\left[\mathbb{E}\left[{H}^t_A\right]\right] }.
\end{align*}

Using the fact that $1+x \leq \exp (x)$, we have that

$$
\left(1+L \gamma^2 A_1 \lambda_{\max}\left[\mathbb{E}\left[{H}^t_A\right]\right]  \right)^T\leq \exp \left(L \gamma^2 A_1 \lambda_{\max}\left[\mathbb{E}\left[{H}^t_A\right]\right] T\right) \leq \exp (1) \leq 3
$$

where the second inequality holds because $\gamma \leq 1 / \sqrt{L A_1 \lambda_{\max}\left[\mathbb{E}\left[{H}^t_A\right]\right] T}$ by assumption. Substituting we get,

\begin{align*}
 \min_{0\leq t \leq T-1} r^t \leq  \frac{6} { \lambda_{\min}\left[\mathbb{E}\left[{H}^t_A \right]\right] \gamma T} \left( f(W^0) - f^\star \right)+ LC_1 \gamma\frac{\lambda_{\max}^{H_A} }{\lambda_{\min}^{H_A} }.
\end{align*}

\end{proof}

\subsection{Analysis of Polyak-Łojasiewicz setting }
\label{sec:SGD-PL}
In this section we provide analysis of \algname{RAC-LoRA} method with general \algname{SGD} update under Polyak-Łojasiewicz condition (Assumption~\ref{asm:PL}).

\begin{theorem}
\label{thm:SGD-PL}
Suppose that Assumption \ref{asm:L-smooth}, Assumption \ref{asm:PL} and Assumption \ref{asm:lambda} hold. Suppose that a stepsize $\gamma \geq 0$ is chosen such that $\gamma \leq \min \left[\frac{\mu}{2 A_1 L \frac{\lambda_{\max}^{H_A} }{\lambda_{\min}^{H_A} }},  1/\left(LB_1 \frac{\lambda_{\max}^{H_A} }{\lambda_{\min}^{H_A} }\right)\right]$. Then, the iterates of \algname{RAC-LoRA} method (Algorithm \ref{alg:RAC-LoRA}) with \algname{SGD} updates (Equation \ref{eq:SGD})  satisfy
     \begin{align*}
         \mathbb{E}\left[ f(W^{T}) \right] -f^\star  \leq  \left(1 - \gamma \mu \lambda^H_{\min}\right)^T\left( f(W^0) - f^\star \right).
     \end{align*}
\end{theorem}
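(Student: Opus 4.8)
The plan is to mirror the structure of the proof of Theorem~\ref{thm:PL-GD}, but to start from the refined one-step descent inequality already established for the stochastic case rather than re-deriving it. Concretely, I would begin from Equation~\ref{eq:sgd-ref}, which was obtained en route to Theorem~\ref{thm:SGD-gen} by combining $L$-smoothness, the independence of $g(W^t)$ and $H^t_A$, the projection identities for $H^t_A$, and the ABC-type Assumption~\ref{asm:ABC}. This inequality conveniently isolates three effects: a multiplicative growth factor $1+\gamma^2 A_1 L \lambda_{\max}^{H_A}$ acting on $f(W^t)-f^\star$, a descent term proportional to $\|\nabla f(W^t)\|^2$, and an additive noise term proportional to $\gamma^2 C_1$.

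The first step is to neutralize the coefficient multiplying the gradient norm. Using the stepsize bound $\gamma \leq 1/\left(L B_1 \lambda_{\max}^{H_A}/\lambda_{\min}^{H_A}\right)$, I would verify that $1 - \gamma \frac{L}{2}\frac{\lambda_{\max}^{H_A}}{\lambda_{\min}^{H_A}} B_1 \geq \tfrac{1}{2}$, exactly as in the proof of Theorem~\ref{thm:SGD-gen}, so that the descent term is at least $\tfrac{1}{2}\gamma \lambda_{\min}^{H_A}\|\nabla f(W^t)\|^2$. Next I would invoke the PL inequality (Assumption~\ref{asm:PL}) in the form $\|\nabla f(W^t)\|^2 \geq 2\mu\left(f(W^t)-f^\star\right)$ to convert this descent term into a contraction on the suboptimality gap, rewriting the whole bound in terms of $f(W^t)-f^\star$ alone.

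The crucial step is then to balance the multiplicative growth against the contraction using the second stepsize condition $\gamma \leq \mu/\left(2A_1 L \lambda_{\max}^{H_A}/\lambda_{\min}^{H_A}\right)$. This bound guarantees $\gamma^2 A_1 L \lambda_{\max}^{H_A} \leq \tfrac{1}{2}\gamma\mu\lambda_{\min}^{H_A}$, so that the net per-step factor $1 + \gamma^2 A_1 L \lambda_{\max}^{H_A} - \gamma\mu\lambda_{\min}^{H_A}$ collapses to a genuine contraction of the form $1 - c\,\gamma\mu\lambda_{\min}^{H_A}$ with a constant $c \in (0,1]$. Taking full expectations via the tower property and unrolling the resulting recursion over $t = 0,\ldots,T-1$ then yields the claimed geometric decay of $\mathbb{E}[f(W^T)]-f^\star$.

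The main obstacle I anticipate is the interaction between the multiplicative growth factor coming from Assumption~\ref{asm:ABC} and the desired contraction: unlike the deterministic case of Theorem~\ref{thm:PL-GD}, a single stepsize restriction does not suffice, and one must use \emph{both} bounds appearing in the stated $\min$ simultaneously. A related subtlety is the additive variance term proportional to $\gamma^2 C_1 \lambda_{\max}^{H_A}$; summing its geometric tail produces a neighborhood-of-convergence floor of order $\gamma C_1 L \lambda_{\max}^{H_A}/(\mu\lambda_{\min}^{H_A})$, so the clean linear rate asserted in the theorem is exactly the specialization $C_1 = 0$ (for instance, several exact-gradient steps per \algname{LoRA} block), precisely as for Corollary~\ref{cor-RR-PL} in the reshuffling analysis.
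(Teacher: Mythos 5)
Your proposal follows the paper's own proof essentially step for step: start from Equation~\ref{eq:sgd-ref}, use the $B_1$ part of the stepsize condition to lower-bound the coefficient of the gradient-norm term, apply the PL inequality, use the $A_1$ part of the stepsize condition to dominate the multiplicative growth factor, then take full expectations and unroll the recursion. The route, the key inequality, and the role of each of the two stepsize constraints are identical.

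Two points of bookkeeping are worth flagging, and on both your version is the more careful one. First, the constant: the stated stepsize $\gamma \leq 1/\left(LB_1 \frac{\lambda_{\max}^{H_A}}{\lambda_{\min}^{H_A}}\right)$ only guarantees
$1 - \gamma \tfrac{L}{2}\tfrac{\lambda_{\max}^{H_A}}{\lambda_{\min}^{H_A}}B_1 \geq \tfrac{1}{2}$,
which is what you use; combined with $\gamma^2 A_1 L \lambda_{\max}^{H_A} \leq \tfrac{1}{2}\gamma\mu\lambda_{\min}^{H_A}$ this yields the per-step factor $1 - \tfrac{1}{2}\gamma\mu\lambda_{\min}^{H_A}$, i.e.\ your $c = \tfrac12$. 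The paper instead asserts the coefficient is $\geq \tfrac{3}{4}$ — which would require a stepsize twice as small as the one stated — and it is precisely this step that lets it land on the theorem's claimed factor $1 - \gamma\mu\lambda_{\min}^{H}$ with $c = 1$. So your argument proves linear convergence at a rate weaker by a factor of two in the exponent, while the paper's exact constant rests on an inequality its own stepsize condition does not justify; tightening the stepsize by a factor of two repairs either version. Second, the noise floor: the paper's proof, like yours, ends with the additive term $\gamma\frac{L}{2\mu\lambda_{\min}^{H_A}}\lambda_{\max}^{H_A}C_1$, which is simply absent from the theorem statement; your observation that the clean geometric rate is exactly the specialization $C_1 = 0$ is the correct reading of what is actually established.
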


\begin{proof}
    
We start from \ref{eq:sgd-ref}:
\begin{align*}
        \mathbb{E}\left[f(W^{t+1}) \mid W^t \right] - f^\star
     \leq & \left(f(W^t) - f^\star\right) \left( 1+\gamma^2 A_1 L \lambda_{\max}\left[\mathbb{E}\left[{H}^t_A \right]\right]  \right)\\
     &-\gamma \lambda_{\min}\left[\mathbb{E}\left[{H}^t_A \right]\right] \left\| \nabla f(W^t) \right\|^2\left(1 - \gamma \frac{L}{2} \frac{\lambda_{\max}\left[\mathbb{E}\left[{H}^t_A\right]\right] }{\lambda_{\min}\left[\mathbb{E}\left[{H}^t_A \right]\right] } B_1 \right)\\
     &+ \gamma^2\frac{L}{2} \lambda_{\max}\left[ \mathbb{E}\left[ {H}^t_A  \right] \right]C_1 .
\end{align*}
Using $\left(1 - \gamma \frac{L}{2} \frac{\lambda_{\max}\left[\mathbb{E}\left[{H}^t_A \right]\right] }{\lambda_{\min}\left[\mathbb{E}\left[{H}^t_A\right]\right] } B_1 \right) \geq \frac{3}{4}$ and PL condition we have 
\begin{align*}
        \mathbb{E}\left[f(W^{t+1}) \mid W^t \right] - f^\star
     \leq & \left(f(W^t) - f^\star\right) \left( 1 - \frac{3}{2} \gamma \mu \lambda_{\min}\left[\mathbb{E}\left[{H}^t_A\right]\right] +\gamma^2 A_1 L \lambda_{\max}\left[\mathbb{E}\left[{H}^t_A \right]\right]  \right)\\
     &+ \gamma^2\frac{L}{2} \lambda_{\max}\left[ \mathbb{E}\left[ {H}^t_A \right] \right]C_1.
\end{align*}
Using that $L A_1 \gamma \lambda_{\max}\left[\mathbb{E}\left[{H}^t_A \right]\right] \leq \frac{\mu}{2} \lambda_{\min}\left[\mathbb{E}\left[{H}^t_A \right]\right] $ we obtain
\begin{align*}
            \mathbb{E}\left[f(W^{t+1}) \mid W^t \right] - f^\star
     \leq & \left(f(W^t) - f^\star\right) \left( 1 -  \gamma \mu \lambda_{\min}\left[\mathbb{E}\left[{H}^t_A \right]\right]  \right)+ \gamma^2\frac{L}{2} \lambda_{\max}\left[ \mathbb{E}\left[ {H}^t_A \right] \right]C_1.
\end{align*}
Taking full expectation and using tower property we obtain:
\begin{align*}
            \mathbb{E}\left[f(W^{t+1})   - f^\star \right]
     \leq &  \mathbb{E}\left[  f(W^t) - f^\star\right] \left(1 -  \gamma \mu \lambda_{\min}\left[\mathbb{E}\left[{H}^t_A\right]\right]  \right)+ \gamma^2\frac{L}{2} \lambda_{\max}\left[ \mathbb{E}\left[ {H}^t_A  \right] \right]C_1.
\end{align*}
Once we unroll the recursion we obtain
\begin{align*}
            \mathbb{E}\left[f(W^{T})   - f^\star \right]
     \leq &  \mathbb{E}\left[  f(W^0) - f^\star\right] \left(1 -  \gamma \mu \lambda_{\min}^{H_A}  \right)^T+ \gamma\frac{L}{2\mu \lambda_{\min}^{H_A}} \lambda_{\max}^{H_A}C_1.
\end{align*}
\end{proof}

\clearpage
\section{Federated Learning setting} \label{sec:FL}

	\begin{algorithm}[t]
	\caption{Federated Randomized Asymmetric Chain of LoRA (\algname{Fed-RAC-LoRA})}\label{alg:Fed-RAC-LoRA}
	\begin{algorithmic}[1]
		\STATE	\textbf{Parameters:} initial pre-trained model $W^0$, rank $r$, learning rate $\gamma  > 0$, scaling factor $\alpha$, server stepsize $\beta >0$ number of modules in chain $T$,  sample distribution $\mathcal{D}^B_S$ or $\mathcal{D}^A_S$.

  \FOR{$t = 0, 1, \ldots , T-1$}\do\\ 
  \STATE Sample a subset (cohort) of clients $S^t$
     \STATE  \qquad (Option 1) \quad Sample a matrix $B^t_S$\qquad \qquad  (Option 2) \quad Sample a matrix $A^t_S$
  \STATE Send the model $W^t$ and fixed matrix (Option 1) $B^t_S$ or 
  (Option 2) $A^t_S$ to clients
  		\FOR{$m \in S^t$}\do\\      
		\STATE  Solve subproblem $$ \text{(Option 1) }  \hat{A}^t_m \approx \min \limits_{A} f_m(W^t+ \frac{\alpha}{r} B^t_S A)\qquad \text{(Option 2) }  \hat{B}^t_m \approx \min \limits_{B} f_m(W^t+ \frac{\alpha}{r} B A^t_S)$$
  \STATE Send the updates to server (Option 1) $\hat{A}^t_m $ or (Option 2) $\hat{B}^t_m$
  \ENDFOR
		\STATE  Merge the updates 
  \STATE $$ \text{(Option 1)} \quad W^{t+1} = W^t + \beta \frac{\alpha}{r} B^t_S \frac{1}{C}\sum_{m \in S^t}\hat{A}^t_m $$
    \STATE $$\text{(Option 2)} \quad W^{t+1} = W^t + \beta \frac{\alpha}{r} \frac{1}{C}\sum_{m \in S^t}\hat{B}^t_m A^t_S$$
		\ENDFOR
	\end{algorithmic}
\end{algorithm}

We consider the main optimization problem (\ref{eq:main}), with $f$ having the   double finite-sum structure
\begin{equation}\label{eq:fed}
       f(W^0 + \Delta W) := \frac{1}{M}\sum_{m=1}^M \underbrace{ \frac{1}{N} \sum_{i=1}^N f_{m,i}(W^0 + \Delta W)}_{f_m(W^0 + \Delta W)},
\end{equation}
where \(M\) is the total number of clients and \(N\) is the number of data points on each client. In the context of Federated Learning, each client maintains its own local loss function \(f_m\), which also follows a finite-sum structure, reflecting the client's local data. This formulation captures the decentralized nature of the learning process, where each client performs computations based on their local dataset.

Federated Learning (FL) \citep{ konecnyFL, kairouz2021advances}  is a distributed machine learning framework that enables multiple devices or clients to collaboratively train a shared model without sending their raw data to a central server. In contrast to traditional machine learning, where data is centralized for model training, Federated Learning allows each client to train a local model using its own data. The clients then share only the updated model parameters with a central server or aggregator. The server aggregates these updates to form a new global model, which is then redistributed to the clients for further iterations of the process \citep{konecnyFL}. Local Training (LT) is a key component of Federated Learning (FL), in which each participating client conducts several local optimization steps before synchronizing their model parameters with the central server.

The analysis of LT marked a significant advancement by eliminating the need for data homogeneity assumptions, as demonstrated by \citet{khaled2019first, khaled2020tighter}. However, later studies by \citet{woodworth2020minibatch} and \citet{glasgow2022sharp} revealed that \algname{LocalSGD} (also known as \algname{FedAvg}) has no communication complexity advantage over  \algname{MinibatchSGD} in heterogeneous data settings. Additionally, \citet{malinovskiy2020local} analyzed LT methods for general fixed-point problems, while \citet{koloskova2020unified} explored decentralized aspects of LT.

Although removing the data homogeneity requirement was a major breakthrough, the results were somewhat discouraging, as they indicated that LT-enhanced \algname{GD}, or \algname{LocalGD}, exhibits a sublinear convergence rate, which is worse than the linear convergence rate of vanilla \algname{GD} \citep{woodworth2020local}. The impact of server-side step sizes was further explored by \citet{malinovsky2023server} and \citet{charles2020outsized}.

Subsequent LT methods aimed to achieve linear convergence by addressing client drift, which had hindered earlier approaches. \algname{Scaffold}, introduced by \citet{karimireddy2020scaffold}, was the first to successfully mitigate client drift and achieve a linear convergence rate. Similar methods were later proposed by \citet{gorbunov2021local}. Although this was a significant breakthrough, these methods still have slightly higher or equal communication complexity compared to vanilla \algname{GD}.

\citet{mishchenko2022proxskip} recently introduced the \algname{ProxSkip} method, a simple yet effective approach to Local Training that achieves provable communication acceleration in the smooth strongly convex regime, even with heterogeneous data. In a follow-up article, \citet{malinovsky2022variance} expanded on \algname{ProxSkip}, presenting a broad variance reduction framework. \citet{condat2022randprox} further applied \algname{ProxSkip} to complex splitting schemes involving the sum of three operators in a forward-backward setting. Additionally, \citet{sadiev2022communication} and \citet{maranjyan2022gradskip} improved the computational complexity of \algname{ProxSkip} while preserving its communication efficiency. \citet{condat2023tamuna} introduced accelerated Local Training methods allowing client sampling based on \algname{ProxSkip}, while \citet{grudzien2023can, grudzien2023improving} proposed an accelerated method using the \algname{RandProx} approach with primal and dual updates.

In practice, Federated Learning faces a fundamental challenge: it is often infeasible for all clients to communicate and aggregate updates with the central server simultaneously due to limitations such as network bandwidth, client availability, or resource constraints. Therefore, rather than requiring all clients to participate in every round of communication, we adopt a strategy in which only a randomly selected subset of clients is involved in each aggregation step. This approach relies on uniform sampling of the clients, ensuring that the selection process is unbiased over time.

The method operates as follows: in each communication round, the central server sends the current global model, denoted by \(W^t\), along with the sampled matrix, to the clients chosen to participate in the current cohort. Each client in this cohort trains a local learnable matrix using an optimization algorithm (e.g., stochastic gradient descent) based on their local data. After completing the local updates, the clients send their computed updates (i.e., changes in model parameters) back to the central server.

Once the server receives these updates, it aggregates them (e.g., by averaging the updates) to produce an updated global model. In addition to the aggregation, the server may perform an additional server-side update step to further refine the model before broadcasting it in the next round. This iterative process of local training, communication, and aggregation continues until convergence is achieved or a predefined stopping criterion is met.

The proof is provided for Left Sketch (Definition \ref{def:left}). The result for Right Sketch (Definition \ref{def:right}) can be derived by following the same steps.

For local optimzier we use Random Reshuffling, where the effective step has a form:
\begin{align}
\label{eq:fed-rr}
    W^t_{m,i+1} = W^t_{m,i} - \gamma H^t_B \nabla f_{m,i} (W^t_{m,i})
\end{align}
The server-side step looks like 
$W^{t+1} = W^t - \tilde{\eta} H^t_B \frac{1}{C} \sum_{m\in S^t} \hat{A}^t_m.  $

Let us formulate nesesary assumptions
\begin{assumption}[Functional dissimilarity]
     The variance at the optimum in the non-convex regime is defined as
$$
\Delta^\star \stackrel{\text { def }}{=} f^\star-\frac{1}{M} \sum_{m=1}^M f_{m}^\star
$$
where $f_{m}^\star=\inf _W f_m(W)$ and $f^\star=\inf _W f(W)$. For each device $m$, the variance at the optimum is defined as
$$
\Delta_{m}^\star \stackrel{\text { def }}{=} f^\star-\frac{1}{n} \sum_{i=1}^n f_{m,i}^\star
$$
where $f_{m,i}^\star=\inf _W f_{m,i}(W)$
\end{assumption}

\subsection{Analysis of general non-convex setting}
\label{sec:Fed-gen}

\begin{theorem}
\label{thm:Fed-gen}
Suppose that Assumption \ref{asm:L-smooth} and Assumption \ref{asm:lambda} hold. Suppose that stepsizes $\gamma, \tilde{\eta} > 0$ is chosen such that $\gamma n \leq \tilde{\eta} \leq \frac{1-\lambda_{\min}^{H_B}}{4L} $. Then, the iterate ${W}^T$ of \algname{Fed-RAC-LoRA} method (Algorithm~\ref{alg:Fed-RAC-LoRA}) with \algname{RR} updates (Equation \ref{eq:fed-rr}) satisfy

\begin{align*}
 \min _{t=0, \ldots, T-1} \mathbb{E}\left[\left\|\nabla f\left(W^t\right)\right\|^2\right] \leq &\frac{4\left(1+4 \tilde{\eta}  L^3  \gamma^2 N^2  + 2L^2 \tilde{\eta} ^2 \frac{M-C}{C \max \{M-1,1\}} \right)^T}{\lambda_{\max}\left[ \mathbb{E}\left[ I-H^t_B \right] \right]\tilde{\eta}  T} \left(f(W^0) - f^\star\right)\\
&+\frac{8 \gamma^2 N L^3}{\lambda_{\max}\left[ \mathbb{E}\left[ I-H^t_B \right] \right]}\left(\frac{1}{M} \sum_{m=1}^M \Delta^{*}_m+N \Delta^*\right)\\
&+\frac{8 L^2 \tilde{\eta}}{\lambda_{\max}\left[ \mathbb{E}\left[ I-H^t_B \right] \right]}  \frac{M-C}{C \max \{M-1,1\}} \Delta^* .
\end{align*}

\end{theorem}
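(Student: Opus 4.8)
The plan is to follow the template of the Random Reshuffling analysis (Theorem~\ref{thm:RR}) for handling the sketch projection $H^t_B$, but to augment it with the two additional error sources specific to the federated setting: the client-sampling noise induced by the cohort $S^t$, and the inter-client heterogeneity captured by $\Delta^\star$ and $\Delta^\star_m$. First I would write the server step in its effective form $W^{t+1} = W^t - \tilde\eta H^t_B \hat g^t$, where $\hat g^t = \frac{1}{C}\sum_{m\in S^t}\frac{1}{N}\sum_{i=0}^{N-1}\nabla f_{m,\pi^m_i}(W^t_{m,i})$ aggregates the locally reshuffled gradients and $\tilde\eta$ absorbs the local stepsize $\gamma$, the chain scaling $\frac{\alpha}{r}$, and the server stepsize $\beta$. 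Applying the descent lemma from Assumption~\ref{asm:L-smooth} to this update and taking conditional expectation over the sketch, the $M$ reshuffling permutations, and the cohort gives a one-step inequality whose leading negative term is $-\tilde\eta\langle \nabla f(W^t), \mathbb{E}[H^t_B]\nabla f(W^t)\rangle$ and whose remaining terms measure how far $H^t_B\hat g^t$ deviates from $H^t_B\nabla f(W^t)$.

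The core of the argument is to control $\mathbb{E}\|H^t_B\hat g^t - H^t_B\nabla f(W^t)\|^2$. I would split this into (i) a \emph{local drift} contribution $\frac{1}{CN}\sum_{m,i}\|\nabla f_{m,\pi_i}(W^t_{m,i}) - \nabla f_{m,\pi_i}(W^t)\|^2$, bounded by $L^2$ times the accumulated iterate drift $\|W^t_{m,i}-W^t\|^2$ exactly as in the single-machine RR proof, and (ii) a \emph{cohort-sampling} contribution $\mathbb{E}\|\frac{1}{C}\sum_{m\in S^t}\nabla f_m(W^t) - \nabla f(W^t)\|^2$, which for uniform sampling without replacement produces the standard factor $\frac{M-C}{C\max\{M-1,1\}}$ times the client-gradient variance. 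For (i) I would invoke the reshuffling drift lemma (Lemma~1 of \citet{mishchenko2020random}) and bound the per-client gradient variance at $W^t$ by $2L\Delta^\star_m$, yielding the $\gamma^2 N^2$-scaled term with $\frac{1}{M}\sum_m\Delta^\star_m$; for (ii) I would bound the client variance by $2L\Delta^\star$. Throughout I would use that $H^t_B$ and $I-H^t_B$ are orthogonal projections onto complementary subspaces, so that the component of $\nabla f(W^t)$ in the range of $I-H^t_B$ separates cleanly and gives rise to the $\lambda_{\max}[\mathbb{E}[I-H^t_B]]$ factor.

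Assembling these bounds, the stepsize condition $\gamma N\le\tilde\eta\le\frac{1-\lambda_{\min}^{H_B}}{4L}$ is precisely what is needed to absorb the positive $\|\nabla f(W^t)\|^2$ contributions into the descent term and to yield a recursion of the form
\[
\delta^{t+1} \le (1+a)\,\delta^t - b\,\mathbb{E}\!\left[\|\nabla f(W^t)\|^2\right] + c,
\]
where $\delta^t = \mathbb{E}[f(W^t)-f^\star]$, the multiplicative factor is $a = 4\tilde\eta L^3\gamma^2 N^2 + 2L^2\tilde\eta^2\frac{M-C}{C\max\{M-1,1\}}$, the descent coefficient is $b\propto \tilde\eta\,\lambda_{\max}[\mathbb{E}[I-H^t_B]]$, and $c$ collects the heterogeneity constants. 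Finally I would handle the factor $1+a$ using the weighted-telescoping device already used in the \algname{SGD} proof (Theorem~\ref{thm:SGD-gen}): define weights $w^t = w^{t-1}/(1+a)$, multiply through, sum over $t$, and divide by $\sum_t w^t \ge T w^{T-1}$. This produces exactly the $(1+a)^T$ factor in the numerator of the claimed bound and converts $\min_t \mathbb{E}[\|\nabla f(W^t)\|^2]$ into the stated right-hand side.

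The main obstacle I anticipate is the simultaneous bookkeeping of three independent randomness sources---the sketch $B^t_S$, the $M$ local permutations, and the cohort $S^t$---while keeping the projection $H^t_B$ inside every norm and inner product. In particular, closing the reshuffling recursion for $\|W^t_{m,i}-W^t\|^2$ under the projection (using $\lambda_{\max}[H^t_B]=1$) and correctly decoupling the cohort-sampling variance from the reshuffling variance, so that $\Delta^\star$ and $\Delta^\star_m$ enter with the right powers of $N$, is the delicate step of the proof.
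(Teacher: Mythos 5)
Your proposal is correct and takes essentially the same route as the paper's proof: the same effective server update $W^{t+1}=W^t-\tilde{\eta}\,H^t_B\hat{g}^t$, the same orthogonal decomposition of the error into the $(I-H^t_B)\nabla f(W^t)$ component (using $H^t_B(I-H^t_B)=0$ and $\lambda_{\max}[H^t_B]=1$), the local reshuffling drift, and the cohort-sampling variance with the $\frac{M-C}{C\max\{M-1,1\}}$ factor, all assembled into the recursion $\delta^{t+1}\le(1+a)\delta^t-b\,\mathbb{E}\left[\|\nabla f(W^t)\|^2\right]+c$ with exactly your constant $a$ and closed by the weighted-telescoping lemma of \citet{khaled2020better}. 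The only cosmetic difference is that the paper imports the cohort-variance and drift bounds as Lemmas 5 and 6 of \citet{malinovsky2023server}, which are precisely the federated versions of the without-replacement variance computation and the \citet{mishchenko2020random} drift lemma you invoke directly.
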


\begin{proof}

We start from $L$-smoothness:
    \begin{align*}
     &   f(W^{t+1})  \leq  f(W^t) + \left\langle \nabla f(W^t), W^{t+1} - W^t \right\rangle + \frac{L}{2}\left\| W^{t+1} - W^t \right\|^2\\
         \leq & f(W^t) - \left\langle \nabla f(W^t), \tilde{\eta} \frac{1}{CN} \sum_{m\in S^t} \sum^{N-1}_{i=0} H^t \nabla f^{\pi^t_{m,i}}_m\left( W^t_{m,i} \right) \right\rangle\\
        &+ \frac{L}{2}\left\| \tilde{\eta}  \frac{1}{CN}\sum_{m\in S^t} \sum^{N-1}_{i=0} H^t \nabla f_m^{\pi^t_{m,i}} (W^t_{m,i}) \right\|^2.
    \end{align*}
Now we take expectation with respect to sampling:
    \begin{align*}
    & \mathbb{E}_{S^t}  \left[ f(W^{t+1}) \right]
         \leq  f(W^t) - \tilde{\eta}  \mathbb{E}_{S^t} \left[ \left\langle \nabla f(W^t),  \frac{1}{CN} \sum_{m\in S^t} \sum^{N-1}_{i=0} H^t_B \nabla f^{\pi^t_{m,i}}_m\left( W^t_{m,i} \right) \right\rangle\right]\\
        &+ \frac{L}{2}\tilde{\eta} ^2 \mathbb{E}_{S^t} \left[ \left\|  \frac{1}{CN}\sum_{m\in S^t} \sum^{N-1}_{i=0} H^t_B \nabla f_m^{\pi^t_{m,i}} (W^t_{m,i}) \right\|^2\right]\\
        \leq & f(W^t) - \tilde{\eta}   \left\langle \nabla f(W^t),  \frac{1}{MN} \sum^M_{m=1} \sum^{N-1}_{i=0} H^t_B \nabla f^{\pi^t_{m,i}}_m\left( W^t_{m,i} \right) \right\rangle\\
        &+ \frac{L}{2}\tilde{\eta} ^2 \mathbb{E}_{S^t} \left[ \left\|  \frac{1}{CN}\sum_{m\in S^t} \sum^{N-1}_{i=0} H^t_B \nabla f_m^{\pi^t_{m,i}} (W^t_{m,i}) \right\|^2\right].
    \end{align*}
    Using $2 \left\langle a, b \right\rangle = \left\|a + b\right\|^2 - \left\|a\right\|^2 - \left\|b\right\|^2$, we have
        \begin{align*}
   &  \mathbb{E}_{S^t} \left[  f(W^{t+1}) \right]
        \leq  f(W^t) - \frac{\tilde{\eta} }{2} \| \nabla f(W^t) \|^2 - \frac{\tilde{\eta} }{2} \left\| \frac{1}{MN} \sum_{m=1}^M \sum^{N-1}_{i=0} H^t_B \nabla f_m^{\pi^t_{m,i}} (W^t_{m,i}) \right\|^2 \\
        &+\frac{\tilde{\eta} }{2}\left\| \nabla f(W^t) - \frac{1}{MN} \sum^M_{m=1}\sum^{N-1}_{i=0} H^t_B \nabla f_m^{\pi^t_{m,i}} (W^t_{m,i}) \right\|^2\\
        &+ \frac{L}{2}\tilde{\eta} ^2 \mathbb{E}_{S^t} \left[ \left\|  \frac{1}{CN}\sum_{m\in S^t} \sum^{N-1}_{i=0} H^t_B \nabla f_m^{\pi^t_{m,i}} (W^t_{m,i}) \right\|^2\right]\\
         \leq & f(W^t) - \frac{\tilde{\eta} }{2} \| \nabla f(W^t) \|^2 +\frac{\tilde{\eta} }{2}\left\| \nabla f(W^t) - \frac{1}{MN} \sum^M_{m=1}\sum^{N-1}_{i=0} H^t_B \nabla f_m^{\pi^t_{m,i}} (W^t_{m,i}) \right\|^2\\
        &+ \frac{L}{2}\tilde{\eta} ^2 \mathbb{E}_{S^t} \left[ \left\|  \frac{1}{CN}\sum_{m\in S^t} \sum^{N-1}_{i=0} H^t_B \nabla f_m^{\pi^t_{m,i}} (W^t_{m,i}) \right\|^2\right].
    \end{align*}
    Now we need to add and subtract $H^t_B \nabla f(W^t)$:
            \begin{align*}
     &\mathbb{E}_{S^t}  \left[  f(W^{t+1}) \right]
         \leq  f(W^t) - \frac{\tilde{\eta} }{2} \| \nabla f(W^t) \|^2\\
         &+\frac{\tilde{\eta} }{2}\left\| \nabla f(W^t) - H^t_B\nabla f(W^t) + H^t_B \nabla f(W^t) - \frac{1}{MN} \sum^M_{m=1}\sum^{N-1}_{i=0} H^t_B \nabla f_m^{\pi^t_{m,i}} (W^t_{m,i}) \right\|^2\\
        &+ \frac{L}{2}\tilde{\eta} ^2 \mathbb{E}_{S^t} \left[ \left\|  \frac{1}{CN}\sum_{m\in S^t} \sum^{N-1}_{i=0} H^t_B \nabla f_m^{\pi^t_{m,i}} (W^t_{m,i}) \right\|^2\right]\\
        \leq & f(W^t) - \frac{\tilde{\eta} }{2} \| \nabla f(W^t) \|^2\\
         &+\frac{\tilde{\eta} }{2}\left\| \nabla f(W^t)\left( I - H^t_B\right) + \frac{1}{MN} \sum^M_{m=1}\sum^{N-1}_{i=0} H^t_B \nabla f_m^{\pi^t_{m,i}} (W^t)  - \frac{1}{MN} \sum^M_{m=1}\sum^{N-1}_{i=0} H^t_B \nabla f_m^{\pi^t_{m,i}} (W^t_{m,i}) \right\|^2\\
        &+ \frac{L}{2}\tilde{\eta} ^2 \mathbb{E}_{S^t} \left[ \left\|  \frac{1}{CN}\sum_{m\in S^t} \sum^{N-1}_{i=0} H^t_B \nabla f_m^{\pi^t_{m,i}} (W^t_{m,i}) \right\|^2\right]\\
        \leq & f(W^t) - \frac{\tilde{\eta} }{2} \| \nabla f(W^t) \|^2\\
         &+\frac{\tilde{\eta} }{2}\left\| \nabla f(W^t)\left( I - H^t_B\right) + \frac{1}{MN} \sum^M_{m=1}\sum^{N-1}_{i=0} H^t_B \left( \nabla f_m^{\pi^t_{m,i}} (W^t) - \nabla f_m^{\pi^t_{m,i}} (W^t_{m,i}) \right)  \right\|^2\\
        &+ \frac{L}{2}\tilde{\eta} ^2 \mathbb{E}_{S^t} \left[ \left\|  \frac{1}{CN}\sum_{m\in S^t} \sum^{N-1}_{i=0} H^t_B \nabla f_m^{\pi^t_{m,i}} (W^t_{m,i}) \right\|^2\right].
    \end{align*}
Since $H^t_B(I-H^t_B) = 0$ we obtain
            \begin{align*}
   &  \mathbb{E}_{S^t}  \left[  f(W^{t+1}) \right] \leq  f(W^t) - \frac{\tilde{\eta} }{2} \| \nabla f(W^t) \|^2\\
         &+\frac{\tilde{\eta} }{2}\left\| \nabla f(W^t)\left( I - H^t_B\right) \right\|^2 + \frac{\tilde{\eta} }{2} \left\| \frac{1}{MN} \sum^M_{m=1}\sum^{N-1}_{i=0} H^t_B \left( \nabla f_m^{\pi^t_{m,i}} (W^t) - \nabla f_m^{\pi^t_{m,i}} (W^t_{m,i}) \right)  \right\|^2\\
        &+ \frac{L}{2}\tilde{\eta} ^2 \mathbb{E}_{S^t} \left[ \left\|  \frac{1}{CN}\sum_{m\in S^t} \sum^{N-1}_{i=0} H^t_B \nabla f_m^{\pi^t_{m,i}} (W^t_{m,i}) \right\|^2\right].
     \end{align*}
     Now we take conditional expectation and use tower property:
     \begin{align*}
              \mathbb{E}\left[f(W^{t+1})\mid W^t\right] \leq & f(W^t) - \frac{\tilde{\eta} }{2}\left\| \nabla f(W^t) \right\|^2 + \frac{\tilde{\eta} }{2}\mathbb{E}\left[ \| \nabla f(W^t)\left( I - H^t_B\right) \|^2\mid W^t\right]\\
              &+ \frac{\tilde{\eta} }{2} \mathbb{E}\left[\left\| \frac{1}{MN} \sum^M_{m=1}\sum^{N-1}_{i=0} H^t_B \left( \nabla f_m^{\pi^t_{m,i}} (W^t) - \nabla f_m^{\pi^t_{m,i}} (W^t_{m,i}) \right)  \right\|^2\mid W^t\right]\\
              &+ \frac{L}{2}\tilde{\eta} ^2 \mathbb{E}\left[ \mathbb{E}_{S^t} \left[ \left\|  \frac{1}{CN}\sum_{m\in S^t} \sum^{N-1}_{i=0} H^t_B \nabla f_m^{\pi^t_{m,i}} (W^t_{m,i}) \right\|^2\right]\mid W^t\right].
     \end{align*}
Next, we use eigenvalues to obtain bounds:
         \begin{align*}
              \mathbb{E} & \left[f(W^{t+1})\mid W^t\right] \leq  f(W^t) - \frac{\tilde{\eta} }{2}\left\| \nabla f(W^t) \right\|^2 + \frac{\tilde{\eta} }{2}\lambda_{\max}\left[ \mathbb{E}\left[I-H^t_B\right] \right] \|\nabla f(W^t) \|^2 \\
              &+ \!\frac{\tilde{\eta} }{2}  \mathbb{E}\!\left[\!\lambda_{\max}\!\left[H^t_B\right]\!\left\| \frac{1}{MN}\! \sum^M_{m=1}\sum^{N-1}_{i=0} \! \left( \!\nabla f_m^{\pi^t_{m,i}} (W^t) \!-\! \nabla f_m^{\pi^t_{m,i}} (W^t_{m,i}) \!\right)  \right\|^2\!\!\mid \!W^t\!\right]\\
              &+ \frac{L}{2}\tilde{\eta} ^2 \mathbb{E}\left[ \mathbb{E}_{S^t} \left[ \lambda_{\max}\left[H^t_B\right]\left\|  \frac{1}{Cn}\sum_{m\in S^t} \sum^{N-1}_{i=0}  \nabla f_m^{\pi^t_{m,i}} (W^t_{m,i}) \right\|^2\right]\mid W^t\right].
     \end{align*}
Since $\lambda_{\max}\left[ H^t_B \right]=1$ we have 
         \begin{align*}
              \mathbb{E}\left[f(W^{t+1})\mid W^t\right] \leq & f(W^t) - \frac{\tilde{\eta} }{2}\left\| \nabla f(W^t) \right\|^2 + \frac{\tilde{\eta} }{2}\lambda_{\max}\left[ \mathbb{E}\left[I-H^t_B\right] \right] \|\nabla f(W^t) \|^2 \\
              &+ \frac{\tilde{\eta} }{2}  \mathbb{E}\left[\left\| \frac{1}{MN} \sum^M_{m=1}\sum^{N-1}_{i=0}  \left( \nabla f_m^{\pi^t_{m,i}} (W^t) - \nabla f_m^{\pi^t_{m,i}} (W^t_{m,i}) \right)  \right\|^2\mid W^t\right]\\
              &+ \frac{L}{2}\tilde{\eta} ^2 \mathbb{E}\left[ \mathbb{E}_{S^t} \left[ \left\|  \frac{1}{CN}\sum_{m\in S^t} \sum^{N-1}_{i=0}  \nabla f_m^{\pi^t_{m,i}} (W^t_{m,i}) \right\|^2\right]\mid W^t\right].
     \end{align*}
     Using Lemma 5 from \citet{malinovsky2023server} we have 

\begin{align*}
\frac{L}{2}&\tilde{\eta} ^2 \mathbb{E}\left[ \mathbb{E}_{S^t} \left[ \left\|  \frac{1}{CN}\sum_{m\in S^t} \sum^{N-1}_{i=0}  \nabla f_m^{\pi^t_{m,i}} (W^t_{m,i}) \right\|^2\right]\mid W^t\right]\\
& \quad\quad\leq L^3 \tilde{\eta} ^2 \mathbb{E}\left[\frac{1}{M n} \sum_{m=1}^M \sum_{i=0}^{N-1}\left\|W_{m,i}^t-W^t\right\|^2\mid W^t\right]\\
&\qquad\qquad+L \tilde{\eta} ^2\left\|\nabla f\left(W^t\right)\right\|^2 \\ & \qquad\qquad+L \tilde{\eta} ^2 \frac{M-C}{C \max \{M-1,1\}}\left(2 L\left(f\left(W^t\right)-f^\star\right)+2 L \Delta^*\right)
\end{align*}
Using this bound and $L$-smoothness for the term in second line we obtain:
         \begin{align*}
          &    \mathbb{E}\left[f(W^{t+1})\mid W^t\right] \leq  f(W^t) - \frac{\tilde{\eta} }{2}\left\| \nabla f(W^t) \right\|^2 + \frac{\tilde{\eta} }{2}\lambda_{\max}\left[ \mathbb{E}\left[I-H^t_B\right] \right] \|\nabla f(W^t) \|^2 \\
              &+ \frac{\tilde{\eta} }{2} L^2 \mathbb{E}\left[ \frac{1}{Mn} \sum^M_{m=1}\sum^{n-1}_{i=0}  \left\|  W^t - W^t_{m,i} \right\|^2\mid W^t\right]\\
              &+L^3 \tilde{\eta} ^2 \mathbb{E}\left[ \frac{1}{Mn} \sum^M_{m=1}\sum^{n-1}_{i=0}  \left\|  W^t - W^t_{m,i} \right\|^2\mid W^t\right]+L \tilde{\eta} ^2\left\|\nabla f\left(W^t\right)\right\|^2 \\ & +L \tilde{\eta} ^2 \frac{M-C}{C \max \{M-1,1\}}\left(2 L\left(f\left(W^t\right)-f^\star\right)+2 L \Delta^*\right)
     \end{align*}
Since $\tilde{\eta} \leq \frac{1}{2L}$ we get 
         \begin{align*}
          &    \mathbb{E}\left[f(W^{t+1})\mid W^t\right] \leq  f(W^t) - \frac{\tilde{\eta} }{2}\left\| \nabla f(W^t) \right\|^2 + \frac{\tilde{\eta} }{2}\lambda_{\max}\left[ \mathbb{E}\left[I-H^t_B\right] \right] \|\nabla f(W^t) \|^2 \\
              &+ \tilde{\eta}  L^2 \mathbb{E}\left[ \frac{1}{MN} \sum^M_{m=1}\sum^{N-1}_{i=0}  \left\|  W^t - W^t_{m,i} \right\|^2\mid W^t\right]+L \tilde{\eta} ^2\left\|\nabla f\left(W^t\right)\right\|^2 \\ & +L \tilde{\eta} ^2 \frac{M-C}{C \max \{M-1,1\}}\left(2 L\left(f\left(W^t\right)-f^\star\right)+2 L \Delta^*\right).
     \end{align*}
Using lemma 6 from (cite) we obtain 
\begin{align*}
    \frac{1}{M N} \sum_{m=1}^M \sum_{i=0}^{N-1} \mathbb{E}\left[\left\|W^t-W_{m,i}^t\right\|^2 \mid W^t\right] \leq 4 \gamma^2 N^2 L\left(f\left(W^t\right)-f^*\right)\\+2 \gamma^2 N^2 L \Delta^*+2 \gamma^2 N L \frac{1}{M} \sum_{m=1}^M \Delta^{*}_m.
\end{align*}
Plugging this bound we obtain 
         \begin{align*}
        &      \mathbb{E}\left[f(W^{t+1})\mid W^t\right] \leq  f(W^t) - \frac{\tilde{\eta} }{2}\left\| \nabla f(W^t) \right\|^2 + \frac{\tilde{\eta} }{2}\lambda_{\max}\left[ \mathbb{E}\left[I-H^t_B\right] \right] \|\nabla f(W^t) \|^2 \\
              &+ \tilde{\eta}  L^2 \left(4 \gamma^2 N^2 L\left(f\left(W^t\right)-f^*\right)+2 \gamma^2 N^2 L \Delta^*+2 \gamma^2 N L \frac{1}{M} \sum_{m=1}^M \Delta^{*}_m\right)\\
&+L \tilde{\eta} ^2\left\|\nabla f\left(W^t\right)\right\|^2 \\ & +L \tilde{\eta} ^2 \frac{M-C}{C \max \{M-1,1\}}\left(2 L\left(f\left(W^t\right)-f^\star\right)+2 L \Delta^*\right).
     \end{align*}
Next, we have 
\begin{align*}
      &            \mathbb{E}\left[f(W^{t+1})\mid W^t\right] \leq  f(W^t) - \frac{\tilde{\eta} }{2}\left\| \nabla f(W^t) \right\|^2 \left( 1 - \lambda_{\max}\left[ \mathbb{E}\left[ I-H^t_B \right] \right] - 2L\tilde{\eta} \right)  \\
              &+ \tilde{\eta}  L^2 \left(4 \gamma^2 N^2 L\left(f\left(W^t\right)-f^*\right)+2 \gamma^2 N^2 L \Delta^*+2 \gamma^2 N L \frac{1}{M} \sum_{m=1}^M \Delta^{*}_m\right)\\
              & +L \tilde{\eta} ^2 \frac{M-C}{C \max \{M-1,1\}}\left(2 L\left(f\left(W^t\right)-f^\star\right)+2 L \Delta^*\right).
\end{align*}
Using $\tilde{\eta} \leq\frac{1 - \lambda_{\max}\left[ \mathbb{E}\left[ H^t_B \right] \right]}{4L}$ we get
\begin{align*}
          &        \mathbb{E}\left[f(W^{t+1})\mid W^t\right] \leq  f(W^t) - \frac{\tilde{\eta} }{4}\left\| \nabla f(W^t) \right\|^2 \left( 1 - \lambda_{\max}\left[ \mathbb{E}\left[ I-H^t_B \right] \right]\right)  \\
              &+ \tilde{\eta}  L^2 \left(4 \gamma^2 N^2 L\left(f\left(W^t\right)-f^*\right)+2 \gamma^2 N^2 L \Delta^*+2 \gamma^2 N L \frac{1}{M} \sum_{m=1}^M \Delta^{*}_m\right)\\
              & +L \tilde{\eta} ^2 \frac{M-C}{C \max \{M-1,1\}}\left(2 L\left(f\left(W^t\right)-f^\star\right)+2 L \Delta^*\right).
\end{align*}
Next, we subtract $f^\star$ from both sides:
\begin{align*}
                 &     \mathbb{E}\left[f(W^{t+1})\mid W^t\right] - f^\star \leq f(W^t) - f^\star - \frac{\tilde{\eta} }{4}\left\| \nabla f(W^t) \right\|^2 \left( 1 - \lambda_{\max}\left[ \mathbb{E}\left[ I-H^t_B \right] \right]\right)  \\
              &+ \tilde{\eta}  L^2 \left(4 \gamma^2 N^2 L\left(f\left(W^t\right)-f^*\right)+2 \gamma^2 N^2 L \Delta^*+2 \gamma^2 N L \frac{1}{M} \sum_{m=1}^M \Delta^{*}_m\right)\\
              & +L \tilde{\eta} ^2 \frac{M-C}{C \max \{M-1,1\}}\left(2 L\left(f\left(W^t\right)-f^\star\right)+2 L \Delta^*\right).
\end{align*}
Taking full expectation we obtain
\begin{align*}
              &        \mathbb{E}\left[f(W^{t+1}) - f^\star \right] \leq  \mathbb{E}\left[f(W^t) - f^\star\right] \left(1+4 \tilde{\eta}  L^3  \gamma^2 N^2  + 2L^2 \tilde{\eta} ^2 \frac{M-C}{C \max \{M-1,1\}} \right)\\
                      &- \frac{\tilde{\eta} }{4}\left\| \nabla f(W^t) \right\|^2 \left( 1 - \lambda_{\max}\left[ \mathbb{E}\left[ I-H^t_B \right] \right]\right)  \\
              &+ \tilde{\eta}  L^2 \left(2 \gamma^2 N^2 L \Delta^*+2 \gamma^2 N L \frac{1}{M} \sum_{m=1}^M \Delta^{*}_m\right) +2L^2 \tilde{\eta} ^2 \frac{M-C}{C \max \{M-1,1\}} \Delta^*.
\end{align*}
Next, we apply lemma from \citet{khaled2020better} and obtain
\begin{align*}
 \min _{t=0, \ldots, T-1} \mathbb{E}\left[\left\|\nabla f\left(W^t\right)\right\|^2\right] \leq &\frac{4\left(1+4 \tilde{\eta}  L^3  \gamma^2 N^2  + 2L^2 \tilde{\eta} ^2 \frac{M-C}{C \max \{M-1,1\}} \right)^T}{\lambda_{\max}\left[ \mathbb{E}\left[ I-H^t_B \right] \right]\tilde{\eta}  T} \left(f(W^0) - f^\star\right)\\
&+\frac{8 \gamma^2 N L^3}{\lambda_{\max}\left[ \mathbb{E}\left[ I-H^t_B \right] \right]}\left(\frac{1}{M} \sum_{m=1}^M \Delta^{*}_m+N \Delta^*\right)\\
&+\frac{8 L^2 \tilde{\eta}}{\lambda_{\max}\left[ \mathbb{E}\left[ I-H^t_B \right] \right]}  \frac{M-C}{C \max \{M-1,1\}} \Delta^* .
\end{align*}
\end{proof}

\subsection{Analysis of Polyak-Łojasiewicz setting }
\label{sec:Fed-PL}
\begin{theorem}

Suppose that Assumption \ref{asm:L-smooth}, Assumption \ref{asm:PL} and Assumption \ref{asm:lambda} hold. Suppose that stepsizes $\gamma, \tilde{\eta} > 0$ is chosen such that $\gamma n \leq \tilde{\eta} \leq \frac{1-\lambda_{\min}^{H_B}}{4L} $. Then, the iterate ${W}^T$ of \algname{Fed-RAC-LoRA} method (Algorithm \ref{alg:Fed-RAC-LoRA}) with \algname{RR} updates (Equation \ref{eq:fed-rr}) satisfy
        \begin{align*}
                &  \mathbb{E}\left[f(W^{t+1}) - f^\star\right] \leq  (f(W^0) - f^\star)\left(1 - \tilde{\eta} \mu \left( 1 - \lambda_{\max}\left[ \mathbb{E}\left[ I-H^t \right] \right] - 3L\tilde{\eta}\right) \right)^T   \\
              &+ \frac{\tilde{\eta} L^2}{\tilde{\eta} \mu \left( 1 - \lambda_{\max}\left[ \mathbb{E}\left[ I-H^t \right] \right] - 3L\tilde{\eta}\right)} \left(2 \gamma^2 N^2 L \Delta^*+2 \gamma^2 N L \frac{1}{M} \sum_{m=1}^M \Delta^{*}_m\right)\\
              & + \frac{L \tilde{\eta}^2}{\tilde{\eta} \mu \left( 1 - \lambda_{\max}\left[ \mathbb{E}\left[ I-H^t \right] \right] - 3L\tilde{\eta}\right)} \frac{M-C}{C \max \{M-1,1\}}\left(2 L\left(f\left(W^t\right)-f^\star\right)+2 L \Delta^*\right).
\end{align*}
\end{theorem}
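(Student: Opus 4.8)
The plan is to recycle the \emph{entire} one-step analysis already carried out in the proof of Theorem~\ref{thm:Fed-gen} (the non-convex \algname{Fed-RAC-LoRA} case) and to change only the final aggregation step: instead of telescoping and invoking the general non-convex lemma of \citet{khaled2020better}, I would pivot to the PL condition and convert the per-step descent inequality into a geometric contraction. Concretely, I would run the non-convex derivation verbatim up to the point where the client drift has been controlled: start from $L$-smoothness of $f$ at the server update $W^{t+1}=W^t-\tilde{\eta}H^t_B\frac{1}{CN}\sum_{m\in S^t}\sum_{i=0}^{N-1}\nabla f_m^{\pi^t_{m,i}}(W^t_{m,i})$, take expectation over the cohort $S^t$, use the polarization identity $2\langle a,b\rangle=\|a+b\|^2-\|a\|^2-\|b\|^2$, insert and cancel $H^t_B\nabla f(W^t)$, and exploit $H^t_B(I-H^t_B)=0$ together with $\lambda_{\max}[H^t_B]=1$.

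After bounding the two squared-norm contributions by $\lambda_{\max}[\mathbb{E}[I-H^t_B]]$, applying Lemma~5 of \citet{malinovsky2023server} to the cohort-sampling term and Lemma~6 to the drift term $\frac{1}{MN}\sum_{m,i}\mathbb{E}[\|W^t-W^t_{m,i}\|^2\mid W^t]$, and using the coupled stepsize restriction $\gamma N\leq\tilde{\eta}\leq\frac{1-\lambda_{\min}^{H_B}}{4L}$, I reach a one-step inequality of the same shape as in Theorem~\ref{thm:Fed-gen}, except that I \emph{keep} the gradient term rather than discarding it:
\begin{align*}
\mathbb{E}\left[f(W^{t+1})-f^\star\mid W^t\right] \leq{}& \left(f(W^t)-f^\star\right) - \frac{\tilde{\eta}}{2}\left\|\nabla f(W^t)\right\|^2\left(1-\lambda_{\max}\left[\mathbb{E}\left[I-H^t_B\right]\right]-3L\tilde{\eta}\right)\\
&+ \tilde{\eta}L^2\left(2\gamma^2N^2L\Delta^\star+2\gamma^2NL\tfrac{1}{M}\sum_{m=1}^M\Delta^\star_m\right)+2L^2\tilde{\eta}^2\tfrac{M-C}{C\max\{M-1,1\}}\Delta^\star,
\end{align*}
where the coefficient $3L\tilde{\eta}$ collects the squared-gradient residues from both the cohort-variance estimate and the drift estimate (via $\gamma N\leq\tilde{\eta}$).

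Next I would apply the PL condition (Assumption~\ref{asm:PL}) in the form $\|\nabla f(W^t)\|^2\geq 2\mu(f(W^t)-f^\star)$ to the negative quadratic term. Writing $\delta^t:=\mathbb{E}[f(W^t)-f^\star]$, taking full expectation, and abbreviating the $\Delta^\star$/$\Delta^\star_m$ floor as $D$, this yields the linear recursion
\begin{align*}
\delta^{t+1}\leq\left(1-\tilde{\eta}\mu\left(1-\lambda_{\max}\left[\mathbb{E}\left[I-H^t_B\right]\right]-3L\tilde{\eta}\right)\right)\delta^t+D.
\end{align*}
Unrolling for $T$ steps and bounding the geometric sum $\sum_{k=0}^{T-1}(1-c)^k\leq 1/c$ with $c:=\tilde{\eta}\mu(1-\lambda_{\max}[\mathbb{E}[I-H^t_B]]-3L\tilde{\eta})$ gives precisely the stated $(1-c)^T$ decay of the initial gap plus the advertised $D/c$ steady-state term. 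Here I would use the identity $1-\lambda_{\max}[\mathbb{E}[I-H^t_B]]=\lambda_{\min}[\mathbb{E}[H^t_B]]=\lambda_{\min}^{H_B}$, valid because $H^t_B$ and $I-H^t_B$ are complementary projections, so Assumption~\ref{asm:lambda} makes the base quantity strictly positive.

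The main obstacle is bookkeeping rather than any conceptual difficulty: I must track exactly which squared-gradient pieces survive and fuse into the single coefficient $3L\tilde{\eta}$, and then verify that the positive function-gap multiplier produced by the drift terms (the factor $1+4\tilde{\eta}L^3\gamma^2N^2+2L^2\tilde{\eta}^2\frac{M-C}{C\max\{M-1,1\}}$ in Theorem~\ref{thm:Fed-gen}) is dominated by the PL-induced contraction so that $c\in(0,1)$. The two coupled stepsize conditions $\gamma N\leq\tilde{\eta}\leq\frac{1-\lambda_{\min}^{H_B}}{4L}$ exist precisely to make this absorption legitimate and to keep the $-3L\tilde{\eta}$ correction from overwhelming $\lambda_{\min}^{H_B}$; getting these constants to align, and confirming the sign of the net coefficient, is the only delicate step.
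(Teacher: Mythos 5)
Your high-level plan --- recycle the one-step descent inequality from the proof of Theorem~\ref{thm:Fed-gen}, invoke the PL condition, and unroll a geometric recursion --- is exactly the paper's route. The genuine gap is in the step you yourself flag as ``the only delicate step'': the claim that the residues left over from Lemma~5 (cohort sampling) and Lemma~6 (client drift) of \citet{malinovsky2023server} are squared-gradient terms that can be fused into the coefficient $3L\tilde{\eta}$ using only $\gamma N\leq\tilde{\eta}$. They are not squared-gradient terms: both lemmas leave contributions proportional to the \emph{function gap}, namely $4\tilde{\eta}L^3\gamma^2N^2\left(f(W^t)-f^\star\right)$ from the drift bound and $2L^2\tilde{\eta}^2\frac{M-C}{C\max\{M-1,1\}}\left(f(W^t)-f^\star\right)$ from partial participation. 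To fold these into the $\|\nabla f(W^t)\|^2$ coefficient \emph{before} applying PL, as your claimed one-step inequality requires, you need the reverse PL inequality $f(W^t)-f^\star\leq\frac{1}{2\mu}\|\nabla f(W^t)\|^2$, which costs a factor $\frac{1}{2\mu}$. The available budget --- the difference between $2L\tilde{\eta}$ and $3L\tilde{\eta}$, i.e.\ $\frac{L\tilde{\eta}^2}{2}\|\nabla f(W^t)\|^2$ --- then forces $\mu$-dependent restrictions: roughly $\tilde{\eta}\lesssim\mu/L^2$ for the drift piece and $\frac{M-C}{C\max\{M-1,1\}}\leq\frac{\mu}{2L}$ for the sampling piece. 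The latter is a constraint on cohort size that fails whenever $\mu\ll L$ and $C$ is small, and neither follows from the stated, $\mu$-free conditions $\gamma N\leq\tilde{\eta}\leq\frac{1-\lambda_{\min}^{H_B}}{4L}$. So your clean recursion $\delta^{t+1}\leq(1-c)\delta^t+D$ with a constant floor $D$ cannot be established under the theorem's hypotheses.

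This is also exactly where your output diverges from the statement to be proved. The paper applies PL \emph{first}, then absorbs only the drift function-gap term into the resulting contraction (that is how $2L\tilde{\eta}$ becomes $3L\tilde{\eta}$, via the condition $\gamma\leq\frac{1}{4NL}$), and it never absorbs the partial-participation term: that term is carried along unabsorbed, which is precisely why the theorem's last line still contains $2L\left(f(W^t)-f^\star\right)$ on the right-hand side. Your proposal, by absorbing it, claims a strictly cleaner bound than the stated one --- a bound that is not provable under the stated assumptions. The fix is to keep the partial-participation function-gap term in the floor, as the paper does; with that change, your PL-then-unroll endgame, including the geometric-sum bound $\sum_{k}(1-c)^k\leq 1/c$, coincides with the paper's proof.
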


\begin{proof}
    We start from 
    \begin{align*}
                  \mathbb{E}\left[f(W^{t+1})\mid W^t\right] \leq & f(W^t) - \frac{\tilde{\eta}}{2}\left\| \nabla f(W^t) \right\|^2 \left( 1 - \lambda_{\max}\left[ \mathbb{E}\left[ I-H^t \right] \right] - 2L\tilde{\eta}\right)  \\
              &+ \tilde{\eta} L^2 \left(4 \gamma^2 N^2 L\left(f\left(W^t\right)-f^*\right)+2 \gamma^2 N^2 L \Delta^*+2 \gamma^2 N L \frac{1}{M} \sum_{m=1}^M \Delta^{*}_m\right)\\
              & +L \tilde{\eta}^2 \frac{M-C}{C \max \{M-1,1\}}\left(2 L\left(f\left(W^t\right)-f^\star\right)+2 L \Delta^*\right).
\end{align*}
Using Assumption\ref{asm:PL} we have 
    \begin{align*}
                  \mathbb{E}\left[f(W^{t+1})\mid W^t\right] \leq & f(W^t) - \tilde{\eta} \mu\left\| \nabla f(W^t) \right\|^2 \left( 1 - \lambda_{\max}\left[ \mathbb{E}\left[ I-H^t \right] \right] - 2L\tilde{\eta}\right) \left( f(W^t) - f^\star \right)  \\
              &+ \tilde{\eta} L^2 \left(4 \gamma^2 n^2 L\left(f\left(W^t\right)-f^*\right)+2 \gamma^2 N^2 L \Delta^*+2 \gamma^2 N L \frac{1}{M} \sum_{m=1}^M \Delta^{*}_m\right)\\
              & +L \tilde{\eta}^2 \frac{M-C}{C \max \{M-1,1\}}\left(2 L\left(f\left(W^t\right)-f^\star\right)+2 L \Delta^*\right).
\end{align*}
Using the stepsize $\gamma \leq \frac{1}{4nL}$ we have 

    \begin{align*}
                  \mathbb{E}\left[f(W^{t+1})\mid W^t\right] \leq & f(W^t) - \tilde{\eta} \mu \left( 1 - \lambda_{\max}\left[ \mathbb{E}\left[ I-H^t \right] \right] - 3L\tilde{\eta}\right) \left( f(W^t) - f^\star \right)  \\
              &+ \tilde{\eta} L^2 \left(2 \gamma^2 N^2 L \Delta^*+2 \gamma^2 N L \frac{1}{M} \sum_{m=1}^M \Delta^{*}_m\right)\\
              & +L \tilde{\eta}^2 \frac{M-C}{C \max \{M-1,1\}}\left(2 L\left(f\left(W^t\right)-f^\star\right)+2 L \Delta^*\right).
\end{align*}
After unrolling the recursion we obtain
    \begin{align*}
                &  \mathbb{E}\left[f(W^{t+1}) - f^\star\right] \leq  (f(W^0) - f^\star)\left(1 - \tilde{\eta} \mu \left( 1 - \lambda_{\max}\left[ \mathbb{E}\left[ I-H^t \right] \right] - 3L\tilde{\eta}\right) \right)^T   \\
              &+ \frac{\tilde{\eta} L^2}{\tilde{\eta} \mu \left( 1 - \lambda_{\max}\left[ \mathbb{E}\left[ I-H^t \right] \right] - 3L\tilde{\eta}\right)} \left(2 \gamma^2 N^2 L \Delta^*+2 \gamma^2 N L \frac{1}{M} \sum_{m=1}^M \Delta^{*}_m\right)\\
              & + \frac{L \tilde{\eta}^2}{\tilde{\eta} \mu \left( 1 - \lambda_{\max}\left[ \mathbb{E}\left[ I-H^t \right] \right] - 3L\tilde{\eta}\right)} \frac{M-C}{C \max \{M-1,1\}}\left(2 L\left(f\left(W^t\right)-f^\star\right)+2 L \Delta^*\right).
\end{align*}

\end{proof}

\end{document}